\newcommand{\bbracket}[1]{\ensuremath{\left({#1}\right)}}
\newcommand{\bigbracket}[1]{\ensuremath{\big({#1}\big)}}
\newcommand{\Bigbracket}[1]{\ensuremath{\Big({#1}\Big)}}
\newcommand{\Bigbar}[1]{\ensuremath{\Big|{#1}\Big|}}
\newcommand{\ham}{\mathrm{H}}
\newcommand*\diff{\mathop{}\!\mathrm{d}}
\newcommand{\given}{\, | \,}
\newcommand{\biggiven}{\, \big{|} \,}
\newcommand{\Biggiven}{\, \Big{|} \,}
\newcommand{\VC}{\textnormal{VC}}
\newcommand{\ndim}{\textnormal{Ndim}}
\newcommand{\argmax}{\textnormal{argmax}}
\newcommand{\argmin}{\textnormal{argmin}}
\newcommand{\bmsigma}{{\bm \sigma}}
\newcommand{\iid}{\textnormal{\textbf{i.i.d.}}}
\DeclareMathOperator{\F}{\mathcal{F}}
\DeclareMathOperator{\M}{\mathbb{M}}
\DeclareMathOperator{\bR}{\mathbb{R}}
\DeclareMathOperator{\z}{\mathbf{z}}
\DeclareMathOperator{\w}{\mathbf{w}}
\DeclareMathOperator{\calW}{\mathcal{W}}
\DeclareMathOperator{\calX}{\mathcal{X}}
\DeclareMathOperator{\calH}{\mathcal{H}}
\DeclareMathOperator{\calZ}{\mathcal{Z}}
\DeclareMathOperator{\calB}{\mathcal{B}}
\DeclareMathOperator{\bE}{\mathbb{E}}
\DeclareMathOperator{\p}{\mathbb{P}}
\DeclareMathOperator{\one}{\mathbf{1}}
\newcommand{\hgamma}{\widehat{\Gamma}}
\newcommand{\chg}{C_T({h,g})}
\DeclareMathOperator{\bx}{\mathbf{x}}
\DeclareMathOperator{\PP}{\mathbb{P}}
\newcommand{\EE}{\mathbb{E}}
\providecommand{\keywords}[1]
{
  \small	
  \textbf{\textit{Keywords---}} #1
}
\newtheorem{assumption}{Assumption}
\newtheorem{definition}{Definition}
\newtheorem{theorem}{Theorem}
\newtheorem{lemma}{Lemma}
\newtheorem{corollary}{Corollary}[theorem]
\newtheorem{proposition}{Proposition}
\newtheorem{remark}{Remark}
\title{Policy Learning with Adaptively Collected Data}
\newcommand{\printfnsymbol}[1]{%
  \textsuperscript{\@fnsymbol{#1}}%
}
\author[1]{Ruohan Zhan\thanks{Authors contributed equally.}}
\author[2]{Zhimei Ren\printfnsymbol{1}}
\author[1]{Susan Athey}
\author[3]{Zhengyuan Zhou}
\affil[1]{Graduate School of Business, Stanford University}
\affil[2]{Department of Statistics, University of Chicago}
\affil[3]{Stern School of Business, New York University}
\date{First Version: May, 2021;\\This Version: November, 2022}
\begin{document}


\maketitle

\begin{abstract}
    Learning optimal policies from historical data enables personalization in a wide variety of applications including healthcare, digital recommendations, and online education. The growing policy learning literature focuses on settings where the data collection rule stays fixed throughout the experiment. However, adaptive data collection is becoming more common in practice, from two primary sources: 1) data collected from adaptive experiments that are designed to improve inferential efficiency; 2) data collected from production systems that progressively evolve an operational policy to improve performance over time (e.g. contextual bandits). Yet adaptivity complicates the optimal policy identification ex post, since samples are dependent, and each treatment may not receive enough observations for each type of individual. In this paper, we make initial research inquiries into addressing the challenges of learning the optimal policy with adaptively collected data. \textcolor{black}{We propose an algorithm based on generalized augmented inverse propensity weighted (AIPW) estimators, which non-uniformly reweight the elements of a standard AIPW estimator to control worst-case estimation variance.} We establish a finite-sample regret upper bound for our algorithm and complement it with a regret lower bound that quantifies the fundamental difficulty of policy learning with adaptive data. \textcolor{black}{When equipped with the best weighting scheme, our algorithm achieves minimax rate optimal regret guarantees even with diminishing exploration.}
Finally, we demonstrate our algorithm's effectiveness using both synthetic data and public benchmark datasets.
\end{abstract}

\keywords{offline policy learning; adaptive data collection; \textcolor{black}{minimax optimality}; personalized decision making; contextual bandits}

\section{Introduction}
The growing availability of user-specific data has welcomed the exciting era of personalized decision making, a paradigm that exploits the heterogeneity in a given population so as to provide individualized service decisions that lead to improved outcomes. This paradigm has found applications in a wide variety of operations management domains. For instance, in healthcare, using electronic medical records, doctors can better prescribe heterogeneous treatments---different types of drugs/therapies or different dosage levels of the same drug---to different patients based on their medical characteristics~\citep{murphy2003optimal,kim2011battle,bertsimas2017personalized,fukuoka2018objectively}. In advertising~\citep{charles2013counterfactual,kallus2016dynamic,schnabel2016recommendations,farias2019learning,bastani2021predicting}, using the recorded clientele information, the retailer can send more targeted product promotions---either in mail or online---to different existing and potential customers. In news recommendation~\citep{li2010contextual,li2011unbiased,zeng2016online,karimi2018news,schnabel2019shaping,lee2020news,schnabel2020impact}, the content provider may stream different news articles and/or media content to users with different digital footprints and perceived interests. In online education~\citep{mandel2014offline,lan2016contextual,hoiles2016bounded,bassen2020reinforcement}, an institution may want to offer different education plans to different students based on their varied learning styles (visual learner v.s. aural learner v.s. verbal learner, etc.).

A key problem in achieving intelligent personalization through data lies in learning an effective policy (which maps individual characteristics to treatments/actions) in a sample-efficient manner (i.e., making the fullest use of a given dataset so as to learn---to the extent possible---a policy that yields the highest rewards and hence leading to the best outcome for each individual). Particular challenges arise for off-policy evaluation due to missing counterfactual outcomes.  Researchers from a variety of fields---including operations research, statistics and machine learning---have devoted extensive efforts to this problem in recent years~\citep{dudik2011doubly,zhang2012estimating,zhao2015doubly,swaminathan2015batch,swaminathan2015counterfactual,swaminathan2015self,swaminathan2016off,kitagawa2018should,levine2020offline,kallus2018confounding,zhou2022offline,joachims2018deep,chernozhukov2019semi,su2019cab,bennett2020efficient,sachdeva2020off,jin2020pessimism,athey2021policy} and satisfactorily addressed (discussed in more detail in Section~\ref{subsec:related}) 
various aspects of the policy learning problem when the underlying historical data has been collected with a \textit{fixed} exploration policy, which results in independent and identically distributed (\textbf{i.i.d.}) data over time. This is an important setting that includes several data-collection mechanisms: A/B testing, randomized control trials, and deploying a fixed operational policy that has built-in randomization.

However, much less is known on this problem when data is collected \textit{adaptively}, that is, where the policy used to select actions evolves over time in response to observed outcomes rather than stays fixed. The following two broad categories of adaptive data collection are common in practice: 
\begin{enumerate}
\item \textbf{Data from Adaptive Experiments.}
Such data are collected from experiments for statistical inference and/or hypothesis testing. Since experiments are costly to conduct, a fixed experiment design (hence a fixed randomization rule) is not efficient. In comparison, adaptive experiment designs can dramatically improve statistical efficiency and are often used instead~\citep{armitage1960sequential,simon1977adaptive,murphy2005experimental,collins2007multiphase,cai2011manifold,offer2019adaptive}. Experimental data initially collected to answer particular inferential questions can be used for policy learning that falls out of its original design.  

\item \textbf{Data from Operations Using Bandit Algorithms.}
Such data is generated and collected from an \textit{operational} policy in production systems. Production systems often adaptively choose their operational policies to improve the performance over time.
A common family of algorithms are bandit algorithms (particularly contextual bandit algorithms)~\citep{lai1985asymptotically,thompson1933likelihood,agrawal2013thompson,russo2017tutorial}, 
where treatment assignments are selected to balance exploration and exploitation to maximize the cumulative performance over time, thereby performing better compared to deploying a fixed policy.


\end{enumerate}

Policy learning using adaptively collected data is much more difficult, since there are complex intertemporal dependencies. To see this, suppose we have observational data $\{(X_t, W_t, Y_t)\}_{t=1}^T$ collected sequentially, where $X_t\stackrel{\textnormal{\textbf{i.i.d.}}}{\sim}P_X$ is the context,
$W_t\in\calW=\{1,\dots, K\}$ is the selected action, and $Y_t=\mu(X_t;W_t)+\epsilon_t$ is the outcome, with $\{\epsilon_t\}_{t=1}^T$ being \textbf{i.i.d.} zero-mean random variables. Importantly, the samples $\{(X_t, W_t, Y_t)\}_{t=1}^T$ are \emph{not} \textbf{i.i.d.} since $W_t$ is sampled according to probabilities $\big(e_t(X_t; 1), \dots, e_t(X_t; K)\big)$---also known  as propensity 
scores---that are time-varying and dependent on past observations $\{(X_s, W_s, Y_s)\}_{s=1}^{t-1}$. 
Note that $e_t$ is the (randomized) policy used at $t$ while data was being collected. 
In practice, these $e_t$ functions may be derived from complicated
functions with a large number of parameters (e.g., neural networks) and are quickly and constantly updated 
(e.g., ads serving engines), hence making it cumbersome to record those policies in their entirety.
Consequently, in reflecting this constraint, we assume that only $e_t(X_t, W_t)$---the probability of sampling the chosen action $W_t$ at $t$---is recorded for each $t$, but not the entire function $e_t(\cdot\,;\, \cdot)$\footnote{Nevertheless, in our proposal we require the knowledge of $e_t(X_t, W_t)$ instead of estimating it from the data, which may be difficult in adaptive experiments since $e_t$ can be time-varying.}.

With a dataset of the form described above, the goal of policy learning is to select a good policy from a given policy class. A policy $\pi$---mapping from contexts to actions---can be evaluated by its policy value $Q(\pi)=\mathbb{E} \big[\mu(X, \pi(X))\big]$, where the expectation is taken with respect to the randomness in $X$ over the target population. To perform effective policy learning, one needs to select a policy with  as large value as possible, or equivalently, as small regret as possible, where regret is defined to be the policy value loss relative to  the best value in the policy class. An effective policy learning algorithm should achieve a small regret as a function of a given finite $T$,  the quantitative measure of the algorithm's sample complexity. However, a moment of thought reveals that this is a challenging desideratum for the following three reasons.

First, unlike in fixed policy data collection where the propensities $e(\cdot \, ;\, \cdot)$ do not change over time (corresponding to a constant exploration bandwidth), a distinct feature in adaptive collection (when carried out by popular algorithms used in practice) is that $e_t(\cdot \,;\, \cdot)$ shrinks and goes to $0$ over time for certain actions (and contexts). This means that exploration is gradually reduced, thereby resulting in vanishing probabilities of selecting certain (poorly performing) actions. This crucial benefit of adaptive experiments---that poor arms can be dropped, while good arms are more likely to be pulled---creates difficulty for offline policy learning, a purely exploitation task. This is because the data not only have selection bias (as is already present in the fixed policy setting), but also become \textit{increasingly} more so overtime, thereby producing skewed data that makes it difficult to compare the quality (i.e., the outcome) of alternative actions under a given context.  

Second, although one might be tempted to think that smaller propensities (particularly towards the end of data-collection) indicate that certain actions are ``clearly bad" as a result of their vanishing probabilities being selected, it is important to keep in mind that the policy learner does not have access to the propensity functions $e_t(\cdot\,;\, \cdot)$ and hence cannot perform this ``action elimination" type of policy learning.
Consequently, when observing that $e_T(X_T; W_T)$ is small on the last timestep $T$ and---assuming that we are confident that the adaptive data collection process is well-designed and can hence conclude  that the action $W_T$ is indeed the wrong action for $X_T$---we would still not be able to know what other actions are wrong for $X_T$, nor what other contexts are bad for $W_T$. Further, since we only observe a single data point $e_t(X_t, W_t)$ for the propensity function $e_t(\cdot\,;\,\cdot)$, there is no hope to learn these evolving $e_t(\cdot \,;\, \cdot)$'s, a clear distinction from the fixed policy setting where one can learn the fixed propensity function $e(\cdot \,;\, \cdot)$ using the entire training dataset that it has generated.

Third, an adaptive data collection mechanism may itself be ``poorly" designed and hence result in wrong propensities: $e_t(x,w)$ is small when $w$ is in fact the best action for context $x$. This ``poor" design can arise for different reasons; for instance, it could be that an ineffective adaptive exploration scheme is used, \textcolor{black}{resulting in over-exploiting certain actions and under-exploring others}. Since contextual bandits rely on learning a complex outcome surface $\mu(\cdot,w)$ for each action $w$, most commonly used algorithms rely on specifying and estimating a parametric model for $\mu$, making misspecification a real possibility. Alternatively, it could be that the data were collected to answer a specific inferential
question and hence the data-collection process was steered towards a particular direction.
Regardless of the cause, when this occurs, the policy learner will see very few samples on the good actions but many samples on the bad actions, in effect reducing the overall useful samples and yielding larger uncertainty about which action is actually good for which context. 
This is a risk that the policy learner should address, because it has no control over what the data-collector does in collecting the data.\footnote{If the policy learner also plays the role of collecting data, it should instead focus on adaptive experimental design or online adaptive learning, rather than offline policy learning, although there are scenarios where the data-collector has multiple objectives and thus does not optimize for policy learning during the experiment.} Hence,  policy learning methods should be robust to a wide class of adaptive data-collection mechanisms, good or bad from the policy learner's own perspective.     

Situated in this challenging and under-explored landscape, we aim to make initial progress into confronting these challenges and focus on developing finite-sample regret bounds that shed light on the design and implementation of efficient policy learning using adaptively collected data.


\subsection{Our Contributions and Related Work}
Our contributions are twofold.
First, we study the fundamental difficulty of this problem by characterizing a lower bound for policy learning. In particular, let $\{g_t\}_{t=1}^T$ be any positive lower-bound sequence of propensities $e_t(\cdot \,;\, \cdot)$ (i.e., $e_t(\cdot\,;\,\cdot)$  is lower bounded by $g_t$ for each $t$).\footnote{The assumption that $g_t > 0$ ensures that each action is sampled with positive probability regardless of the contexts.} Then, \textcolor{black}{for a worst-case distribution satisfying the propensity lower bound $\{g_t\}_{t=1}^T$}, any policy learning algorithm will incur at least an expected regret of  $\Omega\big(\sqrt{{\ndim(\Pi)}/{\sum_{t=1}^T g_t}}\big)$, where $\ndim(\Pi)$ refers to the Natarajan dimension of the (multi-action) policy class $\Pi$---this is a generalization of the VC-dimension for the multi-action policy classes, and in particular $\VC(\Pi) = \ndim(\Pi)$ in settings of binary actions (i.e. only two actions are available).
Consider an example where $g_t$ decays at a rate with $g_t = t^{-\alpha}$ and $\alpha \in [0,1)$, then the worst-case expected regret is lower bounded by $\Omega(\sqrt{\ndim(\Pi)}\cdot T^{(\alpha - 1)/2})$.
This regret lower bound highlights a necessary boundary for the feasibility of
policy learning with adaptively collected data: if the adaptive data collection process is overly aggressive in exploitation, leading to an exploration rate that decreases faster than $\Theta(t^{-1})$, then the regret will be $\Omega(1)$---there is thus no hope of ever learning a near-optimal policy in the worst cases, no matter how large $T$ is. On the other hand, when $\alpha = 0$,
our regret bound recovers the $\Omega(T^{-{1}/{2}})$ lower bound for the non-adaptive setting where the data is collected by a fixed policy~\citep{kitagawa2018should}.

Second, building on the recent adaptive inference literature, we propose a policy learning algorithm and \textcolor{black}{establish its
expected regret bound, which is minimax optimal when the assignment probability lower bound $g_t$ is known to the policy learner.}
Our algorithm follows a two-step procedure: 1) construct a policy value estimator $\widehat{Q}(\pi)$ for any fixed $\pi\in\Pi$ using the collected data; 
2) output the policy in $\Pi$ that  maximizes the estimated value: $\widehat{\pi}=\argmax_{\pi\in\Pi}\widehat{Q}(\pi)$. 
The specific estimator we use is a variant of the family of generalized augmented
inverse propensity weights (AIPW) estimators considered 
in~\cite{luedtke2016statistical, hadad2021confidence,zhan2021off}, which takes the following form:
\begin{align*}
    & \widehat{Q}_T(\pi) = \frac{\sum_{t=1}^T h_t\widehat{\Gamma}_t(\pi)}{\sum_{t=1}^T h_t},
    \mbox{ where }\widehat{\Gamma}_t(\pi) = \widehat{\mu}_t\big(X_t; \pi(X_t)\big)+ 
    \frac{\one \big\{W_t=\pi(X_t)\big\}}{e_t\big(X_t;\pi(X_t)\big)}\Big(Y_t-\widehat{\mu}_t\big(X_t;\pi(X_t)\big)\Big).
\end{align*}
Above,  $\widehat{\mu}_t$  is the nuisance estimator of the expected outcome,
$\widehat{\Gamma}_t(\pi)$ is the \emph{AIPW score} \citep{robins1994estimation},  and $h_t$ is the pre-specified deterministic weight that remains the same for all $\pi\in\Pi$. 
The purpose of $h_t$ is to offset the (potentially) large  worst-case variance caused by vanishing assignment probabilities $e_t(X_t; \pi(X_t))$. Depending on whether $g_t$ is known or not, $h_t$ would be chosen differently, which results in different regret bounds (to be elaborated shortly in  subsequent paragraphs). These specific choices of $h_t$ are simple and different from those  variants adopted in~\cite{luedtke2016statistical, hadad2021confidence,zhan2021off}, which are concerned with devising $h_t$ to evaluate a specific target policy and achieve asymptotic normality for inference, while instead here we aim for controlling worst-case estimation variance and constructing finite-sample regret bounds. 

\textcolor{black}{With the weights $\{h_t\}_{t=1}^T$ plugged into the generalized AIPW estimator, we show that our algorithm has  
an expected regret upper bound of $\tilde{O}\big(\kappa(\Pi)\cdot \frac{\sqrt{\sum_{t=1}^Th_t^2/g_t} }{\sum_{t=1}^T h_t}
+ \frac{\sum^T_{t=1}h_t^4/g_t^3}{(\sum^T_{t=1}h_t^2/g_t)^2}\big)$~\footnote{We use $\tilde{O}(\cdot)$ to show rates after omitting logarithm factors.}, and $\kappa(\Pi)$ is the entropy integral of a policy class $\Pi$ based on the Hamming distance.
Since the data are no longer i.i.d., existing techniques in most policy learning literature no longer apply. 
Our analysis instead leverages the framework of sequential uniform concentration~\citep{rakhlin2015sequential}. 
When $\widehat{\mu}_t$ is fitted with observations up to time $t-1$, the AIPW score $\widehat{\Gamma}_t$
is unbiased conditional on the past observations, so we can write $\widehat{Q}_T(\pi) - Q(\pi)$ as the
sum of a martingale difference sequence. The supremum of the sum of this martingale difference sequence
cannot be bounded with the standard notion of the Rademader complexity used in existing policy learning literature \citep[e.g.,][]{kallus2018balanced,zhou2022offline}. 
We instead consider an analog of the Rademacher process in the adaptive data setting---the tree Radamecher process---and connect the martingale difference sequence with a tree Radamechar process, thereby developing a bound for $\max_{\pi\in\Pi}|\widehat{Q}(\pi)-Q(\pi)|$. However the uniform concentration results in \cite{rakhlin2015sequential} are not directly applicable here, since here martingale difference terms are not bounded (as required in~\citet{rakhlin2015sequential}) with risk that $\widehat{\Gamma}_t$  may diverge due to vanishing propensities. To address this issue, we introduce a high probability event
on which the quadratic variation of the martingale difference sequence is regularized; then we
   bound the supremum of the tree Radamechar process via a covering of the policy class on the event, by refining and sharpening a chaining technique in \cite{zhou2022offline}.
}

\textcolor{black}{The choice of weight $h_t$ largely decides the regret bound provided by our algorithm.  
We show that the optimal weight $h^*_t$ 
is  proportional to the assignment probability lower bound $g_t$,  
which yields the expected regret bound $\tilde{O}(\kappa(\Pi)/\sqrt{\sum_{t=1}^T g_t})$.
Note  that 
$\kappa(\Pi) = O\big(\sqrt{\log{p}\cdot\ndim(\Pi)}\big)$, where $p$ denotes 
the dimension of the context; in the binary-action case, $\kappa(\Pi) \le 2.5 \sqrt{\VC(\Pi)}$~\citep{jin2022upper}.
This implies that our upper bound with optimal weight $h^*_t$ is tight with respect to (abbreviated as w.r.t. hereafter) the sample size and the complexity of the policy class up to logarithmic factors, and is thus minimax optimal.
In cases where  $g_t$ is not disclosed to the policy learner, we recommend using uniform weights, that is, estimating policy values with standard AIPW estimator. This choice yields an expected regret bound $\tilde{O}\big(\kappa(\Pi) \cdot \frac{\sqrt{\sum_{t=1}^T g_t^{-1}}}{T} + \frac{\sum^T_{t=1} 1/g_t^3}{(\sum^T_{t=1} 1/g_t)^2}\big)$ 
that is in general looser than the minimax regret   but performs reasonably well in many cases.  In particular,  for the same example above in which $g_t=t^{-\alpha}$, using uniform weighting also achieves the minimax optimal regret bound $\tilde{O}(\kappa(\Pi)\cdot T^{(\alpha-1)/2})$, which---by setting  $\alpha$ to 0---recovers the minimax optimal regret bound for policy learning under i.i.d. data collection established in \cite{zhou2022offline}.}

\textcolor{black}{After we posted a version of this paper online, \cite{bibaut2021risk} made remarkable progress on this policy learning problem with adaptive data. They show that their algorithm (which amounts to a variant of our algorithm that by using uniform weighting $h_t=1$ and setting nuisance component $\hat{\mu}_t=0$)  meets our established lower bound in settings where $g_t=t^{-\alpha}$ (though their algorithm does not guarantee minimax optimality beyond those special cases). Encouraged by this positive result, we tighten the upper bound of our algorithm and show that it achieves minimax optimal regret guarantee  with optimal weighting $h_t^*=g_t$  in  general cases (that is, even when $g_t$ is not of the form $t^{-\alpha}$). 
Besides, the nuisance component $\widehat{\mu}_t$ in our algorithm (which is set to zero in \cite{bibaut2021risk}) would reduce variance  in  policy value estimation even with misspecification, which in turn improves  the value of learned policy. We defer detailed empirical comparison to   Appendix \ref{appendix:compare_with_erm}.}

Finally, leveraging a publicly available piece of software, we consider the policy class of (fixed-depth) decision trees and evaluate the algorithm on both synthetic data and public benchmark datasets. The empirical results show two important strengths of our policy learning algorithm.
First, when $g_t$ is unknown, our algorithm narrowly trails and has the same regret decay rate as the setting where $g_t$ is known (and hence the optimal weights $h_t^*$ can be computed exactly). This suggests that although it would be helpful to know $g_t$ and hence leverage it to achieve even better performance, our algorithm would still be functional even when such information is not available. Second, since online learning algorithms (e.g., Thompson sampling) often directly use outcome regression, they are prone to model misspecification, where they allocate small propensities to good actions, incurring large performance gap as the process goes on. Despite this, our offline learning algorithm can still find the ($\epsilon$-)optimal policy, demonstrating its effectiveness and robustness.

\subsection{Other Related Work}\label{subsec:related}

Policy learning with observational data is a growing field that has received increasing attention from different communities. As already mentioned, the existing literature in offline policy learning has primarily focused on data collected by a fixed policy, where they collectively proposed several statistical efficient and/or computationally efficient policy learning algorithms that achieve the minimax optimal regret bounds of $\Theta(T^{-{1}/{2}})$ on the expected regret. A sequence of refinements~\citep{zhang2012estimating,zhao2015doubly,kitagawa2018should,athey2021policy}  addressed this challenge in the settings of binary actions; and particularly   \cite{kitagawa2018should} established the tight regret bound with the knowledge of propensities, which was relaxed later in  \cite{athey2021policy} that established the optimal dependency with estimated propensities.  Extension to multi-action schemes has been successively investigated by \cite{swaminathan2015batch,zhou2017residual,kallus2018balanced,zhou2022offline};  \cite{zhou2022offline} established the minimax regret bound  using doubly robust AIPW estimator  when the propensities are unknown. An important distinction when using AIPW estimator on adaptive data is that one often assumes the knowledge of propensities~\citep{luedtke2016statistical,hadad2021confidence,zhan2021off}, as is the case in this paper, since they are typically time-varying and are difficult to approximate with limited batch size.

Another strongly related area is offline policy evaluation with adaptively collected data. Estimating policy values on adaptively collected data is much more challenging compared to that on \textbf{i.i.d.} data. For instance, direct methods that fit regression models will be biased~\citep{nie2018adaptively,shin2019sample}, while unbiased estimators such as inverse propensity weighted (IPW) estimator 
can suffer from huge variability~\citep{horvitz1952generalization,imbens2004nonparametric}. 
Note that IPW estimators already suffer from large variance with \textbf{i.i.d.} data (where the propensity has a fixed lower bound that does not change), and the problem becomes more acute in the adaptive data collection setting because the propensities are vanishing.
The AIPW estimator ~\citep{robins1994estimation,dudik2011doubly} combines 
the outcome modelling and IPW approaches, gaining efficiency and ``double robustness'' properties with \textbf{i.i.d.} data.
In particular, the AIPW estimator is consistent if either the propensity model or the outcome model is consistently estimated. However, AIPW still has challenges when the data is adaptively collected, since vanishing propensities will yield exploding variance and de-stablize the estimator.
To deal with this, the literature has seen two approaches for adapting AIPW estimators to offline policy evaluation with adaptively collected data. The first incorporates weight clipping into AIPW \citep{bembom2008data, charles2013counterfactual,wang2017optimal,su2020doubly}, where one controls variance by shrinking the weights at the cost of introducing a small bias. 
The second approach, described above, is to locally stabilize the elements of the AIPW estimator~\citep{luedtke2016statistical, hadad2021confidence,zhan2021off}. Our policy learning algorithm uses an estimator that falls into this second approach, where the weights $h_t$  are chosen with the consideration of the worst-case variance in order to be robust.

Finally, there is also an extensive literature on online contextual bandits~\citep{dani2008stochastic,besbes2009dynamic,rigollet2010nonparametric,abbasi2011improved,chu2011contextual,bubeck2012regret,abbasi2013online,agrawal2013thompson,goldenshluger2013linear,russo2014learning,li2017provably,dimakopoulou2017estimation,bastani2020online}, the online counterpart of policy learning. The literature focuses on analyzing bounds on the regret experienced by alternative algorithms. This problem is distinct from ours and from offline policy learning in general; in the online learning literature, the focus is on the adaptive algorithms and how to assign treatment to units in order to  balance the exploration-exploitation trade-off. We close by pointing out that no ``online-to-batch" conversion is feasible here: online learning algorithms cannot be directly converted to offline learning algorithms because the offline policy learner does not have the opportunity to choose a particular action. The counterfactual outcomes from alternative actions for a particular unit are unobserved.

\section{Offline Policy Learning}
Let $T$ be the time horizon, $\calX$ the covariate space, and $\calW$ the action space with $K$ actions. 
At time $t \in [T] \stackrel{\Delta}{=}\{1,\ldots, T\}$, the experimenter
observes a covariate $X_t \in \calX$ with $X_t\stackrel{\iid}{\sim}P_X$; then the experimenter  takes an action $W_t \in \calW$ sampled from a 
multinomial distribution with probability $\big(e_t(X_t; 1), \dots, e_t(X_t; K)\big)$; next 
she  receives a response $Y_t$ from the chosen action generated via:
\begin{align*}
    Y_t = \mu(X_t;W_t) + \varepsilon_t,
\end{align*}
where $\{\varepsilon_t\}_{t=1}^T$ is a sequence of \iid~zero-mean and $\sigma^2$-variance random variables. 
We assume that the assignment
probability $e_t(x,w)$ is updated for all $(x,w) \in \calX \times \calW$ using observations up to time $t-1$.
We use $\calH_{t}=\{(X_s, W_s, Y_s)\}_{s=1}^{t}$ to denote
samples up to time $t$. Throughout, we make the following assumptions on the data generating process.
\begin{assumption}
\label{assumption:dgp}
The data generating process has: 
\begin{enumerate}[label = (\alph*)]
    \item Bounded outcome: $\exists~M\ge 0$ such that $|Y_t| \le M$ and $|\mu(x;w)|\le M$ for all $(x,w)\in \calX\times \calW$.
    \item Known and bounded assignment probability: $e_t(X_t;W_t)$ is known; and 
    $e_t(x;w)$ is lower bounded \textcolor{black}{with probability one, over the sampling of $\{(X_s, W_s, Y_s)\}_{s=1}^{t-1}$,} by a positive and nonincreasing deterministic sequence $\{g_t\}$ for all $(x,w)\in \calX\times \calW$ and all
    $t \in [T]$.
\end{enumerate}
\end{assumption}
Assumption~\ref{assumption:dgp}(b) requires the experimenters to record data $(X_t, W_t, Y_t, e_t(X_t;W_t))$ at each time instead of the standard $(X_t, W_t, Y_t)$, which is feasible in many applications since  practitioners often have access to the algorithm
used to collect data, and hence have the knowledge of the assignment probability. 
\textcolor{black}{The requirement of a lower bound on  assignment probabilities for \emph{all} arms is a bit more restrictive---for instance standard Thompson
sampling may not fall into this category since the sampling probability on a clearly sub-optimal arm may decay to zero too fast as $T$ goes to infinity---but
we note that this is already a generalization of the previous settings with \textbf{i.i.d.} data.} In parallel, previous work on offline policy learning with \textbf{i.i.d.} data
(e.g.,~\cite{athey2021policy,zhou2022offline})  often assumes the ``overlap'' condition: the assignment probability $e_t(x;w)$ is 
lower bounded by a positive constant for all $(x,w)\in\calX \times \calW$ and all $t\in[T]$. This is in fact a special case of our 
condition by setting $g_{t}$ to be a positive constant.


\subsection{Offline Policies and Metrics}
\textcolor{black}{A policy $\pi$ is a mapping from the covariate
space to the action space, i.e., $\pi: \calX \mapsto \calW$.}
Given a policy $\pi$, we are interested in its \emph{policy value} defined as the expected reward incurred by taking actions according to $\pi$:
\begin{align*}
  Q(\pi) \stackrel{\Delta}{=}\bE_{X} \Big[\mu\big(X;\pi(X)\big)\Big],
\end{align*}
where $\bE_X[\cdot]$ denotes taking expectation w.r.t.~$P_X$.
For a class of policies $\Pi$, we wish to learn the optimal policy  $\pi^*$ within $\Pi$:
\begin{align*}
 \pi^* = \argmax_{\pi\in \Pi}~Q(\pi). 
\end{align*}
The {\em regret} of a policy $\pi \in \Pi$
is the policy value difference between  the 
optimal policy and itself:
\begin{equation*}
    R(\pi) \stackrel{\Delta}{=} Q(\pi^*)-Q(\pi). 
\end{equation*}
Conceptually, we would like to learn a policy whose regret is as small as possible (so that its policy value is close to that of the
optimal policy). To do this, we shall 1) construct a policy value estimator for each $\pi \in \Pi$,
and 2) choose the policy $\hat{\pi}$ that maximizes the estimated policy value over the class $\Pi$. Note that here $\hat{\pi}$ is data-driven,
and thus itself and $R(\hat{\pi})$ are both random variables. 
Our strategy of deriving an upper bound for $R(\hat{\pi})$
is to establish that our policy value estimator is close to the true value \emph{uniformly} across the policy class. 
For the rest of the paper, \textcolor{black}{when we say a regret bound for $\hat{\pi}$, 
we refer to upper bound of the expected regret.}


\subsection{Policy Class Complexity}
Policy class is a central element in the discussion of policy learning. The complexity of a policy class presents a trade-off: 
a richer policy class means that the corresponding optimal policy has a larger value, while the increasing complexity of 
the class makes the optimal policy learning  intrinsically harder. Ideally we would like to consider a policy class with 
the appropriate complexity such that the optimal policy has a reasonably high value, and meanwhile it is feasible to learn the 
optimal policy efficiently. Our first step towards this goal is to characterize the complexity of a policy class
based on the Hamming distance.
\begin{definition}[Hamming Distance and the Covering Number]
\label{definition:hamming}
Consider a covariate space $\calX$ and a policy class $\Pi$. We define:
\begin{enumerate}[label = (\alph*)]
    \item the Hamming distance between any two policies $\pi_1,\pi_2\in \Pi$ given a covariate set $\{x_{1:n}\}$: 
  $$\ham\big(\pi_1, \pi_2 ; \{x_{1:n}\}\big) \stackrel{\Delta}{=} \frac{1}{n}\sum_{j=1}^n\one\big\{\pi_1(x_j)\neq 
    \pi_2(x_j)\big\};$$
    \item $\epsilon$-Hamming covering number of $\Pi$ given $\{x_{1:n}\}$: $N_\ham\big(\epsilon, \Pi; \{x_{1:n}\}\big)=|\Pi_0|$,\footnote{For a set $A$, we let $|A|$ denote its cardinality.} where  $\Pi_0 \subset \Pi$ is the smallest policy set such that $\forall \pi\in\Pi$, there exists $\pi'\in\Pi_0$ satisfying 
    $\ham\big(\pi,\pi'; \{x_{1:n}\} \big) \leq \epsilon$;
    \item $\epsilon$-Hamming covering number of $\Pi$: $
        N_\ham(\epsilon, \Pi) \stackrel{\Delta}{=}\sup\big\{N_\ham(\epsilon, \Pi ; \{x_{1:m}\}) :  m\geq 1, x_1,\dots, x_m\in\calX\}$;
    \item entropy integral: $\kappa(\Pi)=\int_0^1 \sqrt{\log{N_\ham(\epsilon^2, \Pi)}}\diff{\epsilon}$.
\end{enumerate}
\end{definition}
The entropy integral is the key quantity characterizing the complexity of a policy class. We will show in Section~\ref{sec:upper_bnd}
that the regret of our estimator can be upper bounded by a function of the entropy integral, and in Section~\ref{sec:simulation} we provide a tree-based
policy class with finite entropy integral.

\section{Regret Lower Bound}
To characterize the fundamental limit of the offline policy learning problem, we establish a lower bound for the 
worst-case expected regret in this
section. The lower bound is stated in terms of the Natarajan dimension of a multi-action policy class,
the definition of which is as follows.



\begin{definition}
Given a $K$-action policy class $\Pi$,
we say a set of $m$ points
$\{x_1,x_2,\ldots,x_m\}$ is shattered by $\Pi$ if there exist two functions $f_{-1}$, 
$f_1: \{x_1,x_2,\ldots, x_m\} \mapsto [K]$ such that, 
\begin{enumerate}[label = (\alph*)]
    \item for any $j\in[m]$, $f_{-1}(x_j) \neq f_1(x_j)$;
    \item for any $\bmsigma = (\sigma_1, \sigma_2, \ldots, \sigma_m) \in \{\pm 1\}^m$, 
    there exists a policy $\pi \in \Pi$ such that for any $j \in [m]$,
    \begin{align*}
        \pi(x_j) = \begin{cases}
            f_{-1}(x_j) & \mbox{ if }\sigma_j = -1;\\
            f_1(x_j) & \mbox{ if }\sigma_j = 1.
        \end{cases}
    \end{align*}
\end{enumerate}
The Natarajan dimension of $\Pi$ is defined to be the size of the largest set of points that can be shattered by $\Pi$.
\end{definition}


\textcolor{black}{Returning to our problem, we let $\ndim(\Pi)$ denote the Natarajan dimension of $\Pi$, $\calZ_T  = \{(X_t,W_t,Y_t)\}_{t=1}^T$ represent the collected
offline data and $\p$ be the joint distribution of $\calZ_T$.
The following theorem shows that, there exists a $\p$
satisfying Assumption~\ref{assumption:dgp}, such that for any policy $\hat{\pi}$ that is learned from $\calZ_T\sim \p$,  
the expected regret is lower bounded by $\Omega\Big(\min\big(1, \sqrt{\ndim(\Pi)\big/\sum^T_{t=1} g_t}\big)\Big)$.}

\textcolor{black}{
\begin{theorem}
\label{theorem:lower_bound}
Fix a squence $\bar{g}=\{g_t\}_{t=1}^T$, 
and let $\mathcal{P}(\bar{g})$ be the collection of all laws of data generating processes for which Assumption \ref{assumption:dgp} 
holds with the assignment probability lower bound sequence $\bar{g}$. 
For any $\hat{\pi}\in\Pi$ learned from $\calZ_T$, there is 
\begin{align*}
   \sup_{\mathbb{P}\in \mathcal{P}(\bar{g})} \bE_{\calZ_T\sim 
   \mathbb{P}}\big[R(\hat{\pi})\big] \ge \frac{M}{8e^3}\cdot \min\bigg(\frac{1}{3}, \sqrt{\frac{\ndim(\Pi)}{\sum_{t=1}^T g_t}}\bigg).
\end{align*}
\end{theorem}
}

\textcolor{black}{As is self-explained in Theorem \ref{theorem:lower_bound}, when $\ndim(\Pi) > \frac{1}{9}\sum_{t=1} g_t$, the expected regret in the worst case is lower bounded by a constant, which makes the problem of learning optimal policies infeasible.  Hence in this paper, we shall focus on the settings where $\ndim(\Pi) \leq \frac{1}{9}\sum_{t=1} g_t$ and write the lower bound as $O\big(\sqrt{\ndim(\Pi)/\sum_{t=1}^T g_t}\big)$ elsewhere when no confusion arises.} 

\subsection{Proof of Theorem~\ref{theorem:lower_bound}}

Consider $K$ arms indexed by $[K]$,
and write $d = \ndim(\Pi)$. By the definition of the Natarajan dimension, there exists a set of $d$ points 
$\{x_1,x_2,\ldots,x_d\} \subset \calX$ that is shattered by $\Pi$, i.e., there exist two functions 
$f_{-1}$, $f_1: \{x_1,x_2,\ldots,x_d\} \mapsto [K]$, such that $f_{-1}(x_j) \neq f_1(x_j)$ for any $j\in[d]$, 
and for any $\bmsigma \in \{\pm 1\}^d$, there exists a policy $\pi \in \Pi$ such that $\pi(x_j) = f_{\sigma_j}(x_j)$.


To establish the lower bound, we construct $2^d$ distributions for $\calZ_T$,
where each $\bmsigma \in \{\pm 1\}^d$ induces a joint distribution of $\calZ_T$. Fix a $\bmsigma \in \{\pm 1\}^d$. For each $t\in [T]$,
let $X_t$ be independently and uniformly generated from $\{x_1,x_2,\ldots,x_d\}$; conditional on $X_t = x_j$, $W_t$ is chosen to be $f_{1}(x_j)$ 
with probability~(w.p.) $g_t$ and other arms w.p.~$\big(1-g_t\big)/(K-1)$ (thus the distribution of $X_t$ and $W_t$ does not depend on $\bmsigma$).
We now proceed to specify the set of reward distributions for the $K$ arms that depend on $\bmsigma$.
For any $j\in[d]$ and some (small) $\Delta > 0$ to be specified later, conditional on $X_t = x_j$, 

\begin{align*}
&\mbox{arm $f_1(x_j)$: } Y_t\big(f_1(x_j)\big) \sim \textcolor{black}{M}\cdot \textnormal{Bern}\Big(\frac{1+\sigma_j\cdot \Delta}{2}\Big),\\
&\mbox{arm $f_{-1}(x_j)$: } Y_t\big(f_{-1}(x_j)\big) \sim  \textcolor{black}{M}\cdot \textnormal{Bern}\Big(\frac{1}{2}\Big),\\
&\mbox{arm $k$: } Y_t(k) = 0, \mbox{ for any }k \in [K]\backslash \big\{f_1(x_j), f_{-1}(x_j)\big\},
\end{align*}

where $\textnormal{Bern}(q)$ denotes the Bernoulli distribution with parameter $q$. By construction, conditional on $X_t = x_j$,
the optimal arm is $f_{\sigma_j}(x_j)$; since $\{x_1,\ldots,x_d\}$ is shattered by $\Pi$, there exists a policy $\pi^{\bmsigma,*}\in\Pi$
that selects the optimal arm for any $x_j\in\{x_1,\ldots,x_d\}$.
It can be easily verified that the distributions constructed satisfy Assumption~\ref{assumption:dgp}.
Let $\p_{\bmsigma}(\cdot)$ and $\mathbb{E}_{\bmsigma}[\cdot]$ refer to the distribution and expectation taken under 
the joint distribution induced by $\bmsigma$,
and let $\p$ be the mixture distribution  uniformly
drawn from $\{\p_{\bmsigma}\}_{\bmsigma \in \{\pm 1\}^d}$.
For any policy $\hat{\pi}$ learned from $\mathcal{Z}_T\sim \p$,
\begin{align}
\label{eq:lowbnd1}
    \bE_{\calZ_T\sim 
    \mathbb{P}}\big[R(\hat{\pi})\big] 
    \ge &\dfrac{1}{2^d}\sum_{\bmsigma \in \{\pm 1\}^d}\bE_{\bmsigma}\big[Q(\pi^{\bmsigma, *})-Q(\hat{\pi})\big] \nonumber\\
    \stackrel{(i)}{=} &\dfrac{1}{2^d}\sum_{\bmsigma \in 
    \{\pm 1\}^d} \frac{1}{d}\sum^{d}_{j=1}\bE_{\bmsigma}\big[Q(\pi^{\bmsigma,*})-Q(\hat{\pi}) \given X = x_j\big] \nonumber\\
    \stackrel{(ii)}{\geq} & \frac{ \textcolor{black}{M}\Delta}{d2^{d+1}} \sum_{\bmsigma \in \{\pm 1\}^d} \sum^{d}_{j=1}
    \mathbb{E}_{\bmsigma}\Big[\one \big\{\hat{\pi}(x_j) \neq f_{\sigma_j}(x_j)\big\} \Big]\nonumber\\
    = & \frac{\textcolor{black}{M}\Delta}{d2^{d+1}} \sum_{\bmsigma \in \{\pm 1\}^d} \sum^{d}_{j=1}
    \p_{\bmsigma}\big(\hat{\pi}(x_j) \neq f_{\sigma_j}(x_j) \big).
\end{align}
Above, step (i) uses the tower property; in step (ii), the optimal policy is in $\Pi$ since $\{x_1,\ldots,x_d\}$
is shattered by $\Pi$. 
For a fixed $\bmsigma\in\{\pm 1\}^d$, let $M_j(\bmsigma)\in \{\pm 1\}^d$ be the vector that differs from $\bmsigma$ only in element $j$:
$[M_j(\bmsigma)]_j = -\bmsigma_j$ and  $[M_j(\bmsigma)]_i = \bmsigma_i$ for all $i \neq j$. Equipped with the notation, we have
\begin{align}
\label{eq:lowbnd1.5}
    \eqref{eq:lowbnd1}= & \frac{\textcolor{black}{M}\Delta}{d2^{d+1}} \sum_{j=1}^d\sum_{\bmsigma: \sigma_j = 1}
    \Big(\p_{\bmsigma}\big(\hat{\pi}(x_j) \neq f_1(x_j)\big)
    + \p_{M_j(\bmsigma)}\big(\hat{\pi}(x_j) \neq  f_{-1}(x_j)\big)\Big)\nonumber\\
    \stackrel{(i)}{\ge} & \frac{\textcolor{black}{M}\Delta}{d2^{d+1}} \sum_{j=1}^d\sum_{\bmsigma: \sigma_j = 1}
    \Big(\p_{\bmsigma}\big(\hat{\pi}(x_j) = f_{-1}(x_j)\big)
    + 1 - \p_{M_j(\bmsigma)}\big(\hat{\pi}(x_j) =   f_{-1}(x_j)\big)\Big)\nonumber\\
    \stackrel{(ii)}{\ge} &\frac{\textcolor{black}{M}\Delta}{d 2^{d+1}}\sum^d_{j=1}\sum_{\bmsigma: \sigma_j = 1}
    \Big(1 - \textsf{TV}\big(\p_{\bmsigma},\p_{M_j(\bmsigma)}\big) \Big)\nonumber\\
    \stackrel{(iii)}{\ge} &\dfrac{\textcolor{black}{M}\Delta}{d2^{d+2}}\sum^d_{j=1}\sum_{\bmsigma:\sigma_j=1}
    \exp\Big(-\mathrm{D}_{\mathrm{KL}}\big(\p_{\bmsigma} \, \| \,\p_{M_j(\bmsigma)}\big)\Big),
\end{align}
where $\textsf{TV}(P,Q)$ denotes the total variation distance between two 
distributions $P$ and $Q$, and $\textnormal{D}_{\textnormal{KL}}(P\,\|\,Q)$ is the 
KL-divergence between $P$ and $Q$; step (i) is because $f_1(x_j) \neq f_{-1}(x_j)$, and 
$\{\hat{\pi}(x_j) = f_{-1}(x_j)\}\subset \{\hat{\pi}(x_j) \neq f_1(x_j)\}$;
step (ii) follows from the definition of the
total variation distance and step (iii) is a result of Lemma~\ref{lemma:tv_exp} stated below.
\begin{lemma}[\citet{tsybakov2008introduction}, Lemma 2.6]
\label{lemma:tv_exp}
Let $P$ and $Q$ be any two probability measures on the same measurable
space. Then 
\begin{align*}
    1 - \textnormal{\textsf{TV}}(P,Q) \ge \dfrac{1}{2}\exp
    \big(-\textnormal{D}_{\mathrm{KL}}(P\|Q) \big).
\end{align*}
\end{lemma}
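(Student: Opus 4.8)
The plan is to prove the inequality by passing through the Hellinger (Bhattacharyya) affinity $\int \sqrt{pq}\,\diff\nu$, which interpolates cleanly between the total-variation affinity $1-\textsf{TV}(P,Q)$ and the Kullback--Leibler divergence. First I would fix a common dominating measure $\nu$ (for instance $\nu = P+Q$) and set $p = \diff P/\diff\nu$, $q = \diff Q/\diff\nu$. Using $\min(p,q) = \tfrac12\big(p+q-|p-q|\big)$ together with $\textsf{TV}(P,Q) = \tfrac12\int |p-q|\,\diff\nu$, the left-hand side becomes the total-variation affinity
\[
 1 - \textsf{TV}(P,Q) = \int \min(p,q)\,\diff\nu.
\]
We may assume $\mathrm{D}_{\mathrm{KL}}(P\|Q) < \infty$, since otherwise the right-hand side is $0$ and there is nothing to prove; in particular $P \ll Q$, so $q/p$ is well defined $P$-almost everywhere.

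The first key step is a Cauchy--Schwarz bound linking the total-variation affinity to the Hellinger affinity. Writing $\sqrt{pq} = \sqrt{\min(p,q)}\,\sqrt{\max(p,q)}$ and applying Cauchy--Schwarz gives
\[
 \int \sqrt{pq}\,\diff\nu \le \Big(\int \min(p,q)\,\diff\nu\Big)^{1/2}\Big(\int \max(p,q)\,\diff\nu\Big)^{1/2}.
\]
Since $\max(p,q) = p+q-\min(p,q)$ and $p,q$ each integrate to $1$, the second factor is at most $\sqrt{2}$; rearranging yields $\int \min(p,q)\,\diff\nu \ge \tfrac12\big(\int\sqrt{pq}\,\diff\nu\big)^2$.

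The second key step lower-bounds the Hellinger affinity by $\exp\!\big(-\mathrm{D}_{\mathrm{KL}}(P\|Q)\big)$ via Jensen's inequality. Rewriting the affinity as an expectation under $P$, namely $\int \sqrt{pq}\,\diff\nu = \mathbb{E}_P\big[\exp(\tfrac12 \log(q/p))\big]$, and applying Jensen to the convex exponential, I obtain
\[
 \int \sqrt{pq}\,\diff\nu \ge \exp\Big(\tfrac12\,\mathbb{E}_P\big[\log(q/p)\big]\Big) = \exp\Big(-\tfrac12\,\mathrm{D}_{\mathrm{KL}}(P\|Q)\Big).
\]
Squaring and chaining with the first step gives $1 - \textsf{TV}(P,Q) \ge \tfrac12\exp\!\big(-\mathrm{D}_{\mathrm{KL}}(P\|Q)\big)$, as claimed.

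I expect the only delicate point to be measure-theoretic bookkeeping: reducing to the case $P \ll Q$ (so that $\log(q/p)$ is $P$-integrable and Jensen applies), and observing that rewriting $\int\sqrt{pq}\,\diff\nu$ as an expectation under $P$ is exact because the integrand vanishes wherever $p=0$. The two analytic inequalities themselves---Cauchy--Schwarz and Jensen---are routine once the affinity is introduced; the genuine idea is simply to route through $\int\sqrt{pq}\,\diff\nu$ rather than attempting to bound $\int\min(p,q)\,\diff\nu$ against the KL divergence directly.
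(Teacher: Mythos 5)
Your proof is correct and is exactly the standard argument behind this lemma, which the paper does not reprove but simply cites as Lemma 2.6 of \citet{tsybakov2008introduction}: there too one writes $1-\textsf{TV}(P,Q)=\int\min(p,q)\,\diff\nu$, applies Cauchy--Schwarz via $\sqrt{pq}=\sqrt{\min(p,q)}\sqrt{\max(p,q)}$ with $\int\max(p,q)\,\diff\nu\le 2$, and then lower-bounds the Hellinger affinity by $\exp\big(-\tfrac12\mathrm{D}_{\mathrm{KL}}(P\|Q)\big)$ using Jensen. Your measure-theoretic care (reducing to $P\ll Q$ when the KL divergence is finite, and noting the integrand vanishes where $p=0$) is sound and matches the cited treatment.
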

The KL-divergence between $\p_{\bmsigma}$ and $\p_{M_j(\bmsigma)}$ can be directly computed as,
\begin{align}
\label{eq:lowbnd2}
    \mathrm{D}_{\mathrm{KL}}\big(\p_{\bmsigma}\|\p_{M_j(\bmsigma)}\big) = &
    \bE_{{\bmsigma}}\bigg[\log  \dfrac{\mathrm{d}\!\p_{\bmsigma}}
    {\mathrm{d}\!\p_{M_j(\bmsigma)}}\big(X_1,W_1,Y_1,\ldots,X_T,W_T,Y_T \big)\bigg]\nonumber\\
    = & \bE_{{\bmsigma}}\bigg[\sum^T_{t=1} \log \dfrac{\mathrm{d}\!\p_{\bmsigma}(Y_t\mid W_t,X_t)}
    {\mathrm{d}\!\p_{M_j(\bmsigma)}(Y_t\mid W_t, X_t)}\bigg]\nonumber\\ 
    \stackrel{(i)}{=} & \sum^T_{t=1}\frac{1}{d}\sum_{i=1}^d \bE_{\bmsigma}\bigg[  \log
    \dfrac{\mathrm{d}\!\p_{\bmsigma}(Y_t\mid W_t, X_t)}
    {\mathrm{d}\!\p_{M_j(\bmsigma)}(Y_t\mid W_t,X_t)} \Biggiven X_t = x_i\bigg]\nonumber\\
    \stackrel{(ii)}{=} & \sum^T_{t=1}\frac{1}{d} \bE_{\bmsigma}\bigg[  \log
    \dfrac{\mathrm{d}\!\p_{\bmsigma}(Y_t\mid W_t, X_t)}
    {\mathrm{d}\!\p_{M_j(\bmsigma)}(Y_t\mid W_t,X_t)} \Biggiven X_t = x_j\bigg]\nonumber\\
    = & \sum^T_{t=1}\frac{1}{d}\bE_{\bmsigma}\bigg[ \mathbf{1} \{W_t = f_{1}(x_j)\} \cdot  
    \Delta\log\Big(\dfrac{1+\Delta}{1-\Delta}\Big) \Biggiven X_t = x_j\bigg]\nonumber\\ 
    \stackrel{(iii)}{\le}& \frac{3\Delta^2}{d} \cdot
    \sum^T_{t=1}\bE_{\bmsigma}\bigg[\mathbf{1} \big\{W_t = f_{1}(x_j)\big\}\Biggiven X_t = x_j\bigg]\nonumber\\
    \stackrel{(iv)}{=} &  \frac{3\Delta^2}{d} \cdot \sum_{t=1}^T g_t,
\end{align}
where step (i) uses the linearity and the tower property of expectation; step (ii) is because
$\p_{\bmsigma}$ differs from $\p_{M_j(\bmsigma)}$ only when $X = x_j$; \textcolor{black}{step (iii) follows 
from that $x\log(\frac{1+x}{1-x})\le 3x^2$ for $x \in [0,\frac{1}{3}]$};  step (iv) is by design of the sampling mechanism 
of $\{W_t\}_{t=1}^T$.
Combining~\eqref{eq:lowbnd1.5} and~\eqref{eq:lowbnd2}, we have that
\textcolor{black}{\begin{align*}
    \bE_{\calZ_T\sim 
    \mathbb{P}}\big[Q(\pi^*)
    - Q(\hat{\pi})\big] \geq \dfrac{M\Delta}{8}\exp \Big(- \frac{3\Delta^2}{d}\cdot \sum_{t=1}^T g_t\Big). 
\end{align*}}
\textcolor{black}{If $d\leq  \frac{1}{9}\sum_{t=1}^T g_t$, letting $\Delta = \sqrt{d\big/\big(\sum_{t=1}^T g_t\big)}$ yields the desired lower bound. 
Otherwise, choose $\Delta=1/3$, and we have 
\begin{align*}
     \bE_{\calZ_T\sim 
    \mathbb{P}}\big[Q(\pi^*)
    - Q(\hat{\pi})\big] \geq \dfrac{M}{24} 
    \exp \Big(- \frac{1}{3d} \cdot \sum_{t=1}^T g_t\Big)\geq \frac{M}{24e^3},
\end{align*}}
which concludes the proof.
 
A direct consequence of Theorem~\ref{theorem:lower_bound} is the following corollary.
\textcolor{black}{\begin{corollary}
\label{corollary:lower_bound}
Given $\alpha \in [0, 1)$, let $\mathcal{P}(\{t^{-\alpha}\})$ be the collection of all laws of data generating process for which Assumption \ref{assumption:dgp} holds 
with the assignment probability lower bound sequence $g_t=t^{-\alpha}$. 
For any $\hat{\pi}\in\Pi$ learned from $\calZ_T$, there is
\begin{align*}
\sup_{ \mathbb{P}\in\mathcal{P}(\{t^{-\alpha}\})}~\bE_{\calZ_T \sim \mathbb{P}}\big[Q(\pi^*) - Q(\hat{\pi})\big]
\ge \frac{M}{8e^3} \cdot \min\big(1/3, \sqrt{(1-\alpha) \ndim(\Pi)} \cdot T^{\frac{\alpha-1}{2}}\big).
\end{align*}
\end{corollary}}
The proof is completed by noticing that $\sum_{t=1}^T 
g_t \leq {T^{1-\alpha}}/{(1-\alpha)}$. 

\begin{remark}
The overlap
condition assumed in literature that studies \textbf{i.i.d.}~data~\citep{imbens2004nonparametric,athey2021policy,zhou2022offline} is a special case of  Corollary~\ref{corollary:lower_bound} with $\alpha=0$, which informs that 
the regret of any learned policy is at least $\Omega(T^{-{1}/{2}})$ under the overlap condition.
\end{remark}

\section{Regret Upper Bound}
\label{sec:upper_bnd}
In this section, we introduce our offline policy learning algorithm based on the 
generalized AIPW estimator, 
followed by a regret analysis of the algorithm and a discussion on the choice of weights. 

\subsection{The Policy Learning Algorithm}
Our proposed algorithm consists of two steps.
First, it estimates the value of a policy $\pi$
via reweighting the AIPW scores, where the 
\textcolor{black}{AIPW score} for $t\in[T]$ is defined as follows
\begin{align}
\label{eq:aipw_score}
\widehat{\Gamma}_t(\pi)  = 
\widehat{\mu}_t\big(X_t; \pi(X_t)\big)+ 
\frac{\one \big\{W_t=\pi(X_t)\big\}}{e_t\big(X_t;\pi(X_t)\big)}\cdot 
\Big(Y_t - \widehat{\mu}_t\big(X_t; \pi(X_t)\big)\big).
\end{align}
Above, $\widehat{\mu}_t(x;w)$ is an estimator of the  expectation $\mu(x;w)$ using  $\calH_{t-1}$, and $e_t(X_t;w)$ is the 
assignment probability that is also computed based on $\calH_{t-1}$ and is known to the algorithm by Assumption~\ref{assumption:dgp}.
The algorithm then outputs the policy maximizing the estimated policy value. 

We start from some properties of the AIPW scores that will be useful later.
\begin{proposition}[\citet{hadad2021confidence,zhan2021off}]
\label{proposition:gammahat}
The AIPW score $\widehat{\Gamma}_t(\pi)$ has
the following two properties.
\begin{enumerate}[label = (\alph*)]
    \item Conditional unbiasedness: 
      $\bE\big[\widehat{\Gamma}_t(\pi) \mid \calH_{t-1}, X_t \big] =Q \big(X_t, \pi(X_t)\big)$.
    \item Bounded conditional variance: there exist positive constants $L, U >0$, such that
    \begin{align*}
      L\cdot \bE \big[e_t\big(X_t; \pi(X_t)\big)^{-1} \biggiven \calH_{t-1}\big] \le
      \mathrm{Var}\big( \widehat{\Gamma}_t(\pi) \mid \calH_{t-1} \big)  \le
    U\cdot \bE \big[e_t(X_t; \pi(X_t))^{-1} \biggiven \calH_{t-1}\big].
    \end{align*}
\end{enumerate}
\end{proposition}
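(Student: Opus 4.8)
The plan is to verify both claims by conditioning on $\calH_{t-1}$ (and, where useful, additionally on $X_t$), under which the fitted nuisance $\widehat{\mu}_t(\cdot;\cdot)$ and the propensities $e_t(X_t;\cdot)$ are deterministic, so that the only remaining randomness is the draw of $W_t$ (multinomial with probabilities $e_t(X_t;w)$) and the independent noise $\varepsilon_t$. For part (a), I would condition on $\calH_{t-1}$ and $X_t$ and push the expectation inside the sum over $w$. The deterministic term $\pi(X_t;w)\widehat{\mu}_t(X_t;w)$ passes through unchanged, so the content is in the inverse-propensity correction. Conditioning further on $W_t$ and using $\bE[Y_t \mid W_t=w,\calH_{t-1},X_t]=\mu(X_t;w)$ together with $\p(W_t=w\mid \calH_{t-1},X_t)=e_t(X_t;w)$, one gets $\bE[\one\{W_t=w\}(Y_t-\widehat{\mu}_t(X_t;w))\mid\calH_{t-1},X_t]=e_t(X_t;w)\bigl(\mu(X_t;w)-\widehat{\mu}_t(X_t;w)\bigr)$. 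Dividing by $e_t(X_t;w)$ cancels the $\widehat{\mu}_t$ terms exactly, leaving $\sum_w\pi(X_t;w)\mu(X_t;w)=Q(X_t,\pi)$; this step is clean and self-contained.

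For part (b), I would first isolate the stochastic part of $\widehat{\Gamma}_t$. Writing $b(w) \stackrel{\Delta}{=} \mu(X_t;w)-\widehat{\mu}_t(X_t;w)$ and noting that only one indicator fires, one has $\widehat{\Gamma}_t = \sum_w\pi(X_t;w)\widehat{\mu}_t(X_t;w) + A$ with $A \stackrel{\Delta}{=} \tfrac{\pi(X_t;W_t)}{e_t(X_t;W_t)}\bigl(b(W_t)+\varepsilon_t\bigr)$, where the first summand is deterministic given $\calH_{t-1},X_t$. I would then apply the law of total variance in two layers. The outer layer, over $X_t$, gives $\mathrm{Var}(\widehat{\Gamma}_t\mid\calH_{t-1}) = \bE\bigl[\mathrm{Var}(\widehat{\Gamma}_t\mid\calH_{t-1},X_t)\mid\calH_{t-1}\bigr] + \mathrm{Var}\bigl(Q(X_t,\pi)\mid\calH_{t-1}\bigr)$, using part (a) for the conditional mean.

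The crux is the inner conditional variance $\mathrm{Var}(\widehat{\Gamma}_t\mid\calH_{t-1},X_t)=\mathrm{Var}(A\mid\calH_{t-1},X_t)$. I would expand it by the law of total variance with respect to $W_t$: given $W_t=w$, the term $A=\tfrac{\pi(X_t;w)}{e_t(X_t;w)}\bigl(b(w)+\varepsilon_t\bigr)$ has conditional variance $\tfrac{\pi^2(X_t;w)}{e_t^2(X_t;w)}\sigma^2$, and averaging over $W_t$ with weights $e_t(X_t;w)$ produces exactly $\sigma^2\sum_w \tfrac{\pi^2(X_t;w)}{e_t(X_t;w)}$, while the remaining between-arm variance is nonnegative. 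Dropping it yields the pointwise bound $\mathrm{Var}(A\mid\calH_{t-1},X_t)\ge \sigma^2\sum_w\tfrac{\pi^2(X_t;w)}{e_t(X_t;w)}$; taking the outer expectation and discarding the nonnegative $\mathrm{Var}(Q(X_t,\pi))$ term gives the lower bound with $L=\sigma^2$. For the upper bound I would instead use $\mathrm{Var}(A)\le \bE[A^2\mid\calH_{t-1},X_t]=\sum_w\tfrac{\pi^2(X_t;w)}{e_t(X_t;w)}\bigl(b(w)^2+\sigma^2\bigr)$ and bound $|b(w)|\le 2M$ via Assumption~\ref{assumption:dgp}(a) (taking $\widehat{\mu}_t$ truncated to $[-M,M]$ without loss).

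The one point needing care is folding the extra term $\mathrm{Var}(Q(X_t,\pi)\mid\calH_{t-1})\le M^2$ into the target quantity for the upper bound, since it is not itself an inverse-propensity expression. The cleanest route is to note that, because $\pi(X_t;\cdot)$ lies on the simplex and $e_t\le 1$, Cauchy--Schwarz gives $\sum_w\tfrac{\pi^2(X_t;w)}{e_t(X_t;w)}\ge \sum_w\pi^2(X_t;w)\ge \tfrac1K$, so $\bE\bigl[\sum_w \pi^2/e_t\mid\calH_{t-1}\bigr]\ge 1/K$ and hence $M^2\le M^2K\cdot\bE\bigl[\sum_w\pi^2/e_t\mid\calH_{t-1}\bigr]$; combining the two pieces yields a valid constant, e.g.\ $U=4M^2+\sigma^2+M^2K$. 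I expect this bookkeeping, rather than any individual inequality, to be the only mildly delicate part; everything else is a direct moment computation under the conditional multinomial-plus-noise model.
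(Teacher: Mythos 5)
Your proof is correct. Note, however, that the paper does not actually prove Proposition~\ref{proposition:gammahat}: it is imported as a cited result from \citet{hadad2019confidence,zhan2021contextual}, so there is no in-paper proof to compare against line by line. The closest analogue in the paper is the conditional second-moment computation inside the proof of Lemma~\ref{lemma:bound_empirical_process_prob} (Appendix~\ref{appendix:proof_lemma_bound_empirical_process_prob}), which bounds $\bE\big[(\widehat{\Gamma}_t(\pi)-Q(X_t,\pi))^2 \mid \calH_{t-1},X_t\big]$ by expanding the square directly, including the cross terms $w\neq w'$, and arrives at the bound $(4M^2+\sigma^2)\sum_{w}\pi^2(X_t;w)/e_t(X_t;w)$. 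Your route is a cleaner two-layer law of total variance (outer over $X_t$, inner over $W_t$ and $\varepsilon_t$), and it buys you the lower bound with $L=\sigma^2$ essentially for free, which the paper's appendix computation never needs. Two small remarks. First, your handling of the $\mathrm{Var}(Q(X_t,\pi)\mid\calH_{t-1})\le M^2$ term can be tightened: by Cauchy--Schwarz, $1=\big(\sum_{w}\pi(X_t;w)\big)^2\le \big(\sum_{w}e_t(X_t;w)\big)\big(\sum_{w}\pi^2(X_t;w)/e_t(X_t;w)\big)=\sum_{w}\pi^2(X_t;w)/e_t(X_t;w)$, so the inverse-propensity sum is at least $1$ (not merely $1/K$), giving $U=5M^2+\sigma^2$ with no dependence on $K$; this matches the constant used in the paper's appendix. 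Second, you correctly flag the two implicit assumptions that any proof of this proposition needs---$\sigma^2>0$ for the lower bound, and $|\widehat{\mu}_t|\le M$ (e.g.\ by truncation) for the upper bound---both of which the paper also uses tacitly in its appendix computation.
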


A direct consequence of Proposition \ref{proposition:gammahat}(a) is that any weighted average
of $\widehat{\Gamma}_t(\pi)$ is also unbiased; Proposition \ref{proposition:gammahat}(b)
reveals that the conditional variance of the AIPW score $\widehat{\Gamma}_t(\pi)$ scales with $\mathbb{E}\big[e_t(X_t; \pi(X_t))^{-1}\mid \calH_{t-1}\big]$. As 
is often the case in adaptive experiments, $e_t(X_t;w)$ goes to zero for some suboptimal action $w$ as $t$ increases;
consequently,
the term $e_t(X_t; \pi(X_t))^{-1}$ may
go to infinity, and the 
variance of $\hat{\Gamma}_t(\pi)$ may explode.  
To offset the potentially large variance of  $\widehat{\Gamma}_t$, 
we further introduce a weight $h_t$ (the choice of which will be discussed soon) 
to balance the variance of these AIPW scores. This gives our generalized AIPW estimator: 
\begin{equation}
\label{eq:estimator}
    \widehat{Q}_T(\pi) \stackrel{\Delta}{=} \frac{\sum^T_{t=1} h_t \widehat{\Gamma}_t(\pi)}{\sum^T_{t=1}h_t}.
\end{equation}
Our algorithm then selects the policy that maximizes the above estimator:
\begin{align}
\label{eq:learned_policy}
    \hat{\pi} = \argmax_{\pi \in \Pi}~\widehat{Q}_T(\pi).
\end{align}
\begin{remark}
Unlike the choice of 
weights in~\citet{luedtke2016statistical,hadad2021confidence,zhan2021off}, 
here $\{h_t\}_{t\in[T]}$ are pre-specified and \emph{not} adaptive. 
In fact, the choice of $h_t$ should take into account the worst-case variance of $\widehat{\Gamma}_t$ over the  policy 
class and all possible data realizations (see more details in Section \ref{section:weights}).
\end{remark}

\subsection{Regret Analysis}
\textcolor{black}{Below we  state  the main condition 
on our weighting scheme.
\begin{assumption}
\label{assumption:weights}
The weights $\{h_t\}$ used in \eqref{eq:estimator} satisfy that
\begin{equation*}
L_T(h,g) 
\stackrel{\Delta}{=}
\frac{\sum^T_{t=1} h_t^4 /g_t^3}
{\big(\sum_{t=1}^T h_t^2/g_t\big)^{2}}   \rightarrow 0,
\quad \mbox{as}\quad T\rightarrow +\infty.
\end{equation*}
\end{assumption}
This assumption specifies a  regularity condition on the weights, 
which controls the higher moments of the estimator $\widehat{Q}_T(\pi)$ 
in \eqref{eq:estimator} in the worst cases and thus enables us 
to introduce martingale concentration results to prove the uniform 
convergence of $\widehat{Q}_T(\pi)$.  An analogous assumption is 
also required for weighting schemes used in policy inference with adaptive data  \citep{hadad2021confidence,zhan2021off}.
We now present the bound on the expected regret
for the policy obtained via~\eqref{eq:learned_policy}.}

\textcolor{black}{\begin{theorem}
\label{theorem:upper_bound}
Suppose Assumptions~\ref{assumption:dgp} and \ref{assumption:weights} hold.
For $T$ such that $L_T(h,g) < 1/8$,
the expected regret of the policy $\hat{\pi}$ 
given by \eqref{eq:learned_policy}  can be upper bounded as,
\begin{equation*}
\label{eq:regret_upper_bound}
\mathbb{E}\big[R(\hat{\pi})\big] \leq 
100M\sqrt{K} \cdot 
\Bigg(19 \kappa(\Pi) +  
7\sqrt{\log\bigg(\tfrac{\sum^T_{t=1}h_t/g_t}{ \sqrt{\sum^T_{t=1} h_t^2/g_t}}\bigg)}
+29\Bigg)
\cdot \frac{\sqrt{\sum_{t=1}^T h_t^2/g_t}}{\sum_{t=1}^T h_t}
+ 4M L_T(h,g).
\end{equation*}
\end{theorem}}


\begin{remark}
When $g_t$ is a positive constant (which corresponds to the overlap condition), 
the expected regret bound with uniform weights ($h_t=1$) is $\tilde{O}(\kappa(\Pi)\cdot T^{-{1}/{2}})$; this recovers the 
rate achieved by \cite{zhou2022offline} with \textbf{i.i.d.}~data. 
\end{remark}


Here, we only discuss the proof of Theorem~\ref{theorem:upper_bound}
at a high level and defer the details to Section~\ref{section:proof}. 
The main idea is to  bound the regret of $\hat{\pi}$  by the worst-case estimation error of $\widehat{Q}_T$ over $\Pi$:
\begin{equation*}
\label{eq:regret}
\begin{split}
   R(\hat{\pi}) = Q(\pi^*) - Q(\widehat{\pi}) &= \big( Q(\pi^*) - \widehat{Q}_T(\pi^*) \big) + 
    \big( \widehat{Q}_T(\pi^*) - \widehat{Q}_T(\widehat{\pi}) \big) + \big( \widehat{Q}_T(\widehat{\pi}) - Q(\widehat{\pi}) \big) \\
    &\le  2 \max_{\pi \in \Pi}~|Q(\pi) - \widehat{Q}_T(\pi)|,
\end{split}
\end{equation*}
where the inequality uses the fact that $\hat{\pi}$ maximizes $\widehat{Q}_T(\pi)$. It thus suffices to bounding the quantity 
$\max_{\pi \in \Pi} |Q(\pi) - \widehat{Q}_T(\pi)|$. With $h_t$ being  independent of policy $\pi$ and data realization,
we have
\begin{align*}
    &\max_{\pi\in\Pi}~\big|Q(\pi) - \widehat{Q}_T(\pi)\big| \le 
    \Big(\sum_{t=1}^T h_t\Big)^{-1} \cdot \max_{\pi\in\Pi}~\Big|\sum_{t=1}^T h_t \cdot \big(\widehat{\Gamma}_t(\pi)-Q(\pi)\big)\Big|\\
    = & \Bigbracket{\sum_{t=1}^T h_t}^{-1} \cdot \max_{\pi\in\Pi}~\Bigbar{\sum_{t=1}^T h_t \cdot  
    \big(\widehat{\Gamma}_t(\pi)-Q(X_t,\pi) + Q(X_t,\pi)-Q(\pi)\big)}\\
    \le & \Bigbracket{\sum_{t=1}^T h_t}^{-1} \cdot
    \bigg(\max_{\pi\in\Pi}~\Bigbar{\sum_{t=1}^T h_t \cdot \big(\widehat{\Gamma}_t(\pi)-Q(X_t,\pi)\big)}
    + \max_{\pi\in\Pi}~\Bigbar{\sum_{t=1}^T h_t \cdot \big(Q(X_t,\pi)-Q(\pi)\big)}\bigg).
\end{align*}
Define the $\sigma$-field $\mathcal{F}_t \stackrel{\Delta}{=} 
\sigma(\calH_{t}, X_{t+1})$.  Then by Proposition \ref{proposition:gammahat}(a) the term 
$\sum_{t=1}^T h_t (\widehat{\Gamma}_t(\pi)-Q(X_t,\pi))$ is a martingale difference sequence w.r.t.~the filtration $\{\mathcal{F}_t\}_{t \ge 1}$;
the term $\sum_{t=1}^T h_t( Q(X_t,\pi)-Q(\pi))$ is an empirical process with \textbf{i.i.d.}~random variables.
We shall establish uniform concentration results for these two terms in Section \ref{section:proof} separately. 

\subsection{Choice of \texorpdfstring{$h_t$}{weights}}
\label{section:weights}
We proceed to discuss how to choose weights $h_t$ to satisfy Assumption \ref{assumption:weights} and sharpen the regret bound  established in Theorem~\ref{theorem:upper_bound}. 
Two scenarios are considered: one with assignment probability lower bound $g_t$ disclosed and the other without; we summarize the procedure in Algorithm \ref{algorithm}.

\begin{algorithm}[htb]
\caption{Policy Learning via Generalized AIPW Estimator}
\label{algorithm}
\textbf{Input:} dataset $\{(X_t, W_t, Y_t)\}_{t=1}^T$; policy class $\Pi$.\\

\For{$t=1,\dots, T$}
{   
\begin{enumerate}
\item Fit plug-in estimator $\hat{\mu}_t(\cdot;w)$ for $\mu(\cdot;w)$  using data $\{(X_s, W_s, Y_s)\}_{s=1}^{t-1}$, for all $w\in\calW.$
\item Construct the AIPW estimator $\widehat{\Gamma}_t(\pi)=\widehat{\mu}_t\big(X_t; \pi(X_t)\big)+ 
\frac{\one\big \{W_t=\pi(X_t)\big\}}
{e_t\big(X_t;\pi(X_t)\big)}\cdot \Big(Y_t - \widehat{\mu}_t\big(X_t; \pi(X_t)\big)\Big)$.
\end{enumerate}
}

\If{$g_t$ is known}
{
Set $h_t=g_t$.
}
\Else
{
Set $h_t=1$.
}

Construct generalized AIPW estimator $ \widehat{Q}_T(\pi) =\sum^T_{t=1} h_t \widehat{\Gamma}_t(\pi)\big/\sum^T_{t=1}h_t.$\\
\textbf{Return:} $\widehat{\pi} = \argmax_{\pi \in \Pi}~\widehat{Q}_T(\pi)$.
\end{algorithm}

\subsubsection{Scenario I: \texorpdfstring{$g_t$}{exploration lower bound} is known.} 
\textcolor{black}{With the knowledge of $g_t$, our goal is to analytically solve the \emph{optimal} weight $h^*_t$ that minimizes 
the regret bound presented in Theorem~\ref{theorem:upper_bound}, which we shall soon show to be $h_t^*\propto g_t$.
When $h_t \propto g_t$, Assumption~\ref{assumption:weights} reduces to that 
$\sum_{t=1}^T g_t \rightarrow + \infty$, which is naturally satisfied 
when the learning problem is ``feasible'' (note that when 
$\lim\!\sup_{T\rightarrow +\infty} \sum^T_{t=1} g_t < +\infty$, 
by the lower bound no learning algorithm can do better
than a constant error in the worst case).
Further, the  policy $\hat{\pi}$ obtained with  $h^*_t$  is minimax optimal---the expected regret upper 
bound is $\tilde{O}\big(\kappa(\Pi)\cdot (\sum_{t=1}^T g_t)^{-1/2}\big)$, 
matching the exact lower bound  in Theorem \ref{theorem:lower_bound}
up to logarithmic factors.}

\textcolor{black}{To solve for $h_t^*$, we firstly minimize the term $(\sum_{t=1}^Th_t^2/g_t)/(\sum_{t=1}^T h_t)^2$ in~\eqref{eq:regret_upper_bound}; as we shall soon see, this minimizer   also minimizes the term $L_T(h,g)$ in~\eqref{eq:regret_upper_bound}.
Let $\tilde{h}_t=h_t/(\sum_{s=1}^T h_s)$ be normalized weights.  We  rewrite the original problem into  the  following
convex optimization problem:
\begin{equation*}
\label{eq:optimization_optimal_weights}
    \begin{split}
        \min_{\{ \tilde{h}_t\}_{t\in[T]}}  \quad
        & \sum_{t=1}^T \tilde{h}_t^2/g_t\\
        \mbox{such that}\quad& \sum_{t=1}^T{\tilde{h}_t}=1;\\
        \quad & \tilde{h}_t\geq 0, \quad t \in [T].
    \end{split}
\end{equation*}
The above problem  has a unique minimizer 
$\tilde{h}^*_t=g_t/\big(\sum_{s=1}^T g_s\big)$ 
(see Appendix~\ref{appendix:solve_h} for details). Plugging $\tilde{h}^*$ into $L_T(h,g)$, we have
\begin{equation*}
    L_T(\tilde{h}^*,g) = \frac{1}{\sum_{t=1}^T g_t} \leq \frac{\sum_{t=1}^T h_t^4/g_t^3}{(\sum_{t=1}^T h_t^2/g_t)^2} = L_T(h, g), \mbox{ for any positive weights }\{h_t\},
\end{equation*}
where the inquality is by Cauchy-Schwartz inequality. This choice of $\tilde{h}^*$ yields a minimax optimal regret bound summarized in the  corollary below.}

\textcolor{black}{\begin{corollary}
\label{corollary:optimal_regret}
Suppose that Assumption~\ref{assumption:dgp} holds for assignment probability lower bound $\{g_t\}_{t\in[T]}$,
and that $\sum^T_{t=1} g_t \rightarrow +\infty$ as $T \rightarrow +\infty$.
The optimal weights $\{h_t^*\}_{t\in[T]}$ satisfy $h_t^*\propto g_t$ for each $t\in[T]$, 
with which  the policy $\hat{\pi}$ in \eqref{eq:learned_policy}   achieves minimax optimal regret. 
Specifically, for any $T$ such that $\sum^T_{t=1} g_t \ge 8$,
the expected regret of the policy $\hat{\pi}$ 
given by \eqref{eq:learned_policy}  can be upper bounded as,
\begin{equation*}
\label{eq:optimal_weighting_regret}
\mathbb{E}\big[R(\hat{\pi})\big] \leq 
100M\sqrt{K} \cdot \Bigg(19\kappa(\Pi) + 7\sqrt{\log\bigg(\tfrac{T}{\sqrt{\sum^T_{t=1}g_t}}\bigg)} + 30\Bigg)
\cdot \frac{1}{\sqrt{\sum_{t=1}^T g_t}}.
\end{equation*}
\end{corollary}
The corollary is proven by letting $h_t = g_t$ in 
Theorem~\ref{theorem:upper_bound} and noting 
that $L_T(h,g) = (\sum^T_{t-1}g_t)^{-1} \le (\sum^T_{t=1} g_t)^{-1/2}$.}


\subsubsection{Scenario II: \texorpdfstring{$g_t$}{exploration lower bound} is unknown.} 
In practice, one may not have access to  the assignment probability lower bound $g_t$. 
We claim in such settings, uniform weighting with $h_t=1$ can  also be effective. 
\textcolor{black}{In this case, Assumption~\ref{assumption:weights} reduces to that 
\begin{align*}
\frac{\sum^T_{t=1} 1/g_t^3}{\big(\sum^T_{t=1} 1/g_t\big)^2} \rightarrow 0
\quad\mbox{as}\quad T\rightarrow +\infty,
\end{align*}
which holds in many cases such as $g_t = t^{-\alpha}$ for some $\alpha \in [0,1)$.
The following corollary characterizes the regret incurred by our estimator
with uniform weights.}

\textcolor{black}{\begin{corollary}
\label{corollary:uniform_weighting_regret}
Suppose  Assumption~\ref{assumption:dgp} holds and 
that 
\begin{align*}
\frac{\sum^T_{t=1} 1/g_t^3}{\big(\sum^T_{t=1} 1/g_t\big)^2} 
\rightarrow 0 \quad \mbox{as} \quad T \rightarrow +\infty.
\end{align*}
The expected regret incurred by policy $\hat{\pi}$ in \eqref{eq:learned_policy} with uniform weights $h_t=1$  can be bounded as
\begin{equation*}
\label{eq:uniform_upper_bound}
\mathbb{E}\big[R(\hat{\pi})\big] \leq 
100M\sqrt{K} \cdot \Bigg(19 \kappa(\Pi) + 7\sqrt{\log\big( \sum^T_{t=1} 1/g_t\big)}+29\Bigg)
\cdot \frac{\sqrt{\sum_{t=1}^T 1/g_t}}{T} + 4 M \cdot \frac{\sum^T_{t=1}1/g_t^3}{(\sum^T_{t=1}1/g_t)^2}.
\end{equation*}
\end{corollary}}

\textcolor{black}{In general, the regret bound  $\tilde{O}\big(\kappa(\Pi)\cdot\frac{\sqrt{\sum_{t=1}^T g_t^{-1}}}{T} + \frac{\sum^T_{t=1}1/g_t^3}{(\sum^T_{t=1}1/g_t)^2}\big)$ 
yielded by  uniform weighting is looser than the minimax regret 
$\tilde{O}\big(\kappa(\Pi)\cdot(\sum_{t=1}^Tg_t)^{-1/2}\big)$ yielded by optimal weighting, 
which can be verified by noticing that $\frac{\sqrt{\sum_{t=1}^T g_t^{-1}}}{T}\geq(\sum_{t=1}^Tg_t)^{-1/2}$ as a result of Cauchy-Schwarz inequality. However, in some cases, these two 
achieve the same  regret decay rate---both are minimax optimal, as illustrated in the following example.}


\subsubsection{A case study.}
\label{section:case_study}
\textcolor{black}{To provide  intuition,  consider a special case
where the assignment probability lower bound decays polynomially,
and in specific we let $g_t = t^{-\alpha}$ for some $\alpha\in[0,1)$. 
We consider weights  $h_t=t^{-\beta}$ for some nonnegative $\beta$ and  
study how $\beta$ affects the regret of $\hat{\pi}$ obtained via \eqref{eq:learned_policy}.
Theorem \ref{theorem:lower_bound} shows that the expected regret (in terms of $T$)
is lower bounded by $\Omega(T^{(\alpha-1)/2})$.
Assumption~\ref{assumption:weights} holds for any $\beta < \frac{\alpha+1}{2}$;
then with Theorem \ref{theorem:upper_bound}, the expected regret upper bound with $h_t = t^{-\beta}$ for $\beta\in(0, \frac{\alpha+1}{2})$ is $\tilde{O}(\kappa(\Pi)\cdot T^{(\alpha-1)/2}+ T^{4\beta-2\alpha-2})$. In particular, when  $\beta\leq \frac{5\alpha+3}{8}$, our algorithm achieves  expected regret bounds of $\tilde{O}(\kappa(\Pi)\cdot T^{(\alpha-1)/2})$, which matches the exact lower bound and is
thus minimax optimal.
Notably,  uniform weighting (which does not require $g_t$ known) achieves the minimax optimal regret upper bound, and this bound  inflates the one obtained by optimal weights (which needs $g_t$ to be disclosed)  by a  factor of $(1-\alpha^2)^{-1/2}$. In this regard, when the assignment probability lower bound does not decay too fast in the sense that $g_t=t^{-\alpha}$ for some $\alpha\in[0,1)$, the minimax optimality of our algorithm is agnostic to the knowledge of  $g_t$.}


\section{Proof of the Upper Bound}
We now establish the regret bound given in Theorem \ref{theorem:upper_bound}. The problem has been decoupled to showing two uniform 
concentration results for: (i) the martingale difference
sequence $\sum_{t=1}^T h_t \big(\widehat{\Gamma}_t(\pi) - Q(X_t, \pi)\big)$, where we condition on  an event that the quadratic variation of AIPW scores $\widehat{\Gamma}_t$ is well regularized across the policy class, which happens with high probability under Assumption \ref{assumption:weights}; and (ii)  the sum of independent variable sequence 
$\sum_{t=1}^T h_t\big(Q(X_t, \pi)-Q(\pi)\big)$, where we apply standard techniques in analyzing empirical processes with \iid~data. 

\label{section:proof}
\subsection{Uniform Concentration of Martingale Difference Sequences}
Define $\M_T(\pi) \stackrel{\Delta}{=} 
\sum^T_{t=1} h_t \big(\widehat{\Gamma}_t(\pi) - Q(X_t, \pi)\big)$. 
Our goal is to show that with high probability, $\max_{\pi\in\Pi}|\M_T(\pi)|$ is small. 
\textcolor{black}{Particularly, we shall restrict our analysis  on the event below:
\begin{equation*}
\calB_T \stackrel{\Delta}{=}\bigg\{  \sup_{\pi\in\Pi}\sum_{t=1}^Th_t^2\cdot\big(\hgamma_t(\pi)
- Q(X_t;\pi)\big)^2 \leq 10KM^2 \cdot \sum_{t=1}^T\frac{h_t^2}{g_t}\bigg\},
\end{equation*}
In the following, 
we write $\chg \stackrel{\Delta}{=} 10KM^2 \cdot \sum^T_{t=1} h_t^2/g_t$ for notation convenience.  
On the event~$\calB_T$, the  quadratic variation of $\M_T(\pi)$ is 
controlled, which is critical in showing a fast uniform concentration 
rate of $\M_T$. The following lemma quantifies the probability
of  the event $\calB_T$. 
\begin{lemma}
\label{lemma:high_probability_event}
Under Assumption \ref{assumption:dgp},
$P( \calB_T)> 1- 2\cdot L_T(h,g)$.
\end{lemma}
The proof is deferred to Appendix \ref{appendix:proof_of_high_probability_event}. Lemma \ref{lemma:high_probability_event} immediately implies that under Assumption \ref{assumption:weights}, event $\calB_T$ happens with high probability.
Moving on, we shall show that $\M_T$ concentrates uniformly at a fast rate when $\calB_T$ happens.}

The traditional symmetrization technique is useful for proving
uniform concentration results with \textbf{i.i.d.}~data, but is not directly applicable given the adaptive nature of our data. Motivated by
\cite{rakhlin2015sequential}, we leverage a sequential analog of the symmetrization technique to obtain the uniform concentration for martingale empirical process. 
Note that results in \cite{rakhlin2015sequential}, which require  difference elements to be bounded, cannot be directly invoked in our setting where  each element may diverge. \textcolor{black}{To still allow for  sequential uniform concentration, we take a different route and condition on event $\calB_T$, such that the sum of quadratic terms is under control.}

We present the proof of uniform concentration of $\M_T(\pi)$ in two steps. First, we connect the martingale 
empirical process with a tree Rademacher process (defined shortly afterwards), and next 
we connect the tree Rademacher process with the entropy integral of the policy class $\kappa(\Pi)$. 

\paragraph{Step 1: Connecting the Martingale Empirical Process with a Tree Rademacher Process.}
We start by defining the notion of a tree, following~\citet{rakhlin2015sequential}. 
\begin{definition}
\label{defn:tree}
A $\calZ$-valued tree $\z$ of depth $T$ is a rooted complete binary tree with nodes labeled by elements of $\calZ$. 
\end{definition}
A tree $\z$ is identified with a sequence of 
labeling functions $(\z_1,\ldots, \z_T)$,
where $\z_i:\{\pm 1\}^{i-1} \mapsto \calZ$ 
labels the nodes on the $i$-th level.
To be more speficic, $\z_1$ refers to 
the root node; $\z_i$ for $i>1$ refers to
the node on the $i$-th level of the 
tree with the following rule: 
for a $\{\pm 1\}$-valued sequence of 
length $i-1$, $\z_i$ maps the sequence to a node
by following a path on the tree, with $-1$ referring to ``left'' and $+1$ to ``right''. As an example, Figure~\ref{fig:illustratin_tree} plots
a tree of depth $3$, where $\z_3(-1,-1)$ corresponds to the blue node,
and $\z_3(1,-1)$ is the red node.
\begin{figure}[ht]
\centering
    \begin{tikzpicture}[
        bluecirc/.style={circle, draw=gray!80, fill=gray!5, thick, minimum size=5mm},
        redcirc/.style={circle, draw=blue!60, fill=blue!5, thick, minimum size=5mm},
        greencirc/.style={circle, draw=red!60, fill=red!5, thick, minimum size=5mm},
    ]   
    \node[bluecirc] (root) {};
    \node (l1_sep) [below=15mm of root] {};
    \node[bluecirc] (child1) [left=15mm of l1_sep] {};
    \node[bluecirc] (child2) [right=15mm of l1_sep] {};
    \node (l2_sep1) [below=10mm of child1] {};
    \node[redcirc] (child11) [left=3mm of l2_sep1] {};
    \node[bluecirc] (child12) [right=3mm of l2_sep1] {};
    \node (l2_sep2) [below=10mm of child2] {};
    \node[greencirc] (child21) [left=3mm of l2_sep2] {};
    \node[bluecirc] (child22) [right=3mm of l2_sep2] {};
    \node [above left=6mm and 9mm of l1_sep] {\small $\epsilon_1 = -1$};
    \node [above right=6mm and 9mm of l1_sep] {\small $\epsilon_1 = 1$};
    \node [above right=3mm and 2mm of l2_sep2] {\small $\epsilon_2 = 1$};
    \node [above left=3mm and 2mm of l2_sep2] {\small $\epsilon_2 = -1$};
    \node [above right=3mm and 2mm of l2_sep1] {\small $\epsilon_2 = 1$};
    \node [above left=3mm and 2mm of l2_sep1] {\small $\epsilon_2 = -1$};
    \draw[-] (root) -- (child1);
    \draw[-] (root) -- (child2);
    \draw[-] (child1) -- (child11);
    \draw[-] (child1) -- (child12);
    \draw[-] (child2) -- (child21);
    \draw[-] (child2) -- (child22);
    \end{tikzpicture}
    \vspace{1em}
    \caption{Illustration of a tree of depth $3$. The blue
    node corresponds to $\z_3(-1,-1)$, and the red node corresponds
    to $\z_3(1,-1)$.}
    \label{fig:illustratin_tree}
\end{figure}
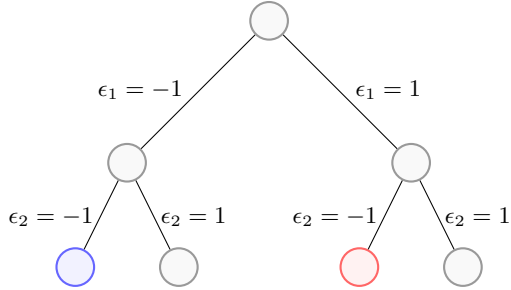

We proceed to state the definition of a \emph{tree Rademacher process}:
\begin{definition}[Definition 2 of~\citet{rakhlin2015sequential}]
Suppose $\calZ$ is the sample space. 
Let $\{f_{\pi}: \mathcal{Z} \rightarrow \bR \mid \pi \in \Pi\}$ 
be a class of functions indexed by
$\Pi$ and $\epsilon_1,\ldots,\epsilon_T$ be independent Rademacher random variables such that 
$\p(\epsilon_t = 1) = \p(\epsilon_t=-1) = {1}/{2}$. Given a $\mathcal{Z}$-valued
tree $\z$ of depth $T$, we define the following stochastic process as a tree Rademacher process (indexed by $\Pi$):
\begin{align*}
    f_{\pi} \rightarrow \sum^T_{t=1} \epsilon_t \cdot 
    f_{\pi}\big(\z_t(\epsilon_1,\ldots,\epsilon_{t-1})\big)
\end{align*}
\end{definition}

\textcolor{black}{Defining $Z_t \stackrel{\Delta}{=} (W_t,Y_t,\{e_t(X_t;w)\}_{w\in\calW}, \{\hat{\mu}_t(X_t;w)\}_{w\in\calW})$,
we write $f(h_t,X_t,Z_t;\pi) = h_t\cdot\hat{\Gamma}_t(\pi)$.} Let $\mathcal{Z}$ be the space $Z_t$ lives in and 
$\z$ be a $\calZ$-valued binary tree. For notational simplicity, we write  $\bx=\{x_{1}, \dots, x_T\}$ to denote
realized values of the covariates and $z_t(\epsilon) = 
\z_t(\epsilon_1,\ldots,\epsilon_{t-1})$ to denote a node at depth $t$.
Following~\citet{rakhlin2015sequential}, we  introduce a \emph{decoupled tangent sequence} (which is  similar to 
the symmetrized sequence in the \iid~case and is defined in Definition~\ref{def:dts}) with respect to the data sequence $(Z_1,\ldots,Z_T)$. 
\textcolor{black}{Similar to analyzing the martingale empirical process on event $\calB_T$, we also restrict our analysis of the tree Rademacher process on the event $\calB_T(\bx,\z)$, which is defined  with respect to a realization of covariates $\bx$ and a tree $\z$ as follows:
\begin{align*}
  \mathcal{B}_T(\mathbf{x},\z) \stackrel{\Delta}{=} 
\Big\{\epsilon \in \{\pm 1\}^T: 
  \sup_{\pi'} \sum^T_{t=1} \big(f(h_t,X_t,z_t(\epsilon);\pi')
- h_t Q(X_t;\pi')\big)^2 \le 2\chg\Big\}.
\end{align*}
Lemma~\ref{lemma:bound_empirical_process_prob}  
shows that the tail  of  $\sup_{\pi\in\Pi} |\M_T(\pi)|$  on the event $\calB_T$
is bounded by the tail of the supremum of tree Rademacher process on the event $\calB_T(\bx,\z)$.
\begin{lemma}
\label{lemma:bound_empirical_process_prob} 
Suppose that  Assumption \ref{assumption:dgp} holds, 
and that $L_T(h,g) \le 1/8$. 
Then for any $\eta> \sqrt{8C_T(h,g)/KT^2}$, it holds that
\begin{align*}
&\quad \mathbb{P}\Big(\sup_{\pi\in\Pi} \big|\M_T( \pi)\big|\ge \eta T,
~\calB_T
\,\big|\, \bx\Big) 
\le  4\cdot\sup_{\z}\,\mathbb{P}_{\epsilon}
\Big(\sup_{\pi\in\Pi}\big| \sum^T_{t=1} \epsilon_t
f(h_t,X_t, z_t(\epsilon);\pi)\big|\ge \dfrac{\eta T}{4},
~\calB_T(\bx,\z)\,\Big|\, \bx,\z
\Big),
\end{align*}
where 
$\z$ is a $\calZ$-valued tree of depth $T$, and $\p_{\epsilon}$ denotes the distribution of the Rademacher random variables 
$\{\epsilon_t\}^T_{t = 1}$.
\end{lemma}}
The proof  is deferred to Appendix~\ref{appendix:proof_lemma_bound_empirical_process_prob}. 

\vspace{0.3cm}
\paragraph{Step 2: Connecting the Tree Rademacher Process with Policy Class Complexity.} 
We now proceed to connect the tail bound of the supremum of  tree Rademacher process with the complexity of the policy class.
Note that this Rademacher process is defined on a tree, and the Hamming distance in 
Definition~\ref{definition:hamming} cannot be directly applied here to characterize Rademacher complexity.
Alternatively we adopt the notion of distance between policies on the tree process defined in
\cite{rakhlin2015sequential} and modify it slightly for notational convenience in the proof.
\begin{definition}
\label{def:cover}
Given covariates $\bx$ and a 
tree $\z$ of depth $T$:
\begin{enumerate}[label = (\alph*)]
\item The $\ell_2$ distance between 
two policies $\pi_1,\pi_2\in\Pi$ 
w.r.t.~$\bx$, $\z$ and a sequence
of $\{\pm 1\}$-valued random variables
$\epsilon_{1:T}$ is defined as 
\[
\ell_2(\pi_1,\pi_2; \z, \bx, \epsilon_{1:T}) 
\stackrel{\Delta}{=}\sqrt{\frac{\sum_{t=1}^T 
\Big(f\big(X_t,z_t(\epsilon);\pi_1\big)
- f\big(X_t,z_t(\epsilon);\pi_2\big) \Big)^2}{\textcolor{black}{16\chg}}}.
\]

\item A set \textcolor{black}{$\mathcal{S}\subset\Pi$} is a (sequential)
$\eta$-cover of a policy class $\Pi$ under the $\ell_2$-distance 
w.r.t.~$\bx$ and $\z$, if for any $\pi\in\Pi$
and any
\textcolor{black}{$\epsilon_{1:T}\in \mathcal{B}_T(\mathbf{x},\z)$},
there exists some $s\in \mathcal{S}$ such that   
$\ell_2(s,\pi; \z,\bx,\epsilon_{1:T})\le \eta$.
    
\item The $\eta$-covering number of a policy class $\Pi$ 
w.r.t.~$\bx$ and $\z$ is defined as 
\begin{align*}
N_2(\eta, \Pi; \z, \bx) = 
\min\big\{|S|: S \mbox{ is an }\eta
\mbox{-cover}\mbox{ of }\Pi\mbox{ under the }
\ell_2 \mbox{ w.r.t.~}\z\mbox{ and } \bx\big\}.
\end{align*}
\end{enumerate}
\end{definition}
\vspace{0.3cm}
The
Hamming distance is connected with the $\ell_2$ distance as follows.
\begin{lemma}
\label{lemma:l2_distance} Under Assumption~\ref{assumption:dgp},
for any realization of covariates $\bx$, 
any tree $\z$ of depth $T$
and for any $\eta>0$, we have 
$N_2(\eta, \Pi; \z, \bx)\leq N_\ham(\eta^2, \Pi)$.
\end{lemma}
The proof of Lemma~\ref{lemma:l2_distance} is provided in Appendix \ref{appendix:l2_distance}.
We are now ready to bound the tail  of supremum of tree Rademacher process using the entropy integral defined under Hamming distance. 

\textcolor{black}{\begin{lemma}
\label{lemma:bound-Rademacher-complexity} 
Under Assumptions~\ref{assumption:dgp}, 
consider a realization of covariates $\bx$ 
and a tree $\z$ of depth $T$. Given any 
$\delta \in (0,1)$, 
\begin{align*}
&\PP_{\epsilon}\Big(\max_{\pi\in\Pi}\Bigbar{\sum_{t=1}^T 
\epsilon_t f(h_t,X_t, z_t(\epsilon); \pi)  } 
\ge \zeta ,~\calB_T(\bx,\z)\,\big|\, \bx,\z \Big) \le \delta,\\
 \mbox{where}\quad &\zeta =  24\sqrt{\chg} 
\Big( 2\sqrt{2} + 2\sqrt{2}\kappa(\Pi) + 
\sqrt{\log\bigbracket{5/(3\delta)}} + 1/\sqrt{T}\Big).
\end{align*}
\end{lemma}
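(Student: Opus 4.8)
The plan is to prove Lemma~\ref{lemma:bound-Rademacher-complexity} by a Dudley-type chaining argument carried out directly at the level of tail probabilities, conditional on the realization $x_{1:T}$ and the tree $\z$. Throughout, write $B \stackrel{\Delta}{=} 4M\sqrt{T}\,\|h/g\|_{T,\infty}$ for the normalizing constant in the denominator of the $\ell_2$ distance of Definition~\ref{def:cover}, and set $g_\pi(\epsilon) \stackrel{\Delta}{=} \sum_{t=1}^T \epsilon_t f(x_t, z_t(\epsilon);\pi)$. The two ingredients are (i) a sub-Gaussian increment bound for $g_{\pi_1}-g_{\pi_2}$ in terms of the $\ell_2$ distance, and (ii) the covering bound $N_2(\eta,\Pi;\z,x_{1:T})\le N_\ham(\eta^2,\Pi)$ from Lemma~\ref{lemma:l2_distance}, which lets me replace the (path-dependent) tree covering numbers by the Hamming covering numbers defining $\kappa(\Pi)$.

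First I would establish the sub-Gaussian increments. For fixed (deterministic) policies $\pi_1,\pi_2$, the difference $g_{\pi_1}(\epsilon)-g_{\pi_2}(\epsilon)=\sum_{t=1}^T \epsilon_t \Delta_t(\epsilon)$ with $\Delta_t(\epsilon)=f(x_t,z_t(\epsilon);\pi_1)-f(x_t,z_t(\epsilon);\pi_2)$. Since $z_t(\epsilon)=\z_t(\epsilon_1,\dots,\epsilon_{t-1})$ depends only on the first $t-1$ signs, each $\Delta_t$ is measurable w.r.t.\ $\mathcal{F}_{t-1}=\sigma(\epsilon_1,\dots,\epsilon_{t-1})$, so $\{\epsilon_t\Delta_t\}$ is a martingale difference sequence with the \emph{predictable} increment bound $|\epsilon_t\Delta_t|\le|\Delta_t|$. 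Its quadratic variation satisfies $\sum_t \Delta_t(\epsilon)^2 \le \sup_{\epsilon'}\sum_t \Delta_t(\epsilon')^2 = B^2\rho^2$, where $\rho\stackrel{\Delta}{=}\sup_{\epsilon'}\ell_2(\pi_1,\pi_2;\z,x_{1:T},\epsilon')$, a \emph{deterministic} bound. The conditional estimate $\bE[e^{\lambda\epsilon_t\Delta_t}\mid\mathcal{F}_{t-1}]\le e^{\lambda^2\Delta_t^2/2}$ then yields, by the standard Azuma--Hoeffding argument, $\p_\epsilon(|g_{\pi_1}-g_{\pi_2}|\ge u)\le 2\exp(-u^2/(2B^2\rho^2))$. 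This is precisely why Definition~\ref{def:cover} builds the worst case over all root-to-leaf paths into the $\ell_2$ distance.

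Next I would run the chaining. Since the normalization in Definition~\ref{def:cover} gives $\ell_2(\pi_1,\pi_2;\cdot)\le\sqrt{\ham(\pi_1,\pi_2;x_{1:T})}\le 1$ (the pointwise relation underlying Lemma~\ref{lemma:l2_distance}), the class has $\ell_2$-diameter at most $1$, so I take dyadic scales $\eta_j=2^{-j}$ and minimal $\eta_j$-covers $T_j$ with $|T_j|=N_2(\eta_j,\Pi;\z,x_{1:T})\le N_\ham(\eta_j^2,\Pi)$, using a single policy at $j=0$. Telescoping $g_\pi$ along the chain of nearest neighbors $\pi^{(0)},\pi^{(1)},\dots$ gives $g_\pi=g_{\pi^{(0)}}+\sum_j(g_{\pi^{(j+1)}}-g_{\pi^{(j)}})+(\text{residual})$, where the residual is bounded by $\sqrt{T}\,B\,\eta_N$ via Cauchy--Schwarz and contributes the $1/\sqrt{T}$ term once $\eta_N\lesssim 1/T$ (in fact it vanishes exactly once $\eta_N^2<1/T$, by discreteness of the Hamming distance). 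At level $j$, consecutive net points have $\ell_2$ distance $\le\eta_j+\eta_{j+1}$, and there are at most $|T_{j+1}|^2$ such pairs; applying the increment bound of Step~1 with a union bound and allocating the failure probability $\delta$ geometrically across levels yields a per-level threshold of order $B\eta_j\big(\sqrt{\log N_\ham(\eta_j^2,\Pi)}+\sqrt{\log(1/\delta)}+\sqrt{j}\big)$. Summing over $j$, the entropy terms form a Riemann sum bounding $B\int_0^1\sqrt{\log N_\ham(\eta^2,\Pi)}\,\diff\eta=B\kappa(\Pi)$, the $\sqrt{\log(1/\delta)}$ terms sum geometrically to $O(B\sqrt{\log(1/\delta)})$, and the base net $g_{\pi^{(0)}}$ together with the absolute constants produces the $2\sqrt{2\pi}$ term. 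Collecting these contributions, substituting $B=4M\sqrt{T}\|h/g\|_{T,\infty}$, and tracking the absolute constants gives the stated bound with the prefactor $12\sqrt{2}$.

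The main obstacle is that the tree structure rules out the usual shortcut of chaining $\bE_\epsilon[\max_\pi|g_\pi|]$ and then concentrating the supremum about its mean: flipping a single $\epsilon_t$ reroutes the entire subtree below depth $t$, so $\max_\pi|g_\pi|$ is \emph{not} a bounded-differences function of $(\epsilon_1,\dots,\epsilon_T)$ and McDiarmid's inequality does not apply. I therefore carry the high-probability analysis through the chaining itself, which is where the careful allocation of $\delta$ across levels must be done (the $\sqrt{\log(1/(3\delta))}$ reflects splitting the budget among the base term, the entropy terms, and the residual). The second delicate point, flagged above, is that increment magnitudes depend on the path $\epsilon$, so the Azuma variance proxy must be taken as the worst case over paths; Lemma~\ref{lemma:l2_distance} is exactly what converts this path-dependent, worst-case $\ell_2$ diameter into the path-independent Hamming covering number appearing in $\kappa(\Pi)$.
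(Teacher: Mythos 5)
Your proposal is correct and follows essentially the same route as the paper's proof: dyadic chaining under the tree $\ell_2$ metric, an Azuma/supermartingale tail bound for the tree-Rademacher increments with a pathwise quadratic-variation proxy, Lemma~\ref{lemma:l2_distance} to pass from tree covering numbers to Hamming covering numbers and hence $\kappa(\Pi)$, and a Cauchy--Schwarz bound on the finest-scale residual producing the $1/\sqrt{T}$ term. The only differences are bookkeeping: the paper resolves the path-dependence of the chain by introducing auxiliary zeroed increment trees $\w^{s,r}$ and allocates the failure probability across scales proportionally to $1/j^2$, whereas you work with sup-over-paths distances (which the proof of Lemma~\ref{lemma:l2_distance} supports, since its Hamming-based covering elements are chosen independently of the sign path) and a geometric allocation of $\delta$.
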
}

We refer readers to Appendix \ref{appendix:proof_bound-Rademacher-complexity} for the complete proof of 
Lemma~\ref{lemma:bound-Rademacher-complexity}. We here present the three key parts in our proof to provide some intuition. 

\begin{itemize}
    \item {\em Part I: Policy decomposition}. \label{step_1}
    Let
    $J=\lceil \log_2(T) \rceil$. Given $\epsilon=\epsilon_{1:T}$, we define a sequence of projection operators $A_0,A_1,\dots, A_J$,
    \textcolor{black}{where each $A_j$ maps a policy $\pi$ to its $j$-th approximation $A_j(\pi;\epsilon)$.}
    As $j$ increases from $0$ to $J$, the approximation becomes finer:  $\{A_0(\pi;\epsilon)\}$ is a 
    singleton indicating the coarsest approximation, and $A_J$ refers to the finest approximation. 
   The construction of such a sequence will be discussed in detail in  Appendix \ref{appendix:proof_bound-Rademacher-complexity}.  We decompose the tree 
   Rademacher process as follows: 
    \begin{align}
    \label{eq:process_decomposition_main}
    \sum_{t=1}^T \epsilon_t f\big(x_t, z_t(\epsilon); \pi\big) =&
    \underbracket[0.5pt]{\sum_{t=1}^T \epsilon_t \Big(\sum_{j=1}^{J}f\big(x_t,z_t(\epsilon); A_{j}(\pi;\epsilon)\big) 
    -f\big(x_t, z_t(\epsilon); A_{j-1}(\pi;\epsilon)\big)
    + f\big(x_t,z_t(\epsilon);A_0(\pi;\epsilon)\big)\Big)}_{\scriptsize \text{term (i): effective}}\nonumber \\
    & + \underbracket[.5pt]{\sum_{t=1}^T \epsilon_t \Big( f\big(x_t, z_t(\epsilon); \pi\big)-
    f\big(x_t,  z_t(\epsilon); A_{J}(\pi;\epsilon) \big)\Big)}_{{\scriptsize \text{term (ii): negligible}}}
    \end{align}
\textcolor{black}{Above, term (i) is of the order $\sqrt{\chg}$
 with high probability, and terms (ii) is negligible w.r.t.~the
    first term, which are shown separately in the following steps.}
    
    \item \emph{Part II: The Effective Term.} \label{step_3} 
    \textcolor{black}{Given  a realization of the covariates $\bx$ and a tree $\z$ of depth $T$,  for any $\delta>0$, with probability at least $1-\delta$, either $\epsilon\notin \calB_{T}(\bx,\z)$, or there is
    \begin{align*}
    &\max_{\pi\in\Pi}\Bigbar{\sum_{t=1}^T \epsilon_t \Bigbracket{ \sum_{j=1}^{\underline{J}}f\big( x_t, z_t(\epsilon); 
    A_{j}(\pi;\epsilon)\big) -f\big(x_t, z_t(\epsilon); A_{j-1}(\pi;\epsilon)\big) + f\big(x_t, z_t(\epsilon);A_0(\pi;\epsilon)\big)}  }\\
    &\qquad \qquad \qquad<  24\sqrt{\chg}\Big( 2\sqrt{2} + 2\sqrt{2}\kappa(\Pi) + 
    \sqrt{\log\bigbracket{5/(3\delta)}}\Big).
    \end{align*}
    where the probability is w.r.t.~the distribution of the Rademacher sequence $\epsilon$.}
    
\item \emph{Part III: The Negligible Terms.} \label{step_2}
\textcolor{black}{Term (ii) in \eqref{eq:process_decomposition_main}  is 
negligible with respect to $\sqrt{\chg}$ on the event $\calB_T(\bx,\z)$, and specifically,
\begin{align*}
\sup_{\pi\in\Pi}   
\bigg|\sum_{t=1}^T \epsilon_t \cdot \Big( f\big(x_t, z_t(\epsilon);\pi) 
-f\big(x_t, z_t(\epsilon); A_J(\pi;\epsilon)\big) \Big)\bigg| 
\le  4\sqrt{\chg/T}.
\end{align*}}
\end{itemize}

\endproof

\subsection{Uniform Concentration of Independent Difference Sequences} 

We now establish concentration results for $\sum_{t=1}^T h_t\big(Q(X_t,\pi)-Q(\pi)\big)$. We 
note that covariates $X_{1:T}$ are exogenous; thus $\{Q(X_t, \pi)\}_{t=1}^T$ are bounded \iid~random 
variables. We shall apply the standard toolkit for the uniform concentration of \iid~data.

\begin{lemma}
\label{lemma:iid_concentration}
Under Assumption \ref{assumption:dgp},  with probability at least $1-\delta$, 
\begin{equation*}
\label{eq:iid_concentration}
    \max_{\pi\in\Pi}~\Big|{\sum_{t=1}^Th_t\big(Q(X_t,\pi)-Q(\pi)\big)}\Big| 
    \leq 8\sqrt{2}M\textcolor{black}{\sqrt{\sum_{t=1}^T h_t^2}}\Big(13 + 4\kappa(\Pi) 
    + \sqrt{\log(1/\delta)} + 1/\sqrt{T} \Big).
\end{equation*}
\end{lemma}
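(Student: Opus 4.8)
The plan is to treat this as a standard i.i.d.\ empirical process bound, which is markedly simpler than the martingale part of the proof: since the covariates $X_{1:T}$ are exogenous, the summands $\{Q(X_t,\pi)\}_{t=1}^T$ are independent and bounded in $[-M,M]$ for every fixed $\pi$, so the classical symmetrization-plus-chaining toolkit applies directly. I would split the argument into (i) concentrating $F(X_{1:T})\stackrel{\Delta}{=}\max_{\pi\in\Pi}|\sum_{t=1}^T h_t(Q(X_t,\pi)-Q(\pi))|$ around its expectation, which yields the $\sqrt{\log(1/\delta)}$ term, and (ii) bounding $\mathbb{E}[F]$ by a constant multiple of $M\sqrt{T}\,\|h\|_{T,\infty}\,\kappa(\Pi)$, which yields the $\kappa(\Pi)$ term.

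For step (i) I would invoke the bounded-differences (McDiarmid) inequality. Because $Q(X_t,\pi)\in[-M,M]$, replacing a single coordinate $X_t$ by an independent copy perturbs $\sum_s h_s(Q(X_s,\pi)-Q(\pi))$ by at most $2M|h_t|$ uniformly in $\pi$, and since $F$ is a supremum of such sums it inherits the per-coordinate bounded difference $c_t=2M|h_t|$. With $\sum_{t=1}^T c_t^2\le 4M^2 T\|h\|_{T,\infty}^2$, McDiarmid gives $F\le\mathbb{E}[F]+M\|h\|_{T,\infty}\sqrt{2T\log(1/\delta)}$ with probability at least $1-\delta$, which is exactly the $\sqrt{\log(1/\delta)}$ contribution in the stated bound.

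For step (ii) I would symmetrize, using the standard inequality for independent data, to get $\mathbb{E}[F]\le 2\,\mathbb{E}_{X,\epsilon}\big[\max_{\pi\in\Pi}|\sum_{t=1}^T\epsilon_t h_t Q(X_t,\pi)|\big]$ for i.i.d.\ Rademacher $\epsilon_t$. Conditional on $X_{1:T}$, the Rademacher process $\pi\mapsto\sum_t\epsilon_t h_t Q(X_t,\pi)$ has sub-Gaussian increments under the (random) metric $\rho(\pi_1,\pi_2)=(\sum_t h_t^2(Q(X_t,\pi_1)-Q(X_t,\pi_2))^2)^{1/2}$. Since $|Q(X_t,\pi_1)-Q(X_t,\pi_2)|\le 2M\cdot\one\{\pi_1(X_t)\neq\pi_2(X_t)\}$, normalizing $\rho$ by $2M\sqrt{T}\|h\|_{T,\infty}$ makes it dominated by $\sqrt{\ham(\pi_1,\pi_2;X_{1:T})}$. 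This is the i.i.d.\ analogue of Lemma~\ref{lemma:l2_distance} and gives $N_2(\eta,\Pi;X_{1:T})\le N_\ham(\eta^2,\Pi)$ for the normalized covering number. Dudley's entropy integral then bounds the conditional Rademacher complexity by a constant times $M\sqrt{T}\|h\|_{T,\infty}\int_0^1\sqrt{\log N_\ham(u^2,\Pi)}\,\mathrm{d}u=M\sqrt{T}\|h\|_{T,\infty}\,\kappa(\Pi)$, using the change of variables matching the normalization and the definition of $\kappa(\Pi)$ in Definition~\ref{definition:hamming}. Assembling (i) and (ii) gives the claimed bound; the additive $1/\sqrt{T}$ term arises from the residual between the finest net in the chain and each policy itself (the lowest chaining layer), crudely of order $M\|h\|_{T,\infty}$, i.e.\ $1/\sqrt{T}$ relative to the $M\sqrt{T}\|h\|_{T,\infty}$ prefactor.

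I expect the only delicate points to be tracking the chaining constants (to land the explicit $13$, $4$, and $1/\sqrt{T}$) and making the metric-to-Hamming comparison precise for simplex-valued policies. Crucially, though, this lemma reuses the very same chaining architecture already developed for Lemma~\ref{lemma:bound-Rademacher-complexity} in a strictly easier regime: there is no tree and no vanishing propensity, so the analysis runs over ordinary i.i.d.\ symmetrization and only $\|h\|_{T,\infty}$---rather than $\|h/g\|_{T,\infty}$---enters. Consequently the genuinely hard work (the sequential/tree symmetrization and the policy-decomposition chaining) has already been carried out, and I would port it over with the propensity factors removed.
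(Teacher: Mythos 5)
Your proposal is correct and takes essentially the same route as the paper's proof: symmetrize to a conditional Rademacher process, bound its expectation via chaining against covering numbers dominated by the Hamming covering number (the paper hand-rolls Dudley's entropy integral through explicit approximation operators $A_0,\dots,A_J$, with the finest-net residual of order $M\|h\|_{T,\infty}$ producing the $1/\sqrt{T}$ term exactly as you describe), and apply McDiarmid's bounded-differences inequality to obtain the $\sqrt{\log(1/\delta)}$ deviation term. The only cosmetic differences are that you invoke Dudley as a packaged tool rather than redoing the dyadic union bounds, and you apply the concentration step before rather than after the expectation bound.
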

The proof is deferred to Appendix~\ref{appendix:iid_concentration} for details. At a high level, 
we shall decompose $\sum_{t=1}^T h_t\big(Q(X_t,\pi)-Q(\pi)\big)$ via a sequence of policy approximation
operators, and bound each difference with standard uniform concentration techniques for \iid~data.


\subsection{Expected Regret Upper Bound}
\textcolor{black}{We are now ready to put the pieces together and prove Theorem~\ref{theorem:upper_bound}. Suppose that Assumptions \ref{assumption:dgp} and \ref{assumption:weights} hold.
Consider a $T$ such that $L_T(h,g) \le 1/8$.
\begin{align*}
\EE\big[R(\hat{\pi})\big] = \EE\big[R(\hat{\pi}) \cdot \one \{\calB_T\}\big]
+ \EE\big[R(\hat{\pi})\cdot \one \{\calB_T^c\} \big]
\le &\EE\big[R(\hat \pi) \cdot \one\{\calB_T\}\big] + 2M \PP(\calB_T^c)\\
\le &\EE\big[R(\hat \pi) \cdot \one\{\calB_T\}\big] + 4ML_T(h,g),
\end{align*}
where the last inequality is a result of Lemma~\ref{lemma:high_probability_event}.
Next, using the decomposition introduced earlier,
\begin{align}
\label{eq:expected_regret}
\EE\big[R(\hat \pi) \cdot \one \{\calB_T\}\big] 
\le & 2\Big(\sum^T_{t=1} h_t\Big)^{-1} \cdot \Bigg\{ 
\EE\bigg[\sup_{\pi \in \Pi} \Big|\sum_{t=1}^T h_t \big(\hgamma_t(\pi) - Q(X_t;\pi)\big)\Big| \cdot \one \{\calB_T\}\bigg]\notag\\
&\qquad \qquad \qquad \qquad +\EE\bigg[\sup_{\pi \in \Pi} \Big|\sum^T_{t=1} h_t \big(Q(X_t;\pi) - Q(\pi)\big)\Big|\bigg]\Bigg\}.
\end{align}
For the first expectation,  
letting  $\bar{\zeta} = 24 \sqrt{\chg} \cdot (2\sqrt{2} + 2\sqrt{2} \kappa(\Pi) 
+ \sqrt{\log \Big(\frac{2M \sum^T_{t=1}h_t/g_t}{ \sqrt{\chg}}\Big)} + 1/\sqrt{T} )$, we have
\begin{align*}
\EE\Big[\sup_{\pi\in\Pi} |\M_T(\pi)| \cdot \one\{\calB_T\}\Big]
\le & 4\bar{\zeta} + \EE\Big[\sup_{\pi\in\Pi} \big|\M_T(\pi)\big|
\cdot \one\big\{\calB_T,\sup_{\pi\in\Pi}|\M_T|\ge 4\bar{\zeta} \big\}\Big] \\
\le & 4\bar{\zeta} + 4M\Big(\sum^T_{t=1}\frac{h_t}{g_t}\Big) \cdot 
\PP\big(\calB_T,\sup_{\pi\in\Pi} |\M_T|\ge 4\bar{\zeta}\big)\\
\le & 4\bar{\zeta} + 16 \sqrt{\chg},
\end{align*}
where in the last step we apply Lemma~\ref{lemma:bound_empirical_process_prob} and Lemma~\ref{lemma:bound-Rademacher-complexity}
with $\eta = 4\bar{\zeta} / T$ and $\delta = \frac{1}{M} \frac{\sqrt{\chg}}{\sum^T_{t=1} h_t/g_t}$ (it can be checked that 
$\eta > \sqrt{8\chg / (KT^2)}$).}

\textcolor{black}{For the second expectation, we take $\delta = \sqrt{\chg}/(M\sum^T_{t=1}h_t)$ in Lemma~\ref{lemma:iid_concentration} and get
\begin{align*}
& \EE\bigg[\sup_{\pi \in \Pi} \Big|\sum_{t=1}^T h_t\big(Q(X_t;\pi) - Q(\pi)\big) \Big|\bigg]\\
  \le & 8\sqrt{2}M \sqrt{\sum^T_{t=1} h_t^2} \cdot \bigg(13+4\kappa(\Pi)+\sqrt{\log\Big(\tfrac{M\sum^T_{t=1}h_t}{\sqrt{\chg}}\Big)} + 1/\sqrt{T}\bigg)
  +2\sqrt{\chg}.
\end{align*}}

\textcolor{black}{Summing the two terms up, we have
\begin{align*}
\eqref{eq:expected_regret} \le 
& \frac{M\sqrt{K\sum^T_{t=1}h_t^2/g_t}}{\sum^T_{t=1}h_t}
\cdot \bigg(2200+ 1900 \kappa(\Pi) + 630\sqrt{\log\Big(\tfrac{\sum^T_{t=1} h_t/g_t}{\sqrt{\sum^T_{t=1} h_t^2/g_t}}\Big)} + \frac{630}{\sqrt{T}}\bigg).
\end{align*}}

\textcolor{black}{The proof is hence completed.}

\endproof

\section{Simulations: Policy Learning with Decision Trees}
\label{sec:simulation}
In this section, we provide experimental evidence on the effectiveness of Algorithm \ref{algorithm}, using both synthetic datasets and  classification datasets from OpenML \citep{OpenML2013}. We investigate 1) how offline learning compares with its online counterpart when there is model misspecification; and 2) how different choices of weights $h_t$ influences the regret of offline-learned policy. Throughout the experiments, we use linear models to fit the nuisance estimator $\mu_t$ on data $\calH_{t-1}$.\footnote{Reproduction code can be found at \url{https://github.com/gsbDBI/PolicyLearning}.}

\paragraph{Policy Class.} \textcolor{black}{Exact policy learning via maximizing  policy value estimation  generally leads to a nonconvex optimazation problem and can be infeasible for arbitrary policy classes. We hereby focus on a policy class of decision trees with \emph{fixed} depth, which has a finite entropy integral \citep{zhou2022offline}. To learn the policy that maximizes generalized AIPW estimator, we apply a publicly available solver \texttt{PolicyTree} that finds the global optimum in polynomial runtime via an exhaustive and unconstrained tree search \citep{sverdrup2020policytree}.}
 Algorithm \ref{algorithm:policytree} adapts the software to our problem setting  with customized inputs, so that weights $h_t$ are incorporated in the  value estimator. 
\begin{algorithm}
\caption{Policy Learning via Generalized AIPW Estimator: Invoking \texttt{PolicyTree}}
\label{algorithm:policytree}
\textbf{Input:} dataset $\{(X_t, W_t, Y_t)\}_{t=1}^T$; weights $\{h_t\}_{t=1}^T$; decision tree depth $L$.

\For{$t=1,\dots, T$}
{\begin{enumerate}
\item Fit plug-in estimator $\hat{\mu}_t(\cdot;w)$ for $\mu(\cdot;w)$  using data $\{(X_s, W_s, Y_s)\}_{s=1}^{t-1}$ for all $w\in\calW$.
\item Construct the AIPW estimator $\widehat{\Gamma}_t(\pi)=\widehat{\mu}_t\big(X_t; \pi(X_t)\big) 
+ \frac{\one \{W_t=\pi(X_t)\}}{e_t(X_t;\pi(X_t))}\cdot \Big(Y_t - \widehat{\mu}_t\big(X_t; \pi(X_t)\big)\Big)$.
    \item 
Construct reweighted AIPW estimator $\Tilde{\Gamma}_t = \frac{h_t}{\sum_{s=1}^T h_s}\widehat{\Gamma}_t$.
\end{enumerate}
}

Call \texttt{PolicyTree} with input $\big(\{X_t\}_{t=1}^T, \{\Tilde{\Gamma}_t\}_{t=1}^T, L\big)$, and obtain the learned policy $\widehat{\pi}$.

\textbf{Return:} $\widehat{\pi}$.
\end{algorithm}
\paragraph{Data-Collection Agent.} At each time, the experimenter first computes each arm's preliminary assignment probabilities $\{\bar{e}_t(X_t;w)\}_{w\in\calW}$ based on past observations via a Linear Thompson sampling agent \citep{agrawal2013thompson}; then a lower bound $g_t=t^{-\alpha}/K$ (with $\alpha=0.5$) is imposed:  arms with  $\bar{e}_t(X_t;w)<g_t$  have assignment probability $e_t(X_t;w)=g_t$; others will be shrunk by setting $e_t(X_t;w)=g_t+c(\bar{e}_t(X_t;w)-g_t)$, where $c$ ensures $\sum_{w\in\calW}e_t(X_t;w)=1$. \textcolor{black}{This type of flooring scheme is a generalization of commonly-enforced overlap practice  in randomized controlled trials and has been increasingly used in adaptive experimentation. The floor allows for diminishing exploration on   suboptimal arms, but imposes a positive probability of sampling  each arm to facilitate post-experiment analyses (which often require non-zero assignment probabilities everywhere when using methods based on inverse probability weighting) \citep{offer2021optimal}.}

\subsection{Synthetic Data}
\label{section:synthetic}

We consider a contextual bandit problem with two arms. At each time, the experimenter observes a covariate $X_t\in \mathbb{R}^3$ that  is \iid~ sampled from Uniform$[-2,2]^3$.  The outcome model only depends on the first coordinate of the covariate: given  $x=(x_1, x_2, x_3)$, arm $1$ has conditional mean $\mu_1(x)=x_1^2$, and arm $2$ has conditional mean $\mu_2(x)=2-x_1^2$; see the left panel in Figure \ref{fig:synthetic} for illustration. The observed response is perturbed by \textbf{i.i.d.} standard Gaussian noise.


\begin{figure}
    \centering
    \includegraphics[width=\textwidth]{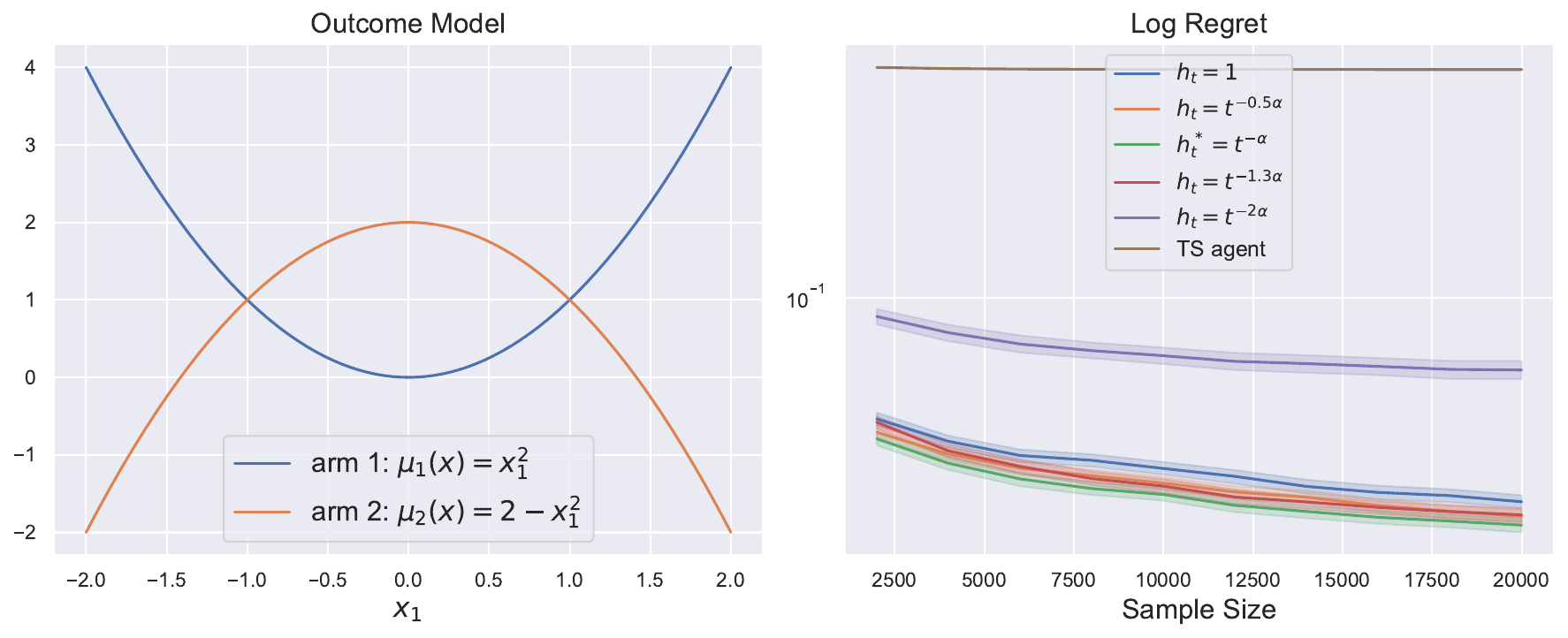}
    \caption{A synthetic example. A Thompson sampling agent collects data with assignment probability  lower bound $g_t=t^{-\alpha}/2$ ($\alpha=0.5$). The left panel demonstrates the arm outcome models, which  only depend on the first coordinate. The right panel shows the out-of-sample regrets of policy learned with different choices of weights $h_t$ and the data-collection agent. With model misspecification, the data-collection agent fails to learn the optimal policy and has the largest regret. Among different weights used for  offline  learning, the policy learned with optimal weights $h_t=t^{-\alpha}$ has the smallest regret.  Error bars are $95\%$ confidence intervals averages across $1000$ replications.}
    \label{fig:synthetic}
\end{figure}

To evaluate learned policies, we in addition sample $100,000$ observations (covariates and potential outcomes) from the same underlying distribution as test data to calculate the regret. The right panel in Figure \ref{fig:synthetic} demonstrates that given each sample size, the out-of-sample regrets (on the test data) of  the current data-collection agent (note that this agent updates its policy with growing sample size) and  of the policies obtained via Algorithm \ref{algorithm:policytree} with different choices of $h_t$.

We first compare online learning  with  offline learning  when there is model misspecification. Both the data-collection agent and the nuisance component $\hat{\mu}_t$ in the AIPW scores assume a linear outcome model, while the true $\mu$ is quadratic. The data-collection agent thus fails to learn the optimal policy; in fact, as shown in Figure \ref{fig:synthetic}, its out-of-sample regret decays much slower  than the regret of any policy learned via Algorithm \ref{algorithm:policytree}.

\textcolor{black}{Figure \ref{fig:synthetic} also demonstrates that performances of  different choices of weights are consistent with our analysis in Section \ref{section:case_study}, where the optimal weight $h^*_t=t^{-\alpha}$ (which minimizes the theoretical regret bound in Theorem \ref{theorem:upper_bound}) achieves the smallest regret. Besides,   weights $h_t=t^{-\beta}$ with $\beta<\frac{5\alpha+3}{8}$ yield  the same regret decay rate as $h_t^*$; this rate is faster than the regret obtained by setting $\beta= 2\alpha$, which choice of weights does not satisfy  Assumption \ref{assumption:weights} and thus does not have  guaranteed   regret decay in Theorem \ref{theorem:upper_bound}. 
These findings are aligned with our recommendation of weights in Algorithm \ref{algorithm}---when the assignment probability lower  bound $g_t$ is unknown, one can choose $h_t=1$ to achieve a reasonable regret decay rate, which is minimax optimal when $g_t$ decays slower than $\Theta(t^{-1})$.}

\subsection{Multi-class Classification Data}
\label{section:classification}
\begin{table}
    \centering
    \begin{tabular}{lcp{1cm}lcp{1cm}lc}
     \toprule
      Features   & Count & & Classes   & Count && Observations  & Count \\
     \midrule
     $<10$   & 24  & &  $2$   & 50 && $<5$k   & 22\\
     $\geq 10$, $<40$ & 46 &&$\geq 3$, $< 6$ & 21 && $\geq 5$k, $< 20$k & 20\\
     $\geq 40$ & 12 & &$\geq 6$ & 11  &&   $\geq 20$k & 40\\
  \bottomrule
    \end{tabular}
    \hfill
\caption{Characteristics of $82$ public OpenML datasets used for sequential classification in Section \ref{section:classification}.}
    \label{tab:dataset}
\end{table}

We adapt $82$ multi-class classification datasets from OpenML \citep{OpenML2013} into contextual bandit problems of sequential classification, following literature \cite{dudik2011doubly,dimakopoulou2017estimation,su2020doubly}. Specifically, each class represents an arm, and each feature vector denotes a covariate that is sampled uniformly from the data; the counterfactual outcomes of different arms  correspond to the one-hot encoding of the associated label. That is, we set the expected potential outcome for a given ``arm'' (a possible label) to  one if the arm is the same as the label, and zero otherwise. The observed outcome is perturbed with a standard Gaussian noise.  We can simulate an online learning algorithm with this artificial definition of arms and outcomes. 
In the experiment, we again use a floored Thompson sampling agent to collect data. See Appendix \ref{section:dataset} for the list of datasets. Table \ref{tab:dataset} summarizes the statistics of the datasets.

\begin{figure}
    \centering
    \includegraphics[width=\textwidth]{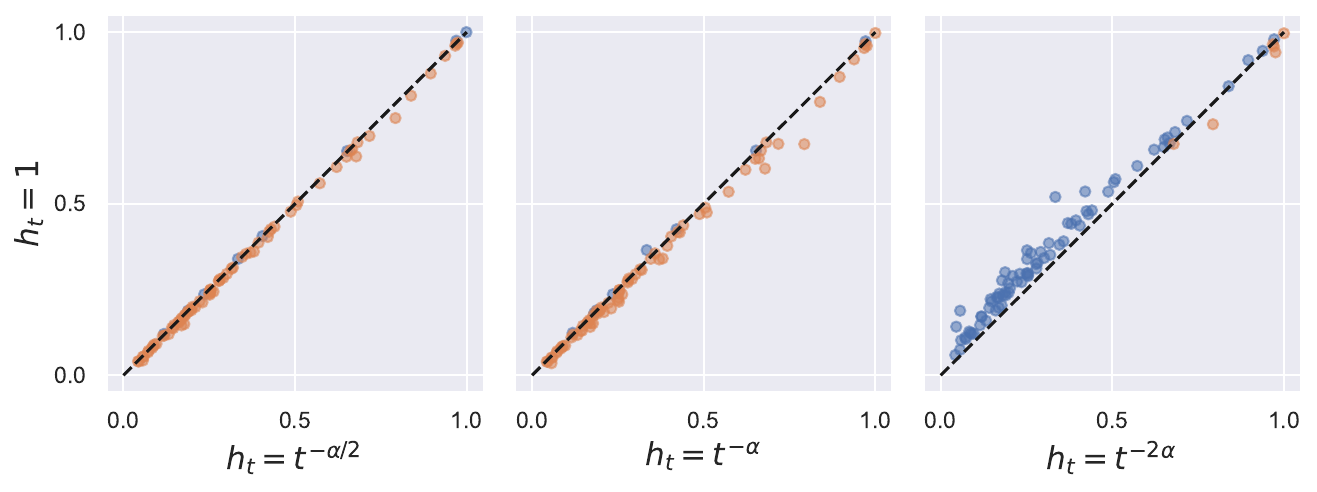}
    \caption{Regret of policy learned with different choices of weights $h_t$ on $82$ OpenML datasets. Each point denotes a dataset. A linear Thompson sampling agent with assignment probability lower bound $g_t=t^{-\alpha}/K$ ($\alpha=0.5$ and $K$ is number of classes/arms) conducts sequential classification for each dataset. The $x,y$-coordinates are the normalized regret of policies learned by setting $h_t=1$  and $h_t=t^{-\beta}$ with $\beta\in \{\frac{1}{2}\alpha, \alpha,  2\alpha \}$. Orange and blue points are those with $x<y$ and $x>y$ respectively. Policy learned with optimal weights $h_t=t^{-\alpha}$ performs best. }
    \label{fig:classification}
\end{figure}

\begin{table}
    \centering
    \begin{tabular}{p{6cm}cccc}
    \toprule
     Weighting    &  $h_t=1$ & $h_t=t^{-\alpha/2}$ & $h_t=t^{-\alpha}$ & $h_t=t^{-2\alpha}$  \\
    \midrule
       Averaged regret over $82$ datasets  & 0.313 & 0.307 & 0.301 & 0.353\\
       Median regret among $82$ datasets & 0.225 & 0.219 & 0.215 & 0.281\\
       Number of datasets on which such weighting achieves smallest regret & 5 & 10 & 64 & 3 \\
      \bottomrule
    \end{tabular}
    \caption{Summary statistics of regret of policy learned with different choices of weights $h_t$ on $82$ OpenML datasets. The assignment probability lower bound is $g_t=t^{-\alpha}/K$, where $K$ is the number of arms (classes). Optimal weighting with $h_t=t^{-\alpha}$ performs the best in terms of the average  and median of regrets over all datasets; it also achieves the smallest regret in most datasets ($64$ out of $82$.)}
    \label{tab:classification}
\end{table}

Figure \ref{fig:classification} demonstrates regret of policies learned with different choices of weights $h_t$ on various datasets. Particularly, we use uniform weights $h_t=1$ as a baseline, and compare it with other choices of weights $h_t=t^{-\beta}$ with $\beta=\{\alpha/2, \alpha, 2\alpha\}$. Each point represents a dataset, and its $x,y$ coordinates are normalized regrets using the uniform weights $h_t=1$ and the weights  $h_t=t^{-\beta}$, with the normalization term being the largest regret among all datasets.  One can see that the best performance is achieved at optimal weights $h_t=t^{-\alpha}$, while the worst one is given by weights $h_t=t^{-2\alpha}$.  We also observe that in the first two panels of Figure \ref{fig:classification}, setting $h_t=\{t^{-\alpha/2},t^{-\alpha}\}$  yields smaller regret than uniform weighting $h_t=1$ in most datasets (most points are in orange in these two panels), but the improvement is mild. \textcolor{black}{This again agrees with our analysis in Section \ref{section:case_study}, which shows   these three choices of weighting $h_t=\{1,t^{-\alpha/2},t^{-\alpha}\}$ have the same minimax regret decay  rate $O(T^{(\alpha-1)/{2}})$.} As such, we again recommend using uniform weighting when the assignment probability lower bound $g_t$ is unknown, as presented in Algorithm \ref{algorithm}.

We further provide  auxiliary summary statistics in Table \ref{tab:classification}, where for each weighting scheme, we list the average and median of regrets over all datasets; we also compute the number of datasets on which each weighting scheme achieves the smallest regret. Again, we find that optimal weighting with $h_t=t^{-\alpha}$ has the best performance.

\section{Discussion}
In this paper, we propose an approach to policy learning with adaptively collected data. Our main result is that, in the regime where assignment probabilities are bounded below by a nonincreasing positive sequence, one can leverage generalized AIPW estimators  to approximate policy value, and the policy that maximizes the estimates within a pre-specified class is asymptotically optimal. Our approach is built upon the semiparametric literature and does not require knowledge of outcome model. \textcolor{black}{Particularly, when equipped with the knowledge of lower bound on  assignment probabilities, our algorithm achieves  rate-optimal guarantees for minimax regret.}

A number of interesting research directions remain open. A natural extension is to adapt weights $h_t$ to a smaller subset of policy classes, or further, the policy currently being evaluated. Here we choose $h_t$ to offset the worst-case variance in the value estimation (and thus constant over the policy class and for any data realization); as a result, the effective sample size may not grow as fast as the actual sample size. However, if one can  adaptively choose $h_t$ with respect to the evaluated policy and the data collected,  as proposed in \cite{hadad2021confidence}, it may result in better empirical performance with improved effective sample size. Another potential line of research is to accommodate our offline learning framework to batch learning in the online setting. Most of the online learning literature that has established regret bounds is built upon functional form assumptions about the outcome model, while the generalized AIPW estimator does not require such knowledge, suggesting its potential to innovate the design of adaptive experiments.

%
%
%

\section*{Acknowledgements}
The authors would like to thank Vitor Hadad, David A. Hirshberg, Stefan Wager, and Ruoxuan Xiong for helpful discussions. The authors are also grateful for the generous support provided by Golub Capital Social Impact Lab. S.A. acknowledges generous support from the Office of Naval Research grant N00014-19-1-2468. R.Z. acknowledges generous support from the PayPal Innovation Fellowship.





\bibliographystyle{apalike}
\bibliography{reference}

\newpage
\appendix

\section{Auxiliary Definitions and Lemmas}
\label{appendix:addtional_lemmas}

\begin{definition}[Decoupled tangent sequence]
\label{def:dts}
Fix a sequence of random variables $\{Z_t\}_{t\geq 1}$ adapted to the filtration $\{\F_{t}\}_{t\geq 1}$. 
A sequence of random variables $\{Z'_t\}_{t\geq 1}$ is said to be a decoupled sequence tangent to $\{Z_t\}_{t\geq 1}$ 
if for each $t$, conditioned on $Z_1, \dots, Z_{t-1}$, the random variables $Z_t$ and $Z'_{t}$ are independent 
and identically distributed. 
\end{definition}

\textcolor{black}{We here provide more detail in constructing the decoupled tangent sequence.
First note that  $e_t$ and $\hat{\mu}_t$ are measurable w.r.t. $\F_{t-1}$, since they are fit with historical data $\calH_{t-1}$. 
We then construct $Z'_t = (W'_t, Y'_t, \{e_t(X_t;w)\}_{w\in\calW}, \{\hat{\mu}_t(X_t;w)\}_{w\in\calW})$, where $W'_t\in\calW$ is sampled from a categorical distribution specified by the assignment probabilities $\{e_t(X_t;w)\}_{w\in\calW}$; and $Y'_t=\mu(X_t; W'_t) +\epsilon'_t$ with $\epsilon'_t$ independently and identically distributed as $\epsilon_t$. 
By definition, the random variables $(Z'_t)_{t\geq 1}$ are conditionally independent given the $\big\{\big(\{e_t(X_t;w)\}_{w\in\calW}, \{\hat{\mu}_t(X_t;w)\}_{w\in\calW}\big)\big\}$, which can be used to induce the master $\sigma$-field in Proposition 6.1.5. of  \cite{victor1999decoupling}. Particularly, we note that  $(Z'_t)_{t\geq 1}$ are conditionally independent given $(Z_t)_{t\geq 1}$.}

\begin{definition}[$\tilde{\ell}_2$ distance]
\label{defn:iid_dist}
Given a realization of the covariates $x_{1:T}$, we define
\begin{enumerate}[label = (\alph*)]
        \item the inner product distance between two policies $\pi_1,\pi_2\in\Pi$ w.r.t.~$x_{1:T}$  as 
    \begin{equation*}
        \tilde{\ell}_2(\pi_1,\pi_2;x_{1:T}) \stackrel{\Delta}{=}\sqrt{\frac{\sum_{t=1}^T \big|h_tQ(x_t;\pi_1) - h_tQ(x_t;\pi_2)\big|^2}{ 4M^2\textcolor{black}{\sum_{t=1}^T h_t^2}}}.
    \end{equation*}
    \item a set $\mathcal{S}$ is a (sequential) $\eta$-cover of a policy class $\Pi$ under the $\tilde{\ell}_2$-distance 
    w.r.t.~$\{x_{1:T}\}$, if for any $\pi \in \Pi$, there exists some $s\in \mathcal{S}$ such that
    \begin{align*}
        \tilde{\ell}_2(s,\pi; x_{1:T})\le \eta.
    \end{align*}
    
    \item the covering number of a policy class $\Pi$ w.r.t.~$\{x_{1:T}\}$  as 
    \begin{align*}
    N_{\tilde{\ell}_2}(\eta, \Pi; 
    x_{1:T}) = \min\big\{|S|: S \mbox{ is an }\eta\mbox{-cover}\mbox{ of }\Pi\mbox{ under the }\tilde{\ell}_2 \mbox{ w.r.t.~} x_{1:T}\big\}.
    \end{align*}
\end{enumerate}
\end{definition}


\textcolor{black}{\begin{lemma}[Theorem 3.26 of \cite{bercu2015concentration}]
\label{lemma:bt_ineq}
Let $\{M_n\}_{n \ge 1}$ be a square integrable 
martingale w.r.t.~the filtration $\mathcal{F}_{n \ge 1}$
such that $M_0 = 0$. Define 
\begin{align*}
[M]_n \stackrel{\Delta}{=} \sum^n_{k=1} \big(M_k - M_{k-1}\big)^2,
\quad
\langle M \rangle_n \stackrel{\Delta}{=}
\sum^n_{k=1}\mathbb{E}\big[(M_k - M_{k-1})^2 \mid \mathcal{F}_{k-1}\big].
\end{align*}
Then for any positive $x$ and $y$,
\begin{align*}
\PP(M_n \ge x, [M]_n + \langle M \rangle_n \le y)
\le \exp\Big(-\frac{x^2}{2y}\Big).
\end{align*}
\end{lemma}}

\section{Proof of Main Lemmas}
\subsection{Proof of Lemma \ref{lemma:high_probability_event}}
\label{appendix:proof_of_high_probability_event}

\textcolor{black}{For notational simplicity, we define 
\[
\hgamma_t(X_t;w) = \hat{\mu}(X_t;w)
+ \frac{\one \{W_t = w\}}{e_t(X_t;w)}
\cdot \big(Y_t - \hat{\mu}(X_t;w)\big).
\]
Then for any $\pi \in \Pi$,
\begin{align}
\label{eq:mom_eq1}
&\PP\bigg(\sup_{\pi \in \Pi} \sum^T_{t=1} 
h_t^2 \cdot \big(\hgamma_t(\pi)
- Q(X_t;\pi)\big)^2 \ge \chg \bigg)
{\le} &\PP\bigg( \sum^T_{t=1} 
\sum_{w\in\calW}h_t^2\cdot \big(\hgamma_t(X_t;w)
- \mu(X_t;w)\big)^2 \ge \chg \bigg).
\end{align}
Next, we note that
\begin{align*}
\bE\Big[\big(\hgamma_t(X_t;w) - \mu(X_t;w)\big)^2 
\Biggiven \calH_{t-1}\Big]
= \bE\Big[\frac{\big(1 - e_t(X_t;w)\big)
\cdot \big(\hat{\mu}(X_t;w) - \mu(X_t;w)\big)^2 
+\sigma^2}{e_t(X_t;w)} \Biggiven \calH_{t-1}\Big]
\le \frac{5M^2}{g_t},
\end{align*}
and consequently,
\begin{align*}
\sum_{t=1}^T\sum_{w\in\calW}h_t^2\bE\Big[\big(\hgamma_t(X_t;w) - \mu(X_t;w)\big)^2 
\Biggiven \calH_{t-1}\Big] \leq \frac{\chg}{2}.
\end{align*}
Letting $D_t(X_t) \stackrel{\Delta}{=} 
\sum_{w\in\calW} \big(\hgamma_t(X_t;w) - \mu(X_t;w)\big)^2$,
we have
\begin{align}\label{eq:mom_eq2}
\eqref{eq:mom_eq1}
\le & 
\PP\bigg(\sum^T_{t=1} h_t^2 \cdot \Big(D_t(X_t) - \EE\big[D_t(X_t) \given \calH_{t-1}\big]\Big) \ge \frac{\chg}{2}\bigg)\notag\\
\le & \frac{4}{\chg^2} 
\EE\Bigg[\bigg|\sum^T_{t=1} h_t^2 \cdot \Big(D_t(X_t) - \EE\big[D_t(X_t) \given \calH_{t-1}\big]\Big)\bigg|^{2}\Bigg]\notag\\
= & \frac{4}{\chg^{2}} \cdot 
\sum^T_{t=1} h_t^4 \cdot \EE\Big[\textnormal{Var}\big(D_t(X_t) \given \calH_{t-1}\big)\Big],
\end{align}
where in the last inequality we use the property of a martingale.
Above, the variance term can be further bounded as
\begin{align*}
\EE\bigg[\Big|D_t(X_t) - \EE\big[D_t(X_t) \given \calH_{t-1}\big] \Big|^2\bigg]
\stackrel{\rm (i)}{\le} & 2 \cdot \Big(\EE\big[D_t(X_t)^{2}\big] 
+ \EE\big[\EE[D_t(X_t)\given \calH_{t-1}]^{2}\big]\Big)\\
\stackrel{\rm (ii)}{\le} & 4 \EE\big[D_t(X_t)^{2}\big]\\
= & 4 \EE\bigg[\Big(\sum_{w \in \calW} \big(\hgamma_t(X_t;w) - \mu(X_t;w)\big)^2\Big)^2\bigg]\\
\stackrel{\rm (iii)}{\le} & 4K \sum_{w \in \calW} \EE\Big[\big(\hgamma_t(X_t;w) - \mu(X_t;w)\big)^4\Big]\\
\le & \frac{48K^2M^4}{g_t^{3}},
\end{align*}
where step (i) is because $(a+b)^2 \le 2(a^2+b^2)$;
step (ii) is due to Jensen's inequality and 
step (iii) follows from the Cauchy-Schwarz inequality.
Combining everything, we have
\begin{align*}
  \eqref{eq:mom_eq2} \le 192K^2M^4 \cdot \frac{\sum^T_{t=1} h_t^4/g_t^3}{\chg^2}
  \le 2 \cdot \frac{\sum^T_{t=1}h_t^4 /g_t^3}{\big(\sum^T_{t=1} h_t^2/g_t\big)^2}.
\end{align*}}

\subsection{Proof of Lemma \ref{lemma:bound_empirical_process_prob}}
\label{appendix:proof_lemma_bound_empirical_process_prob}

Recall that \textcolor{black}{$Z_t=(W_t,Y_t, \{ e_t(X_t;w)\}_{w\in\calW}, \{\hat{\mu}_t(X_t;w)\}_{w\in\calW})$}.
Let $Z_1', Z_2',\ldots,Z_T'$ be a decoupled sequence tangent 
to $Z_1,Z_2,\ldots,Z_T$ conditional on 
$X_{1:T}$ as defined in Definition~\ref{def:dts}, 
where \textcolor{black}{$Z_t' = (W_t',Y_t',  \{ e_t(X_t;w)\}_{w\in\calW}, \{\hat{\mu}_t(X_t;w)\}_{w\in\calW})$}.
By Chebychev's inequality, for any $\eta > 0$
\begin{align*}
\mathbb{P} \bigg(\Big|\sum^T_{t=1} 
f(h_t,X_t, Z_t';\pi) - h_t \cdot Q(X_t;\pi)\Big|
\ge \frac{T \eta}{2} \Biggiven Z_{1:T},X_{1:T}\bigg) \le &
\dfrac{\mathbb{E}\Big[\Big(\sum^T_{t=1}
f(h_t,X_t,Z_t';\pi) -h_t \cdot Q(X_t;\pi)\Big)^2 
\biggiven Z_{1:T},X_{1:T}\Big]}{T^2\eta^2/4}.
\end{align*}
Conditional on $Z_{1:T}$ and $X_{1:T}$, 
$Z_s'$ is independent of $Z_t'$ for 
any $t\neq s$. Consequently, 
\begin{align*}
\mathbb{E} \bigg[\Big(\sum^T_{t=1} 
f(h_t,X_t,Z_t';\pi)-h_t \cdot Q(X_t;\pi) \Big)^2 
\biggiven Z_{1:T},X_{1:T}\bigg] 
= & \sum^T_{t=1} \mathbb{E} 
\bigg[\Big(f(h_t,X_t, Z_t';\pi) - h_t Q(X_t;\pi)\Big)^2 
\Biggiven Z_{1:T},X_{1:T} \bigg].
\end{align*}
For a fixed $t\in[T]$, we expand the summand as
\textcolor{black}{\begin{align*}
&\bE\Big[\big(f(h_t,X_t, Z_t';\pi) - h_t \cdot Q(X_t;\pi) \big)^2
\biggiven Z_{1:T},X_{1:T} \Big] \notag \\
= & h_t^2 \cdot \bE\bigg[\Big(\hat{\Gamma}'_t\big(X_t;\pi(X_t)\big) - \mu\big(X_t,\pi(X_t)\big) \Big)^2 
\biggiven Z_{1:T},X_{1:T}\bigg] \nonumber\\
= & h_t^2 \cdot \bE\Bigg[\bigg(\frac{\one \{W'_t = \pi(X_t)\}}{e_t(X_t;\pi(X_t))} 
\cdot \big(Y'_t - \mu(X_t,\pi(X_t))\big) \notag\\
  &\qquad \qquad \qquad+ \Big(1 - \frac{ \one \{W'_t = \pi(X_t)\}}{e_t(X_t;\pi(X_t))}\Big)
\cdot \Big(\hat{\mu}_t\big(X_t;\pi(X_t)\big) - \mu\big(X_t,\pi(X_t)\big)\Big) \bigg)^2 
\biggiven Z_{1:T},X_{1:T}\Bigg] \nonumber\\
    = & \frac{h_t^2 \sigma^2}{e_t(X_t;\pi(X_t))} + \frac{h_t^2\big(1-e_t(X_t;\pi(X_t))\big) 
    \big(\hat{\mu}_t(X_t;\pi(X_t)) - \mu(X_t,\pi(X_t))\big)^2}{e_t(X_t;\pi(X_t))}\notag\\
      \le & \frac{5M^2h_t^2}{g_t^2}.
\end{align*}
Combining the above calculation, we have that
\begin{align}
\label{eq:bound_tail}
    \mathbb{P}  \bigg(\Big|\sum^T_{t=1} f(h_t,X_t, Z_t';\pi) - h_t Q(X_t;\pi)\Big| \ge \frac{T \eta}{2} \Biggiven Z_{1:T},X_{1:T}\bigg) 
    \le & \dfrac{20M^2 \cdot \sum_{t=1}^Th_t^2/g_t}{\eta^2T^2} \le \frac{2C_T(h,g)}{K\eta^2T^2}.
\end{align}
Similar to the proof of Lemma~\ref{lemma:high_probability_event}, we can bound the quadratic variation.
\begin{align}\label{eq:second_order_markov}
&\PP\bigg(\sup_{\pi \in \Pi} 
\sum^T_{t=1} \Big(f(h_t,X_t,Z_t';\pi) - h_t Q(X_t; \pi)\Big)^2  
\ge \chg \,\Big|\, X_{1:T},Z_{1:T}\bigg)\notag\\
\le  & \PP\bigg(
  \sum^T_{t=1}\sum_{w\in\calW} h_t^2 \cdot \Big(
\widehat{\Gamma}\big(X_t;w\big) - \mu\big(X_t;w\big) \Big)^2 \ge \chg
\Biggiven X_{1:T},Z_{1:T}\bigg).
\end{align}}

\textcolor{black}{Next, since the conditional expectation of 
$\sum_{w\in\calW}\big(\widehat{\Gamma}'(X_t;w) - \mu(X_t;w)\big)^2$
can be bounded deterministically as below:
\begin{align*}
\sum^T_{t=1}\sum_{w\in\calW}
h_t^2 \cdot \bE\Big[\big(\hgamma'(X_t;w) - \mu(X_t;w)\big)^2 
\biggiven X_{1:T},Z_{1:T}\Big]
\le \sum^T_{t=1} \sum_{w\in\calW}
\frac{5M^2 h_t^2}{e_t(X_t,w)}
\le \frac{\chg}{2},
\end{align*}
we subsequently have
\begin{align}
\label{eq:second_order_step2}
&\eqref{eq:second_order_markov}
\le \PP \bigg\{\sum^T_{t=1} \sum_{w \in \calW}
h_t^2 \Big(\big(\hgamma'^2(X_t;w) - \mu(X_t;w)\big)^2
- \bE\big[(\hgamma'(X_t;w) - \mu(X_t;w))^2 
\given X_{1:T},Z_{1:T}\big] \Big) \ge \frac{\chg}{2}
\Biggiven X_{1:T},Z_{1:T}\bigg\}\notag\\
\le & \frac{4}{\chg^2}
\cdot \bE\Bigg[\bigg(
\sum^T_{t=1} \sum_{w \in \calW}
h_t^2 \Big(\big(\hgamma'^2(X_t;w) - \mu(X_t;w)\big)^2
- \bE\big[(\hgamma'(X_t;w) - \mu(X_t;w))^2 
\given X_{1:T},Z_{1:T}\big] \Big)\bigg)^2
\Biggiven X_{1:T},Z_{1:T} \Bigg]\notag\\
= & \frac{4}{\chg^2}
\cdot \sum^T_{t=1} \bE\Bigg[\bigg(
\sum_{w \in \calW}
h_t^2 \Big(\big(\hgamma'^2(X_t;w) - \mu(X_t;w)\big)^2
- \bE\big[(\hgamma'(X_t;w) - \mu(X_t;w))^2 
\given X_{1:T},Z_{1:T}\big] \Big)\bigg)^2
\Biggiven X_{1:T},Z_{1:T} \Bigg] \notag \\
\le &\frac{4K}{\chg^2}
\cdot \sum^T_{t=1} \sum_{w \in \calW}
h_t^4\bE\bigg[
\Big(\big(\hgamma'^2(X_t;w) - \mu(X_t;w)\big)^2
- \bE\big[(\hgamma'(X_t;w) - \mu(X_t;w))^2 
\given X_{1:T},Z_{1:T}\big] \Big)^2 
\Biggiven X_{1:T},Z_{1:T} \bigg],
\end{align}
where the equality is due to the
independence between $Z'_{1:T}$ conditional
on $X_{1:T}$ and $Z_{1:T}$. For a given $t\in [T]$
and $w\in\calW$,
\begin{align*}
&\bE\bigg[
\Big(\big(\hgamma'^2(X_t;w) - \mu(X_t;w)\big)^2
- \bE\big[(\hgamma'(X_t;w) - \mu(X_t;w))^2 
\given X_{1:T},Z_{1:T}\big] \Big)^2 
\Biggiven X_{1:T},Z_{1:T} \bigg] \\
= & \textnormal{Var}
\Big(\big(\hgamma'(X_t;w) - \mu(X_t;w))^2
\given X_{1:T},Z_{1:T} \Big) 
\le \bE\Big[\big(\hgamma'(X_t;w) - 
\mu(X_t;\mu)\big)^4\biggiven X_{1:T},Z_{1:T} \Big]
\le \frac{48M^4}{e_t^3(X_t,w)}.
\end{align*}
Consequently, we have
\begin{align}
\label{eq:second_order_step3}
\eqref{eq:second_order_step2}
\le & \frac{192 KM^4}{\chg^2} \sum^T_{t=1}
\sum_{w\in\calW} \frac{h_t^4}{e_t^3(X_t;w)}
\le \frac{192 K^2M^4}{\chg^2} \sum^T_{t=1}
\frac{h_t^4}{g_t^3}.
\end{align}
With condition that $(\sum^T_{t=1} h_t^4/g_t^3)/(\sum^T_{t=1}h_t^2/g_t)\leq 1/8$,
 the right-hand side 
of~\eqref{eq:second_order_step3} is bounded by $1/4$. 
Choose $\eta > \sqrt{8C_T(h,g)/KT^2}$, then the right-hand side of \eqref{eq:bound_tail} is bounded by $1/4$. Collectively,  
we have for any $\pi\in \Pi$,
\begin{align*}
\mathbb{P}  \bigg(\Big|\sum^T_{t=1} f(h_t,X_t, Z_t';\pi) - 
h_t \cdot Q(X_t; \pi)\Big| < \frac{T \eta}{2},~
\sup_{\tilde{\pi} \in \Pi}\sum^T_{t=1} 
\big(f(h_t,X_t,Z_t';\tilde{\pi}) - h_t Q(X_t; \tilde{\pi})\big)^2  
< \chg \Biggiven Z_{1:T}, X_{1:T}\bigg) \ge \frac{1}{2},
\end{align*}
Given $Z_1,\ldots,Z_T$ and $X_1,\ldots,X_T$, let $\pi^*$ denote the policy that maximizes 
$\big|\sum^T_{t=1} f(h_t,X_t,Z_t;\pi)- h_t Q(X_t,\pi)\big|$, from the above we have,
\begin{align*}
\PP \bigg( & \Big|\sum^T_{t=1} f(h_t,X_t,Z_t';\pi^*) 
- h_t Q(X_t, \pi^*)\Big| < \frac{T \eta}{2},\\
& \qquad \qquad \qquad
\sup_{\pi \in \Pi}\sum^T_{t=1} 
\big(f(h_t,X_t,Z'_t;\pi) 
- h_t \cdot Q(X_t;\pi) \big)^2 
< \chg \Biggiven Z_{1:T},X_{1:T} \bigg) 
\ge \frac{1}{2}.
\end{align*}
Define the events 
\begin{align*}
&A = \bigg\{\sup_{\pi\in\Pi} 
\Big|\sum^T_{t=1} f(h_t,X_t,Z_t;\pi) - h_t \cdot Q(X_t; \pi)\Big| \ge \eta T \bigg\},\\
&B = \bigg\{ \sup_{\pi \in \Pi} \sum^T_{t=1}
\Big(f(h_t,X_t,Z_t;\pi) - h_t Q(X_t;\pi)\Big)^2 \leq \chg
\bigg\}.
\end{align*}
By the tower property, we have that
\begin{align*}
\frac{1}{2} \le \mathbb{P}  
\bigg(\Big|\sum^T_{t=1} f(h_t,X_t,Z_t';\pi^*) - 
h_t \cdot Q(X_t, \pi^*)\Big| < \frac{T \eta}{2},
~\sup_{\pi \in \Pi}\sum^T_{t=1} \Big(f(h_t,X_t,Z'_t;\pi) - h_t \cdot Q(X_t;\pi) \Big)^2 
< \chg
\Biggiven A\cap B, X_{1:T}\bigg).
\end{align*}
This implies that
\begin{align*}
&\dfrac{1}{2} \mathbb{P}\bigg( 
\Big|\sum^t_{t=1}h_t\cdot \big(\hat{\Gamma}_t(X_t;\pi^*) -
Q(X_t;\pi^*)\big) \Big|\ge \eta T,
~\sup_{\pi\in\Pi} \sum^T_{t=1} h_t^2 \cdot \big(f(h_t,X_t,Z_t;\pi) - Q(X_t;\pi)\big)^2
\le \chg
\Biggiven X_{1:T} \bigg)\\
\le & \mathbb{P}  \bigg(\Big|\sum^T_{t=1} f(h_t,X_t,Z_t';\pi^*) - 
h_tQ(X_t,\pi^*)\Big| < \frac{T \eta}{2},
~\sup_{\pi \in \Pi}\sum^T_{t=1} \big(f(h_t,X_t,Z'_t;\pi) 
- h_t \cdot Q(X_t;\pi) \big)^2 
\le \chg
\Biggiven A\cap B ,X_{1:T}\bigg)\\
&~~~~\times \mathbb{P}\bigg(\Big|\sum^T_{t=1}f(h_t, X_t, Z_t;\pi^*) - h_t \cdot Q(X_t; \pi^*) \Big|\ge \eta T
,~ \sup_{\pi \in \Pi}\sum^T_{t=1} \Big(f(h_t,X_t,Z_t;\pi) - h_t \cdot Q(X_t;\pi)\Big)^2
\le \chg
\Biggiven X_{1:T}\bigg)\\
= & \mathbb{P}  \bigg(\Big|\sum^T_{t=1} f(h_t,X_t,Z_t';\pi^*) - 
h_tQ(X_t,\pi^*)\Big| < \frac{T \eta}{2},
~\sup_{\pi \in \Pi}\sum^T_{t=1} \big(f(h_t,X_t,Z'_t;\pi) 
- h_t \cdot Q(X_t;\pi) \big)^2 
\le \chg,\\
&~~~~\Big|\sum^T_{t=1}f(h_t, X_t, Z_t;\pi^*) - h_t  Q(X_t; \pi^*) \Big|\ge \eta T
,~ \sup_{\pi \in \Pi}\sum^T_{t=1} \Big(f(h_t,X_t,Z_t;\pi) - h_t \cdot Q(X_t;\pi)\Big)^2
\le \chg
\Biggiven X_{1:T}\bigg).
\end{align*}
The above can be further bounded by
\begin{align}
\label{eq:sym_step1}
\PP\bigg(& \Big|\sum^T_{t=1}f(h_t,X_t,Z_t;\pi^*) - f(h_t,X_t,Z'_t;\pi^*) \Big|
\ge \frac{\eta T}{2},
~ \sup_{\pi \in \Pi}\sum^T_{t=1} 
\big(f(h_t,X_t,Z_t;\pi) - h_tQ (X_t;\pi)\big)^2 \le \chg, \notag\\
& \qquad \qquad \sup_{\pi \in \Pi}\sum^T_{t=1} 
\big(f(h_t,X_t,Z_t';\pi) - h_t  
Q(X_t;\pi)\big)^2 \le \chg
\Biggiven X_{1:T}\bigg)\notag\\
\le \PP\bigg(& \sup_{\pi \in \Pi}\Big|\sum^T_{t=1}f(h_t,X_t,Z_t;\pi) - f(h_t,X_t,Z'_t;\pi) \Big|\ge \frac{\eta T}{2},
~ \sup_{\pi \in \Pi}\sum^T_{t=1} \big(f(h_t,X_t,Z_t;\pi) 
- h_t  Q(X_t;\pi) \big)^2\le \chg,\notag\\
& \qquad \qquad\sup_{\pi \in \Pi}\sum^T_{t=1} 
\big(f(h_t,X_t,Z_t';\pi) - h_t  Q(X_t;\pi)\big)^2 \le \chg
\Biggiven X_{1:T}  \bigg) \notag\\
  \le \PP\bigg(& \sup_{\pi \in \Pi}\Big|\sum^T_{t=1}f(h_t,X_t,Z_t;\pi) - f(h_t,X_t,Z'_t;\pi) \Big|\ge \frac{\eta T}{2},\notag\\
  & \qquad \qquad \qquad   \sup_{\pi \in \Pi}\sum^T_{t=1} \big(f(h_t,X_t,Z_t;\pi) - h_tQ(X_t;\pi)\big)^2  
+ \big(f(h_t,X_t,Z_t';\pi) - h_t Q(X_t;\pi)\big)^2  \le 2\chg
\Biggiven X_{1:T} \bigg) 
\end{align}
Since conditional on $Z_{1:T-1}$ and $X_{1:T}$, $Z_T$ is independent of and identically distributed as $Z_T'$,
\begin{align*}
\eqref{eq:sym_step1} 
= & \mathbb{E}_{\epsilon_T} \Bigg[\mathbb{P}_{Z_T,Z_T'}
\bigg(\sup_{\pi \in \Pi}\Big|\sum^{T-1}_{t=1} 
f(h_t,X_t,Z_t';\pi)-f(h_t,X_t,Z_t;\pi) + \epsilon_T \cdot
\big(f(h_t,X_T,Z_T';\pi) - f(h_t,X_T,Z_T;\pi)\big) \Big| 
> \frac{T\eta}{2},\\
& \qquad 
\sup_{\pi \in \Pi}\sum^T_{t=1} 
\big(f(h_t,X_t,Z_t;\pi) - h_t \cdot Q(X_t;\pi)\big)^2  
+ \big(f(h_t,X_t,Z_t';\pi) - h_t \cdot Q(X_t;\pi) \big)^2 \le 2\chg
\Biggiven Z_{1:T-1},Z'_{1:T-1},X_{1:T}\bigg) \Bigg]\\
\le & \sup_{z_T,z_T'} \mathbb{E}_{\epsilon_T} 
\Bigg[\mathbf{1}\bigg\{\sup_{\pi \in \Pi}\Big|\sum^{T-1}_{t=1} 
f(h_t,X_t,Z_t';\pi)-f(h_t,X_t,Z_t;\pi) + \epsilon_T 
\big(f(h_t,X_T,z_T';\pi) - f(h_t,X_T,z_T;\pi) \big) \Big| > \frac{T\eta}{2} \bigg\} \Bigg]\\
&\qquad  \qquad \qquad 
\times \mathbf{1}\bigg\{\sup_{\pi \in \Pi}\sum^T_{t=1} 
\big(f(h_t,X_t,Z_t;\pi) - h_t \cdot Q(X_t;\pi) \big)^2  
+ \big(f(h_t,X_t,Z_t';\pi) - h_tQ(X_t;\pi) \big)^2 \le 2\chg\bigg\},
\end{align*}
where we use Fubini's theorem in the inequality. Continue doing this for $T-1$, we have that
\begin{align*}
& \mathbb{P}\bigg(\sup_{\pi \in \Pi}
\Big|\sum^T_{t=1} f(h_t,X_t,Z_t';\pi)-f(h_t,X_t,Z_t;\pi) \Big| > \frac{T\eta}{2},\\
& \qquad \qquad \sup_{\pi \in \Pi}
\sum^T_{t=1} \big(f(h_t,X_t,Z_t;\pi) - h_t \cdot Q(X_t;\pi)\big)^2  
+ \big(f(h_t,X_t,Z_t';\pi) - h_t \cdot Q(X_t;\pi)\big)^2 \le 2\chg
\Biggiven Z_{1:T-2},Z'_{1:T-2},X_{1:T}\bigg) \\
= & \bE\Bigg[\p\bigg(\sup_{\pi \in \Pi}
\Big|\sum^T_{t=1} f(h_t,X_t,Z_t';\pi)-f(h_t,X_t,Z_t;\pi)) \Big| > \frac{T\eta}{2},
~\sup_{\pi \in \Pi}\sum^T_{t=1} \Big(f(h_t,X_t,Z_t;\pi) - h_tQ(X_t;\pi)\Big)^2\\ 
&\qquad \qquad \qquad + \Big(f(h_t,X_t,Z_t';\pi) - h_tQ(X_t;\pi)\Big)^2 \le 2\chg
\Biggiven Z_{1:T-1},Z'_{1:T-1},X_{1:T}\bigg)\Biggiven Z_{1:T-2},Z'_{1:T-2},X_{1:T}\Bigg]\\
\le& \bE\Bigg[\sup_{z_T,z_T'}\mathbb{E}_{\varepsilon_T} 
\bigg[\mathbf{1} \Big\{\sup_{\pi \in \Pi}\Big|\sum^{T-1}_{t=1} 
f(h_t,X_t,Z_t';\pi)-f(h_t,X_t,Z_t;\pi) 
+ \epsilon_T \big(f(h_t,X_T,z_T';\pi) - f(h_t,X_T,z_T;\pi) \big) \Big| 
> \frac{T\eta}{2} \Big\} \bigg]\\
&\qquad \times \mathbf{1} \bigg\{ 
\sup_{\pi \in \Pi}\sum^T_{t=1} \Big(f(h_t,X_t,Z_t;\pi) - h_tQ(X_t;\pi)\Big)^2  
+ \Big(f(h_t,X_t,Z_t';\pi) - h_tQ(X_t;\pi)\Big)^2 \le 2\chg \bigg\}
\Biggiven Z_{1:T-2},Z'_{1:T-2},X_{1:T}\Bigg]\\
\le &\sup_{z_{T-1},z'_{T-1}}\bE_{\epsilon_{T-1}}\sup_{z_T,z_T'}\mathbb{E}_{\epsilon_T} \Bigg[\mathbf{1}\bigg\{
\sup_{\pi \in \Pi}\Big|\sum^{T-2}_{t=1} f(h_t,X_t,Z_t';\pi)-f(h_t,X_t,Z_t;\pi) + \sum^T_{t=T-1}\epsilon_t \big(f(h_t,X_t,z_t';\pi) - 
f(h_t,X_t,z_t;\pi) \big) \Big| > \frac{T\eta}{2}\bigg\} \Bigg]\\
&\qquad \times \mathbf{1} \bigg\{ 
\sup_{\pi \in \Pi}\sum^T_{t=1} \Big(f(h_t,X_t,Z_t;\pi) - h_tQ(X_t;\pi)\Big)^2  
+ \Big(f(h_t,X_t,Z_t';\pi) - h_tQ(X_t;\pi)\Big)^2 \le 2\chg \bigg\}.
\end{align*}
Repeating the above steps for $T-2,\ldots,1$, we have that
\begin{align*}
\eqref{eq:sym_step1} \le &
\sup_{z_1,z'_1}\mathbb{E}_{\epsilon_1}\sup_{z_2,z_2'}\mathbb{E}_{\epsilon_2}\cdots 
\sup_{z_T,z_T'}\mathbb{E}_{\epsilon_T}\mathbf{1}\bigg\{\sup_{\pi \in \Pi}\Big|\sum^T_{t=1}\epsilon_t
\big(f(h_t,X_t,z_t';\pi)-f(h_t,X_t,z_t;\pi)\big) \Big| > \frac{T\eta}{2}\bigg\} \\ 
&\qquad \times \mathbf{1} \bigg\{ 
\sup_{\pi \in \Pi}\sum^T_{t=1} 
\big(f(h_t,X_t,z_t;\pi) - h_tQ(X_t;\pi)\big)^2  
+ \big(f(h_t,X_t,z_t';\pi) - h_tQ(X_t;\pi)\big)^2
\le 2\chg \bigg\}\\
\le & 2 \cdot \sup_{z_1}\mathbb{E}_{\epsilon_1}\sup_{z_2}\mathbb{E}_{\epsilon_2}\cdots 
\sup_{z_T}\mathbb{E}_{\epsilon_T}\mathbf{1}\bigg\{\sup_{\pi \in \Pi}\Big|\sum^T_{t=1}\epsilon_t
f(h_t,X_t,z_t;\pi) \Big| > \frac{T\eta}{4} \bigg\}\\
&  \qquad \times \mathbf{1} \bigg\{ 
\sup_{\pi \in \Pi}\sum^T_{t=1} \Big(f(h_t,X_t,z_t;\pi) - h_tQ(X_t;\pi)\Big)^2  
\le 2\chg \bigg\}.
\end{align*}
Note here the  $t$-th supremum is taken over $z_t\in \mathcal{Z}$.
Assuming the above supremum can all be attained---so that 
at $t$ the maximum $z^*_t$ depends on
$\epsilon_1,\ldots,\epsilon_{t-1}$---the above probability can essentially be written as
\begin{align*}
&2 \mathbb{P}_{\varepsilon}\bigg(\sup_{\pi\in\Pi}\Big| \sum^T_{t=1} \epsilon_t\cdot 
f(h_t,X_t, z^*_t(\epsilon_1,\ldots,\epsilon_{t-1};X_{1:T});\pi)
\Big|\ge \dfrac{T \eta}{4},\\
&\qquad \qquad \qquad \qquad \sup_{\pi \in \Pi}\sum^T_{t=1} 
\Big(f(h_t,X_t,z_t^*(\epsilon_1,\ldots,\epsilon_{t-1});\pi) - h_tQ(X_t;\pi)\Big)^2  
\le 2\chg \bigg)\\
\le & 2\sup_{\z}\mathbb{P}_{\epsilon}\bigg(\sup_{\pi\in\Pi}\Big| 
\sum^T_{t=1} \epsilon_t\cdot f\big(X_t,z_t(\epsilon_1\ldots,\epsilon_{t-1};X_{1:T});\pi\big) 
\Big|\ge \dfrac{T \eta}{4},\\
&\qquad \qquad \qquad \qquad 
\sup_{\pi \in \Pi}\sum^T_{t=1} \Big(f(h_t,X_t,z_t(\epsilon_1,\ldots,\epsilon_{t-1});\pi)
- h_tQ(X_t;\pi)\Big)^2  \le 2\chg \bigg),
\end{align*}
where $\z$ is a $\calZ$-valued tree of depth $T$.
If the supremum cannot be obtained, a limiting argument can be applied to show the same conclusion.
Finally, we conclude that 
\begin{align*}
& \mathbb{P}\bigg(\sup_{\pi\in\Pi} 
\Big|\sum^t_{t=1}h_t\big(\hat{\Gamma}_t(X_t;\pi) -Q(\pi)\big) \Big|
\ge \eta T,
~\sup_{\pi \in \Pi}\sum^T_{t=1} 
\Big(f(h_t,X_t,Z_t;\pi) - h_t Q(X_t;\pi)\Big)^2 \le \chg
\biggiven X_{1:T}\bigg) \\
\le & 
4\cdot \sup_{\z}\mathbb{P}_{\varepsilon}\bigg(\sup_{\pi\in\Pi}\Big| \sum^T_{t=1} \epsilon_t 
  \cdot f\big(X_t,z_t(\epsilon_1,\ldots,\epsilon_{t-1};\pi)\big) 
\Big|\ge \dfrac{T \eta}{4},\\
& \qquad \qquad \qquad \qquad 
\sup_{\pi \in \Pi}\sum^T_{t=1} \Big(f(h_t,X_t,z_t(\epsilon_1,\ldots,\epsilon_{t-1});\pi)
- h_tQ(X_t;\pi)\Big)^2  \le 2\chg\bigg).
\end{align*}}

\subsection{Proof of Lemma \ref{lemma:l2_distance}}
\label{appendix:l2_distance}
For $\eta>0$, let $N_0 = N_\ham(\eta^2, \Pi)$. Without loss of generality we assume 
$N_0<\infty$, otherwise the argument trivially holds. \textcolor{black}{Fix a realization of the covariates 
$\bx$ and a tree $\z$. 
When $\mathcal{B}_T(\bx,\z)$ is empty, the result 
trivially holds. We assume in the following 
that $\mathcal{B}_T(\bx,\z)$ is non-empty.}
Next, consider the following optimization problem:
\begin{align*}
\sup_{\pi_{a,t},\pi_{b,t}, \epsilon \in \mathcal{B}_T(\mathbf{x},\z)}
\Big|f\big(x_t,z_t(\epsilon);\pi_{a,t}\big) 
- f\big(x_t,z_t(\epsilon);\pi_{b,t}\big)\Big|.
\end{align*}
Let $(\pi_{a,t}^*,\pi_{b,t}^*, \epsilon_t^*)$ denote the policies and the Rademacher sequence  that attains the supremum (we assume without
loss of generality that the supremum is attainable; otherwise we can simply apply a limiting 
argument). 
For a positive integer $m$, define
\begin{equation*}
n_t = \Big \lceil\frac{ m \big(f(h_t,x_t, z_t(\epsilon_t^*);\pi_{a,t}^*) -
f(h_t,x_t, z_t(\epsilon^*_t);\pi_{b,t}^*)\big)^2}{16\textcolor{black}{\chg}}\Big\rceil,
\end{equation*}
and
\begin{align*}
\{\Tilde{x}_1, \dots, \Tilde{x}_n\}=\{x_1, \dots,x_1, x_2,\dots, x_2, \dots, x_T, \dots, x_T\},
\end{align*} 
where $x_t$ appears $n_t$ times. Let $\mathcal{S} = \{\pi_1,\dots, \pi_{N_0}\}$ be the set of 
$N_0$ policies that $\eta^2$-covers 
$\Pi$ w.r.t~the Hamming distance defined with~$\{\tilde{x}_1,\ldots,\tilde{x}_n\}$. 
Consider now an arbitrary policy $\pi\in\Pi$ . By the definition of a covering set,
there exists $\pi' \in \mathcal{S}$ such that
\begin{align*}
    \ham(\pi,\pi' ; \tilde{x}_{1:n}) \le \eta^2.
\end{align*}
Further fix an arbitrary Rademacher sequence \textcolor{black}{$\epsilon
\in  \mathcal{B}_T(\mathbf{x},\z)$}, and by the definition
of $n$, we have
\begin{align*}
n = & \sum_{t=1}^T \Big \lceil
\frac{m \big(f(h_t,x_t,z_t(\epsilon^*_t);\pi^*_{a,t}) -
f(h_t,x_t, z_t(\epsilon^*_t);\pi^*_{b,t})\big)^2}{16 \textcolor{black}{\chg}} \Big \rceil\\
\le & \sum_{t=1}^T 1 + \frac{3m}{16\textcolor{black}{\chg}} \cdot
\Big[\big (f(z_t(\epsilon^*_t);
\pi^*_{a,t}) - h_t \cdot Q(x_t;\pi^*_{a,t}) \big )^2
+ h_t^2 \cdot \big(Q(x_t;\pi^*_{a,t}) - Q(x_t;\pi^*_{b,t}) \big)^2\\
&\qquad\qquad \qquad \qquad
+ \big(h_t \cdot Q(x_t;\pi_{b,t}^*) - f(z_t(\epsilon^*_t);\pi^*_{b,t}\big)^2\Big]\\
\le & T + \frac{3m}{16\textcolor{black}{\chg}} 
\cdot \Big[
\sum^T_{t=1}\big (f(z_t(\epsilon^*_t);
\pi^*_{a,t}) - h_t \cdot Q(x_t;\pi^*_{a,t}) \big )^2
+ \big (f(z_t(\epsilon^*_t);
\pi^*_{b,t}) - h_t \cdot Q(x_t;\pi^*_{b,t}) \big )^2
+ 4M^2h_t^2\Big]\\
\le & m+T.
\end{align*}
 On the other hand, by the definition of the Hamming distance,  
 \begin{align*}
\ham(\pi, \pi' \mid \tilde{x}_{1:n}) = 
&\frac{1}{n}\sum_{i=1}^n \mathbf{1} 
\big\{\pi(\Tilde{x}_i)\neq \pi'(\Tilde{x}_i)\big\} \\
= &\frac{1}{n}\sum_{t=1}^T\Big \lceil \frac{m
\big(f(h_t,x_t, z_t(\epsilon^*_t);\pi^*_{a,t}) 
- f(h_t,x_t, z_t(\epsilon^*_t);\pi^*_{b,t}) \big)^2 }
{16\textcolor{black}{\chg}} \Big\rceil 
\mathbf{1}\big\{\pi(x_t)\neq \pi'(x_t)\big\} \\
\ge & \frac{m}{m+T}
\sum_{t=1}^T \frac{\big(f(h_t,x_t, z_t(\epsilon);\pi) - 
f(h_t,x_t,z_t(\epsilon);\pi')\big)^2}{16\textcolor{black}{\chg}} 
\mathbf{1}\big\{\pi(x_t)\neq \pi'(x_t)\big\}\\
\stackrel{(i)}{=} &  \frac{m}{m+T}\sum_{t=1}^T 
\frac{\big(f(h_t,x_t, z_t(\epsilon);\pi)
- f(h_t,x_t,z_t(\epsilon);\pi')\big)^2}{16\textcolor{black}{\chg}}\\
= & \frac{m}{m+T}\ell^2_{2}(\pi, \pi'; \z, \bx, \epsilon),
 \end{align*}
 where step (i) is because $f(h_t,x_t,z_t(\epsilon);\pi) = f(h_t,x_t,z_t(\epsilon);\pi')$ if
 $\pi(x_t) = \pi'(x_t)$. Additionally, with the choice of $\pi'$, we have that
 \begin{equation*}
   \eta^2\ge \ham(\pi, \pi' ; \tilde{x}_{1:n}) 
   \ge \frac{m}{m+T}\ell^2_{2}(\pi, 
      \pi';\z,x_{1:T},\epsilon),
 \end{equation*}
 which yields $\ell_2(\pi, \pi';\z,x_{1:T},\epsilon) \le \sqrt{1+T/m}\cdot\eta$. 
 In other words, for any $\epsilon\in \mathcal{B}_T(\mathbf{x},\z)$ and any $\pi\in\Pi$,
 there exists $\pi'\in S$  such that $\ell_2(\pi,\pi';\z,\bx,\epsilon) 
 \le \sqrt{1+T/m}\cdot\eta$. This says,
 \begin{equation*}
      N_2\Big(\sqrt{1+T/m} \cdot \eta,\Pi;\z, \bx\Big) 
      \le N_{\ham}(\eta^2,\Pi;\tilde{x}_{1:n}) \le N_\ham(\eta^2,\Pi).
 \end{equation*}
 Letting $m$ go to infinity, we arrive at
 \begin{align*}
     N_2(\eta,\Pi;\z, \bx) \le N_{\ham}(\eta^2,\Pi).
 \end{align*}

\subsection{Proof of Lemma \ref{lemma:bound-Rademacher-complexity}}
\label{appendix:proof_bound-Rademacher-complexity}
\subsubsection{Policy Decomposition.}
For notational brevity, we write $\epsilon_{1:T}$ as $\epsilon$ 
if no confusion can arise. 
With $J=\lceil \log_2(T) \rceil$, 
we construct a sequence of 
projection operators $A_0,A_1,\dots, A_J$,
where conditioning on realizations of $\bx, \z, \epsilon$, \textcolor{black}{each $A_j$ maps a 
policy $\pi$ to its $j$-th approximation $A_j(\pi;\epsilon)$}; as $j$ increases from
$0$ to $J$, the approximation becomes finer. 

Let $\eta_j = 2^{-j}$, and $S_j$ 
be a set of policies that $\eta_j$-covers 
$\Pi$ under the $\ell_2$ distance for $j = 0,1,\ldots,J$.
Now for any sequence $\epsilon \in \mathcal{B}_T(\bx,\z)$ and 
any policy $\pi \in \Pi$, there exists $\pi'\in S_j$ such that
$\ell_2(\pi, \pi';\z,\bx,\epsilon)\le \eta_j$.
By definition, we can choose
$S_j$ such that $|S_j|=N_2(\eta_j,\Pi;\z,\bx)$,
for any $j\in[J]$. 
We now proceed to construct the sequential approximation operators via a backward selection scheme.
For any policy $\pi\in\Pi$, fix a sequence $\epsilon \in \mathcal{B}_T(\bx,\z)$. 
For $j\in[J]$, we define the  
$j$-th approximation mapping $A_j(\pi;\epsilon)$ to be: 
\begin{align*}
     & A_j(\pi;\epsilon) = \argmin_{\pi'\in S_j}~
     \ell_2 \big(\pi, \pi';\z, \bx, \epsilon\big).
\end{align*}
By the definition of $S_j$, 
$\ell_2(\pi, A_j(\pi);\z,\bx,\epsilon) \le \eta_j$.
In particular, we define $A_0(\pi) = (0,0,\ldots,0)$. Note that $A_0(\pi)$ is not exactly an 
element in $\Pi$---it is not a policy---it however serves as a $1$-cover of $\Pi$:
for any $\pi\in\Pi$, any $\z$ and any $\bx$,
\begin{align*}
\ell_2\big(\pi,A_0(\pi);\z,\bx, \epsilon \big) 
= & \frac{\sqrt{\sum^T_{t=1} \Big( 
f\big(x_t,z_t(\epsilon);\pi\big) - 0\Big)^2}}
{4\textcolor{black}{\sqrt{\chg}}} \\
\stackrel{\rm (i)}{\le} &
\frac{\sqrt{\sum^T_{t=1}\big(f(h_t,x_t,z_t(\epsilon);\pi) - h_tQ(X_t;\pi)\big)^2}
+ \sqrt{\sum^T_{t=1} h_t^2 \cdot Q(x_t;\pi)^2}}{4\textcolor{black}{\sqrt{\chg}}}
\le 1,
\end{align*}
where step (i) is due to the triangular inequality.
Given the approximation operators, we decompose the tree Rademacher process as follows:
\begin{align}
\label{eq:decomposition}
    \sum_{t=1}^T \epsilon_t f\big(x_t, z_t(\epsilon); \pi\big) =
    &\underbracket[0.4pt]{\sum_{t=1}^T \epsilon_t \Big(\sum_{j=1}^{J}
    f\big(x_t,z_t(\epsilon); A_{j}(\pi;\epsilon)\big) 
    -f\big(x_t, z_t(\epsilon); A_{j-1}(\pi;\epsilon)\big)\Big) 
    }_{\scriptsize \text{term (i): effective}}\nonumber \\
    & + \underbracket[.4pt]{\sum_{t=1}^T \epsilon_t \Big( f\big(x_t, z_t(\epsilon); \pi\big)-f\big(x_t,  z_t(\epsilon); A_{J}(\pi;\epsilon)\big)\Big)}_{{\scriptsize \text{term (ii): negligible}}}.
\end{align}
\vspace{0.5cm}
\subsubsection{The Effective Term.}
We now focus on term (i) in~\eqref{eq:decomposition}. 
For $j \in \{1,\ldots, J\}$, any $\pi_s \in S_j$ 
and $\pi_r \in S_{j-1}$, we define a new tree 
$\w^{j,s,r}$, where \textcolor{black}{for a sequence $\epsilon \in \mathcal{B}_T(\bx,\z)$} 
and $t\in[T]$:
\begin{itemize}
\item \textcolor{black}{If there exists $\pi \in \Pi$ and $\epsilon' \in \mathcal{B}_T(\bx,\z)$ 
that matches with $\epsilon$ up to time $t$ such that 
$A_j(\pi;\epsilon') = \pi_s$ and $A_{j-1}(\pi;\epsilon') = \pi_r$, then 
        \begin{align*}
            \w^{j,s,r}_t(\epsilon) = f\big(x_t,z_t(\epsilon); \pi_s\big) - f\big(x_t,z_t(\epsilon);\pi_r\big).
        \end{align*}}
    \item Otherwise $\w_t^{j,s,r}(\epsilon)=0$.
\end{itemize}
We can check that $\w^{j,s,r}$ is well-defined.
Write the set of all such trees as $W_j \stackrel{\Delta}{=} \{\w^{j,s,r}: 1\le s\le |S_j|,
1 \le r \le |S_{j-1}| \}$ (where we enumerate the elements $S_j$ and $S_{j-1}$ in an arbitrary order); by definition, $|W_j| \le |S_j||S_{j-1}|$. With the new tree process, the sum of differences in term (i) can be bounded as
\begin{align*}
&\sum^T_{t=1} \epsilon_t \cdot  \Big(\sum^{J}_{j=1} 
f\big(x_t,z_t(\epsilon);A_j(\pi;\epsilon)\big) - f\big( x_t,z_t(\epsilon);A_{j-1}(\pi;\epsilon)\big)\Big)\\
= & \sum^J_{j=1} \sum^T_{t=1} \epsilon_t
\cdot \Big(f\big(x_t,z_t(\epsilon);A_j(\pi;\epsilon)\big)
- f\big(x_t,z_t;A_{j-1}(\pi;\epsilon)\big) \Big)\\
\le & \sum^{J}_{j=1} \sup_{\w \in W_j} \sum^T_{t=1} 
\epsilon_t \w_t(\epsilon).
\end{align*}
Next, for any $j\in \{1,\ldots,J\}$, 
any sequence $\epsilon \in \mathcal{B}_T(\bx,\z)$, 
and any $\w\in W_j$, if there exists
$\pi \in \Pi$ such that $A_j(\pi;\epsilon) 
= \pi_s$ and $A_{j-1}(\pi;\epsilon) = \pi_r$, then 
\textcolor{black}{\begin{align*}
\sqrt{\sum^T_{t=1} \w_t(\epsilon)^2} =& 
\sqrt{\sum^T_{t=1}\Big(f\big(x_t,z_t(\epsilon);A_j(\pi;\epsilon)\big) - 
f\big(x_t,z_t(\epsilon);A_{j-1}(\pi;\epsilon)\big)\Big)^2}\\
\stackrel{(i)}{\le} & \sqrt{\sum^T_{t=1}\Big(f\big(x_t,z_t(\epsilon);A_j(\pi;\epsilon)\big)
- f\big(x_t,z_t(\epsilon); \pi \big)\Big)^2}
+ \sqrt{\sum^T_{t=1}\Big(f\big(x_t,z_t(\epsilon); \pi\big) - 
f\big(x_t,z_t(\epsilon);A_{j-1}(\pi;\epsilon)\big)\Big)^2}\\
\stackrel{(ii)}{\le} & 4\sqrt{\chg}\cdot(\eta_j + \eta_{j-1}) 
= 12\sqrt{\chg}\cdot\eta_j.
\end{align*}
Above, step (i) is due to the triangle inequality and step (ii) is due to  the choice of $A_j(\pi)$ and $A_{j-1}(\pi)$. 
If no such $\pi$ exists, then there exists a time
$t_0 \le T$ such that $\w_{t_0-1}(\epsilon)\neq 0$ and  $\w_t(\epsilon) = 0$ for all $t \ge t_0$. 
When $t_0 = 1$, $\w(\epsilon)$ is trivially zero. When $t_0 >1$, by definition of $\w$, there exists a policy $\pi \in \Pi$ and  a 
sequence $\epsilon' \in \mathcal{B}_T(\bx,\z)$ that agrees with $\epsilon$ 
up to time $t_0-1$  such 
that $A_j(\pi;\epsilon') = \pi_s$ and $A_{j-1}(\pi;\epsilon') = \pi_r$.
Using the previous argument, we arrive at 
\begin{align*}
\sqrt{\sum^T_{t=1} \w_t(\epsilon)^2} \le \sqrt{\sum^T_{t=1} \w_t(\epsilon')^2}
\le 12\sqrt{\chg} \cdot \eta_j.
\end{align*}
Combining the above, we have for
any $j\in \{1, \dots, J\}$ and any 
(constant to be specified) $t_j > 0$,
\begin{align*}
& \p_{\epsilon}\bigg(\max_{\pi\in\Pi} 
\Big|\sum_{t=1}^T \epsilon_t \Big(f\big(x_t,z_t(\epsilon); A_{j}(\pi;\epsilon)\big) 
-f\big(x_t, z_t(\epsilon); A_{j-1}(\pi;\epsilon)\big)\Big) \Big|
\ge t_j, \epsilon \in \mathcal{B}_T(\bx,\z) \bigg)\\
\le &\p_{\epsilon}\Big(\max_{\w \in W_j} 
\Big| \sum^T_{t=1}\epsilon_t\w_t(\epsilon)\Big| \ge t_j,
\epsilon \in \mathcal{B}_T(\bx,\z) \Big)\\
\le & \PP_{\epsilon}
\Big(\max_{\w \in W_j} 
  \Big|\sum^T_{t=1} \epsilon_t \w_t(\epsilon) \Big| \ge t_j,
\max_{\w \in W_j} \sum^T_{t=1} \w_t(\epsilon)^2 \le 144 \chg \cdot \eta_j^2\Big)\\
\stackrel{\rm (i)}{\le} &  N_2^2(\eta_j,\Pi;\z,\bx) 
\cdot
\PP\Big(\sum^T_{t=1} \epsilon_t\w_t(\epsilon) \ge t_j, 
\max_{\w \in W_j} \sum^T_{t=1} \w_t^2(\epsilon) \le 144 \chg\cdot \eta_j^2\Big)\\
\le &  N_2^2(\eta_j,\Pi;\z,\bx) 
\cdot
\PP\Big(\sum^T_{t=1} \epsilon_t\w_t(\epsilon) \ge t_j, 
\sum^T_{t=1} \w_t^2(\epsilon) \le 144 \chg\cdot \eta_j^2\Big)\\
\stackrel{\rm (ii)}{\le} &  N_2^2(\eta_j,\Pi;\z,\bx) 
\exp\Big(- \frac{t_j^2}{576 \chg \cdot \eta_j^2}\Big)\\
\stackrel{\rm (iii)}{\le} &  N_\ham^2\big( \eta_j^2,\Pi \big) \cdot
\exp\Big(- \frac{t_j^2}{576 \chg \cdot \eta_j^2}\Big)
\end{align*}
Above, step (i) is by the union bound, step (ii)
follows from Lemma~\ref{lemma:bt_ineq} and 
step (iii) is a result of Lemma~\ref{lemma:l2_distance}.
For $\delta >0$, 
setting $t_j = 24\sqrt{\chg}\cdot\eta_j 
\sqrt{\log\big(5j^2N^2_\ham(\eta_j^2,\Pi)/3\delta \big)}$, we have
\begin{align*}
& \p\bigg(\max_{\pi\in\Pi} \Big|\sum_{t=1}^T \epsilon_t 
\Big(f\big(z_t(\epsilon); A_{j}(\pi;\epsilon)\big) 
-f\big(z_t(\epsilon); A_{j-1}(\pi;\epsilon)\big) \Big) \Big| 
\ge t_j,\epsilon \in \mathcal{B}_T(\bx,\z) \bigg) \le \frac{3\delta}{5 j^2}.
\end{align*}
Taking the union bound over $j \in \{1,\ldots,J\}$, we have
\begin{align*}
&\p\bigg(\max_{\pi\in\Pi}\Bigbar{\sum_{t=1}^T \epsilon_t \Bigbracket{ 
\sum_{j=1}^{J}f\big(x_t,z_t( \epsilon); A_{j}(\pi;\epsilon)\big) -
f\big(x_t,z_t(\epsilon); A_{j-1}(\pi;\epsilon)\big)}}\geq \sum_{j=1}^{J}t_j,
\epsilon \in \mathcal{B}_T(\bx,\z)\bigg)\\
\leq & \sum_{j=1}^{J} 
\p\Bigbracket{\max_{\pi\in\Pi}\Bigbar{\sum_{t=1}^T 
\epsilon_t \Bigbracket{ f\big(z_t(\epsilon); A_{j}(\pi;\epsilon)\big) -
f\big(z_t(\epsilon); A_{j-1}(\pi;\epsilon)\big)}}\ge t_j, 
\epsilon \in \mathcal{B}_T(\bx,\z)} \\
\le & \dfrac{3\delta}{5}\sum_{j=1}^{J} \dfrac{1}{ j^2} 
< \dfrac{3\delta}{5}\sum_{j=1}^{\infty}  \frac{1}{j^2} = \delta.
\end{align*}
Consequently, we conclude that with probability at least $1-\delta$,
on the event $\{\epsilon \notin \mathcal{B}_T(\bx,\z)\}$
\begin{align*}
&\max_{\pi\in\Pi}\Bigbar{\sum_{t=1}^T \epsilon_t \Bigbracket{ 
\sum_{j=1}^{J}f\big(x_t, z_t(\epsilon); A_{j}(\pi,\epsilon)\big)
    -f\big(x_t, z_t(\epsilon); A_{j-1}(\pi;\epsilon)\big)}}\\
    \le &   \sum_{j=1}^{J} t_j
    = \sum_{j=1}^{J}24 \sqrt{\chg} \cdot \eta_j
    \sqrt{2\log(j) + 2\log\big(N_\ham(\eta_j^2,\Pi)\big) + \log\big(5/(3\delta)\big)}\\
    \stackrel{(a)}{\le} & 24\sqrt{\chg} 
    \sum^{J}_{j=1} \eta_j\Bigbracket{\sqrt{2\log(j)} + 
    \sqrt{2\log \bigbracket{N_\ham(\eta_j^2,\Pi)}} + \sqrt{\log \bigbracket{5/{(3\delta)}}}}\\
    \stackrel{(b)}{\le} & 24\sqrt{\chg} \cdot\Big( 2\sqrt{2} + 2\sqrt{2}\kappa(\Pi) + 
    \sqrt{\log\bigbracket{5/(3\delta)}}\Big).
\end{align*}
Above, step (a) is because $\sqrt{x+y+z} \le
\sqrt{x}+\sqrt{y}+\sqrt{z}$ for $x,y,z\ge 0$; 
step (b) is due to the fact that 
$\sum \eta_j \sqrt{\log{j}} \le \sum \eta_j j \le 2$ and that $\sum \eta_j \sqrt{\log N_\ham(\eta_j^2,\Pi)} = \sum 2(\eta_j - \eta_{j+1})
\sqrt{\log N_\ham(\eta_j^2,\Pi)} \le  2\int^1_0 \sqrt{\log N_\ham(\eta_j^2,\Pi)}~\mbox{d}\epsilon = 2\kappa(\Pi)$. }

\vspace{0.5cm}
\subsubsection{The Negligible Term.}
\textcolor{black}{We proceed to show term (ii) in~\eqref{eq:decomposition} is negligible. 
On the event $\{\epsilon \in \mathcal{B}_T(\bx,z)\}$,
\begin{align*}
&\sup_{\pi\in\Pi}\Big|\sum_{t=1}^T \epsilon_t \Big( f\big(x_t, z_t(\epsilon); \pi\big) 
-f\big( x_t, z_t(\epsilon); A_J(\pi;\epsilon)\big) \Big)\Big|
\stackrel{\rm (i)}{\le} \sup_{\pi\in\Pi}\sqrt{T}
\cdot \sqrt{\sum^T_{t=1}\Big(
 f\big(x_t,z_t(\epsilon);\pi\big) - f\big(x_t,z_t(\epsilon);A_J(\pi,\epsilon)\big)\Big)^2}\\
& \stackrel{\rm (ii)}{\le}  4 \cdot 2^{-J}\cdot \sqrt{\chg T}
\le  4\sqrt{\chg}/\sqrt{T},
\end{align*}
where in step (a) we use the Cauchy-Schwarz inequality; step (b) is due to the choice of $A_J(\pi;\epsilon)$.}

\subsection{Proof of Lemma \ref{lemma:iid_concentration}}
\label{appendix:iid_concentration}
Let $\epsilon_1,\ldots,\epsilon_T$ denote a sequence of \iid~Rademacher random variables with 
$\mathbb{P}(\epsilon_t = 1) = \mathbb{P}(\epsilon_t = -1) = 1/2$ for $t\in [T]$. Let $X_1',\ldots,X_T'$
be an independent copy of $X_1,\ldots,X_T$. Then for any $\eta >0$, we have
\begin{align*}
    \mathbb{E}\bigg[\max_{\pi\in \Pi} \Big| \sum^T_{t=1} h_t Q(X_t;\pi) - h_t Q(\pi)\Big|\bigg]
    = & \mathbb{E}\bigg[\max_{\pi\in \Pi} \Big| \sum^T_{t=1} h_t Q(X_t;\pi) - h_t \mathbb{E}\big[Q(X_t';\pi)\big]\Big|\bigg]\\
    \le & \mathbb{E}\bigg[\mathbb{E}_{X_{1:T}'}\bigg[\max_{\pi\in \Pi} \Big| \sum^T_{t=1} h_t Q(X_t;\pi) - h_t Q(X_t';\pi) \Big|\bigg]\bigg]\\
    \le & \mathbb{E}\bigg[\max_{\pi\in \Pi} \Big| \sum^T_{t=1} \epsilon_t h_t \big(Q(X_t;\pi) - Q(X_t';\pi)\big) \Big| \bigg]\\
    \le & 2\cdot\mathbb{E}\bigg[\max_{\pi \in \Pi} \Big|\sum^T_{t=1} \epsilon_t h_t Q(X_t,\pi)\Big|\bigg]
\end{align*}
By the tower property, 
\begin{align*}
\mathbb{E}\bigg[\max_{\pi \in \Pi} \Big|\sum^T_{t=1} \epsilon_t h_t Q(X_t,\pi)\Big|\bigg] 
=  \mathbb{E}\bigg[\mathbb{E}_{\epsilon_{1:T}}\bigg[\max_{\pi \in \Pi} \Big|\sum^T_{t=1} \epsilon_t h_t Q(X_t,\pi)\Big|\bigg]\bigg].
\end{align*}
We now focus on bounding the inner expectation, and for brevity we write $\mathbb{E}_{\epsilon} = \mathbb{E}_{\epsilon_{1:T}}$.
To do this,  we shall use a distance between two policies (defined in Definition~\ref{defn:iid_dist}) as in the non \iid~case.
Given the $\tilde{\ell}_2$ distance, we construct the policy approximation operators $A_j:\Pi \mapsto \Pi$ via a backward selection scheme 
as before. Let $J =\lceil \log_2{T} \rceil$ and $\eta_j = 2^{-j}$ for $j = 0,1,\ldots, J$. For any covariate realization, let $S_j$
denote the smallest $\eta_j$-covering set of $\Pi$ w.r.t.~the $\tilde{\ell}_2$ distance. Then for any $\pi\in \Pi$, 
\begin{itemize}
    \item define $A_J(\pi) =\arg \min_{\pi'\in S_J} \tilde{\ell}_2(\pi, \pi';x_{1:T})$;
    \item for each $j = J-1, \dots, 1$, define $A_j(\pi) = \arg\min_{\pi'\in S_j}\tilde{\ell}_2\big(A_{j+1}(\pi), \pi';x_{1:T}\big)$.
    \item define $A_0(\pi) = (0,0,\ldots,0)$.
\end{itemize}
Similar to the non \iid~case, $A_0(\pi)$ is not exactly in $\Pi$. But $A_0(\pi)$ can serve as
a $1$-cover of $\Pi$ since for any $x_{1:T}$ and any $\pi\in \Pi$,
\begin{align*}
    \tilde{\ell}_2\big(\pi,A_0(\pi); x_{1:T}\big) = \frac{\sqrt{\big|\sum^T_{t=1}h_t^2Q^2(X_t;\pi)\big|}}{\textcolor{black}{2M\sqrt{\sum_{t=1}^T h_t^2}}} \le 1.
\end{align*}
With the sequence of policy approximation operators, we can decompose the inner expectation as follows:
\begin{align*}
    \mathbb{E}_{\epsilon}\bigg[\max_{\pi \in \Pi} \Big|\sum^T_{t=1} \epsilon_t h_t Q(X_t,\pi)\Big|\bigg]
    \le& \underbracket[0.4pt]{\mathbb{E}_{\epsilon}\bigg[\max_{\pi \in \Pi} \Big|\sum^T_{t=1} \epsilon_t \cdot \big(h_t Q(X_t,\pi) - h_t Q(X_t,A_J(\pi))\big)\Big|\bigg]}_{
    \scriptsize \textnormal{term (i): negligible}}\\
    & + \underbracket[0.4pt]{\mathbb{E}_{\epsilon}\bigg[\max_{\pi \in \Pi} \Big|\sum^T_{t=1} \epsilon_t
    \cdot \Big(\sum^J_{j=1} h_t Q\big(X_t,A_j(\pi)\big) -
    h_t Q\big(X_t,A_{j-1}(\pi)\big)\Big)\Big|\bigg]}_{\scriptsize \textnormal{term (ii): effective}}.
\end{align*}
We proceed to bound the above two terms separately.

\paragraph{The Negligible Term.}
Conditional on $X_{1:T}$, we have,
\begin{align*}
    &\max_{\pi\in\Pi} \bigg|\sum^T_{t=1} \epsilon_t h_t\Big( Q(X_t,\pi) - Q\big(X_t,A_J(\pi)\big)\Big)\bigg|\\
    \stackrel{(i)}{\le} & \max_{\pi \in \Pi}\sqrt{T} \sqrt{\textcolor{black}{\sum^T_{t=1} h_t^2}\cdot \Big(Q\big(X_t,\pi\big) -Q\big(X_t, A_J(\pi)\big) \Big)^2}
    =  2M\sqrt{T\textcolor{black}{\sum_{t=1}^T h_t^2}} \cdot \max_{\pi\in\Pi}   \tilde{\ell}_2\big(\pi,A_J(\pi);X_{1:T}\big)
    \stackrel{(ii)}{\le}  2M\textcolor{black}{\sqrt{\sum_{t=1}^T h_t^2/T}} ,
\end{align*}
where step (i) follows from Cauchy-Schwarz inequality and step (ii) is due to the choice of $A_J(\pi)$. 
Consequently,
\begin{align*}
    \mathbb{E}_{\epsilon}\bigg[\max_{\pi\in\Pi} \Big|\sum^T_{t=1} \epsilon_t h_t\Big( Q(X_t,\pi) - Q\big(X_t,A_J(\pi)\big)\Big)\Big|\bigg] 
    \le 2M\textcolor{black}{\sqrt{\sum_{t=1}^T h_t^2/T}}.
\end{align*}

\paragraph{The Effective Term.} 
For $j\in[J]$, a positive integer $k$, and a positive constant $t_{j,k}$ to be specified later,
\begin{align*}
    &\mathbb{P}_{\epsilon}\bigg(\max_{\pi\in\Pi} \Big|\sum^T_{t=1} \epsilon_th_t\Big(Q\big(X_t,A_j(\pi)\big) -Q\big(X_t,A_{j-1}(\pi)\big) \Big)\Big| \ge t_{j,k}\bigg)\\
    \stackrel{(i)}{\le} & 2|S_j|\cdot\exp\bigg(-\frac{t_{j,k}^2}{8\eta_{j-1}^2 M^2 \textcolor{black}{\sum_{t=1}^T h_t^2}} \bigg)\\
    \le & 2N_{\tilde{\ell}_2}(\eta_j,\Pi;x_{1:T}) \cdot \exp\bigg(-\frac{t_{j,k}^2}{8\eta_{j-1}^2 M^2 \textcolor{black}{\sum_{t=1}^T h_t^2}} \bigg),
\end{align*}
where the  inequality (i) is due to Hoeffding's inequality and the definition of $A_{j}(\pi)$ and $A_{j-1}(\pi)$.
Let $t_{j,k} = 2\sqrt{2}\eta_{j-1} M \sqrt{\sum_{t=1}^T h_t^2} \cdot \sqrt{\log\big(2^{k+2}j^2 N_{\tilde{\ell}_2}(\eta_j,\Pi;x_{1:T})\big)}$ and 
applying a union bound over $j\in[J]$, we obtain
\begin{align*}
    &\mathbb{P}_{\epsilon}\bigg(\max_{\pi\in\Pi}\Big|\sum^T_{t=1} \epsilon_t \Big(\sum^J_{j=1}
    h_t Q\big(X_t,A_j(\pi)\big) - h_t Q\big( X_t,A_{j-1}(\pi)\big)\Big) \Big| \ge \sum^J_{j=1} t_{j,k} \bigg)\\
    \le &\sum^J_{j=1} \mathbb{P}_{\epsilon}\bigg(\max_{\pi\in\Pi}\Big|\sum^T_{t=1} \epsilon_t \Big(
    h_t Q\big(X_t,A_j(\pi)\big) - h_t Q\big( X_t,A_{j-1}(\pi)\big)\Big) \Big| \ge t_{j,k} \bigg)\\
    \le & 2 \sum^J_{j=1} N_{\tilde{\ell}_2}(\eta_j,\Pi;x_{1:T}) \cdot \exp\bigg(-\frac{t_{j,k}^2}
    {8\eta_{j-1}^2M^2 \textcolor{black}{\sum_{t=1}^T h_t^2} } \bigg)
    \le \frac{1 }{2^{k+1} } \sum^J_{j=1} \dfrac{1}{j^2} 
    \le \frac{1}{2^k},
\end{align*}
where in the last inequality we use $\frac{\pi^2}{3}\leq 4$. As before, we connect the covering number under $\tilde{\ell}_2$-distance with that under the Hamming distance.
The connection is characterized by Lemma~\ref{lemma:tl2_distance}.
\begin{lemma}
\label{lemma:tl2_distance} 
Under Assumption \ref{assumption:dgp}, for any realization of covariates $x_{1:T}$ and for any $\eta>0$, we have $N_2(\eta, \Pi; x_{1:T})\leq N_\ham(\eta^2, \Pi)$.
\end{lemma}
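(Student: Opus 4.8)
The plan is to follow the same discretization argument used in the proof of Lemma~\ref{lemma:l2_distance}, now applied to the simpler \iid~distance $\tilde{\ell}_2$ of Definition~\ref{defn:iid_dist} (the covering number written $N_2(\eta,\Pi;x_{1:T})$ in the statement is the $\tilde{\ell}_2$-covering number $N_{\tilde{\ell}_2}(\eta,\Pi;x_{1:T})$). The structural feature that makes the argument go through is that $h_t Q(x_t;\pi)$ depends on $\pi$ only through its value $\pi(x_t)$; hence $h_t Q(x_t;\pi) = h_t Q(x_t;\pi')$ whenever $\pi(x_t)=\pi'(x_t)$, and the per-coordinate gap is uniformly bounded: since $|Q(x;\pi)|\le M$ by Assumption~\ref{assumption:dgp}(a), we have $|h_t Q(x_t;\pi)-h_t Q(x_t;\pi')|\le 2M\|h\|_{T,\infty}$ for all $t$ and all $\pi,\pi'$.

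First I would fix a covariate realization $x_{1:T}$ and set $N_0 = N_\ham(\eta^2,\Pi)$, assuming $N_0 < \infty$ (otherwise the claim is vacuous). For a positive integer $m$, I would replicate each covariate $x_t$ with multiplicity
\[
n_t = \Bigl\lceil \frac{m\,|h_t Q(x_t;\pi^*_{a,t}) - h_t Q(x_t;\pi^*_{b,t})|^2}{4M^2 T\|h\|^2_{T,\infty}}\Bigr\rceil,
\]
where $\pi^*_{a,t},\pi^*_{b,t}$ attain the largest gap at $x_t$, and form the multiset $\{\tilde{x}_1,\dots,\tilde{x}_n\}$ in which $x_t$ appears $n_t$ times. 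The bound on the per-coordinate gap gives $n_t \le m/T + 1$, hence $n = \sum_t n_t \le m+T$. I would then take $\mathcal{S}$ to be a minimal $\eta^2$-Hamming cover of $\Pi$ with respect to $\tilde{x}_{1:n}$, so that $|\mathcal{S}| \le N_\ham(\eta^2,\Pi)$.

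The core computation is to show $\mathcal{S}$ is a $\sqrt{1+T/m}\,\eta$-cover under $\tilde{\ell}_2$. Given $\pi\in\Pi$, I would pick $\pi'\in\mathcal{S}$ with $\ham(\pi,\pi';\tilde{x}_{1:n})\le\eta^2$. Using $n_t \ge m|h_t Q(x_t;\pi^*_{a,t})-h_t Q(x_t;\pi^*_{b,t})|^2/(4M^2T\|h\|^2_{T,\infty})$, the optimality of $(\pi^*_{a,t},\pi^*_{b,t})$, the bound $n\le m+T$, and the fact that the summand vanishes whenever $\pi(x_t)=\pi'(x_t)$, I would chain
\[
\eta^2 \ge \ham(\pi,\pi';\tilde{x}_{1:n}) \ge \frac{m}{m+T}\sum_{t=1}^T \frac{|h_t Q(x_t;\pi) - h_t Q(x_t;\pi')|^2}{4M^2T\|h\|^2_{T,\infty}} = \frac{m}{m+T}\,\tilde{\ell}_2^2(\pi,\pi';x_{1:T}),
\]
so that $\tilde{\ell}_2(\pi,\pi';x_{1:T}) \le \sqrt{1+T/m}\,\eta$. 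This yields $N_{\tilde{\ell}_2}(\sqrt{1+T/m}\,\eta,\Pi;x_{1:T}) \le N_\ham(\eta^2,\Pi)$, and letting $m\to\infty$ completes the proof.

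The main obstacle is bookkeeping rather than a conceptual hurdle: one must verify the ceiling bound $n\le m+T$ carefully, check that the replacement of $|h_tQ(x_t;\pi)-h_tQ(x_t;\pi')|$ by the maximal gap at $t$ (via optimality of $\pi^*_{a,t},\pi^*_{b,t}$) is oriented so that the inequality points the right way, and confirm that dropping the indicator $\one\{\pi(x_t)\neq\pi'(x_t)\}$ is justified by the coordinatewise-dependence property noted above. Since the argument mirrors Lemma~\ref{lemma:l2_distance} almost verbatim, with the tree-indexed function $f(x_t,z_t(\epsilon);\pi)$ replaced by $h_t Q(x_t;\pi)$, the $\epsilon$- and $\z$-dependence suppressed, and the normalizing constant $16M^2$ replaced by $4M^2$, the proof is essentially a direct transcription once these correspondences are in place.
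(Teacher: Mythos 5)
Your proposal is correct and follows essentially the same argument as the paper's proof: the same replication multiplicities $n_t$ built from the maximal per-coordinate gap, the same bound $n \le m+T$, the same chain relating the Hamming distance to $\tilde{\ell}_2$ via optimality of $(\pi^*_{a,t},\pi^*_{b,t})$ and the coordinatewise-dependence property that lets you drop the indicator, and the same limit $m\to\infty$. The justifications you flag (orientation of the optimality inequality, the vanishing summand when $\pi(x_t)=\pi'(x_t)$, and the interpretation of $N_2$ in the statement as $N_{\tilde{\ell}_2}$) all match the paper's treatment.
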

We defer the proof of Lemma~\ref{lemma:tl2_distance} to Appendix \ref{appendix:proof_tl2_distance}. Using the connection, we have with probability at least $1-1/2^k$,
\begin{align*}
    &\max_{\pi\in\Pi}\Big|\sum^T_{t=1} \epsilon_t \Big(\sum^J_{j=1}
    h_t Q\big(X_t,A_j(\pi)\big) - h_t Q\big( X_t,A_{j-1}(\pi)\big)\Big) \Big| \\
    \stackrel{(i)}{\le} & \sum^J_{j=1} 
    4\sqrt{2}\eta_{j} M\textcolor{black}{\sqrt{\sum_{t=1}^T h_t^2}}\cdot \Big( \sqrt{(k+2)\log{2}} + \sqrt{2\log{j}} + \sqrt{\log{N_{\tilde{\ell}_2}\big(\eta_j,\Pi;x_{1:T}\big)}} \Big)\\
    \le & \sum^J_{j=1} 
    4\sqrt{2}\eta_{j} M\textcolor{black}{\sqrt{\sum_{t=1}^T h_t^2}}\cdot \Big( \sqrt{(k+2)\log{2}} + \sqrt{2\log{j}} + \sqrt{\log{N_{\ham}\big(\eta_j^2,\Pi;x_{1:T}\big)}} \Big)\\
    \stackrel{(ii)}{\le} & 4\sqrt{2}M\textcolor{black}{\sqrt{\sum_{t=1}^T h_t^2}} \cdot \Big(\sqrt{(k+2)\log{2}} +  2\sqrt{2} + 2\kappa(\Pi) \Big),
\end{align*}
where in step (i) we use that $\sqrt{a+b+c} \le \sqrt{a}+\sqrt{b}+\sqrt{c}$ for $a,b,c\ge 0$. Step (ii) is due to the fact that 
$\sum \eta_j \sqrt{\log{j}} \le \sum \eta_j j \le 2$ and that $\sum \eta_j \sqrt{\log N_\ham(\eta_j^2,\Pi)} = \sum 2(\eta_j - \eta_{j+1})
\sqrt{\log N_\ham(\eta_j^2,\Pi)} \le  2\int^1_0 \sqrt{\log N_\ham(\eta_j^2,\Pi)} = 2\kappa(\Pi)$. 

As a result, we can bound the expectation as:
\begin{align*}
    & \mathbb{E}_{\epsilon} \Bigg[ \max_{\pi\in\Pi}\bigg|\sum^T_{t=1} \epsilon_t \Big(\sum^J_{j=1}
    h_t Q\big(X_t,A_j(\pi)\big) - h_t Q\big( X_t,A_{j-1}(\pi)\big)\Big) \bigg|\Bigg] \\
     = & \int^{\infty}_0 \mathbb{P}_{\epsilon} \Bigg( \max_{\pi\in\Pi}\bigg|\sum^T_{t=1} \epsilon_t \Big(\sum^J_{j=1}
    h_t Q\big(X_t,A_j(\pi)\big) - h_t Q\big( X_t,A_{j-1}(\pi)\big)\Big) \bigg| > s\Bigg) \textnormal{d}s\\
    \le & 4\sqrt{2}M\textcolor{black}{\sqrt{\sum_{t=1}^T h_t^2}} \Big( 8\sqrt{\log{2}} + 4\sqrt{2} + 4 \kappa(\Pi) \Big)\\
    \le & 4\sqrt{2}M \textcolor{black}{\sqrt{\sum_{t=1}^T h_t^2}}\Big(13 +  4\kappa(\Pi)\Big),
\end{align*}
where in the last equality we use $ 8\sqrt{\log{2}} + 4\sqrt{2}< 13$. Combining the bounds for the negligible term and the effective bound, we have 
\begin{align*}
    \mathbb{E}_{\epsilon}\Big[\max_{\pi \in \Pi} \big|\sum^T_{t=1} \epsilon_t h_t Q(X_t,\pi)\big|\Big]
    \le \textcolor{black}{2M \sqrt{\sum_{t=1}^T h_t^2/T} + 4\sqrt{2}M\sqrt{\sum_{t=1}^T h_t^2}\Big(13 + 4\kappa(\Pi)\Big)}. 
\end{align*}
Therefore, 
\begin{align*}
     \mathbb{E}\bigg[\max_{\pi\in \Pi} \Big| \sum^T_{t=1} h_t Q(X_t;\pi) - h_t Q(\pi)\Big|\bigg] & \leq 2  \mathbb{E}_{\epsilon}\Big[\max_{\pi \in \Pi} \big|\sum^T_{t=1} \epsilon_t h_t Q(X_t,\pi)\big|\Big]\\
     &\leq \textcolor{black}{4M \sqrt{\sum_{t=1}^T h_t^2/T}+ 8\sqrt{2}M\sqrt{\sum_{t=1}^T h_t^2}\Big(13 + 4\kappa(\Pi)\Big)}. 
\end{align*}
Finally for any $\eta>0$,
\begin{align*}
    &\mathbb{P}\bigg(\max_{\pi\in \Pi} \big|\sum^T_{t=1} h_t \big(Q(X_t,\pi) - Q(\pi)\big) \big|
    \ge \textcolor{black}{8M\sqrt{\sum_{t=1}^T h_t^2/T} + 8\sqrt{2}M\sqrt{\sum_{t=1}^T h_t^2}\Big(13 + 4\kappa(\Pi)\Big) + \eta}\bigg)\\
    \le & \mathbb{P}\bigg(\max_{\pi\in \Pi} \big|\sum^T_{t=1}  h_t \big(Q(X_t,\pi)-Q(\pi)\big) \big| 
    - \mathbb{E}\Big[\max_{\pi\in \Pi} \big|\sum^T_{t=1}  h_t \big( Q(X_t,\pi)-Q(\pi)\big) \big|  \Big] \ge \eta\bigg)\\
    \le & \textcolor{black}{\exp\Big( -\dfrac{\eta^2 }{2M^2 \sum_{t=1}^T h_t^2}\Big)},
\end{align*}
where the last inequality is due to bounded difference inequality \citep{duchi2016lecture}. Choosing \textcolor{black}{$\eta=\sqrt{2\sum_{t=1}^T h_t^2}M\log(1/\delta)$} yields the desired result. 

\subsection{Proof of Lemma~\ref{lemma:tl2_distance}}
\label{appendix:proof_tl2_distance}
Consider $\eta>0$, and let $N_0 = N_\ham(\eta^2, \Pi)$. We assume without loss of generality that $N_0<\infty$.
Fix a realization of covariates $x_{1:T}$, and consider the following optimization problem
\begin{align*}
    \max_{\pi_{a,t}, \pi_{b,t}}~\big| h_tQ(x_t,\pi_{a,t}) - h_tQ(x_t,\pi_{b,t})\big|.
\end{align*}
Let $\pi_{a,t}^*$ and $\pi_{b,t}^*$ denote the policies that achieve the maximum (we assume without 
loss of generality that the maximum is attainable; otherwise we can simply apply a limiting argument).
We have
\begin{align*}
    \big|h_tQ(x_t,\pi_{a,t}^*) - h_tQ(x_t,\pi_{b,t}^*)\big| \le 2Mh_t.
\end{align*}
For a positive integer $m$, we define for any $t\in[T]$ that
\begin{align*}
    n_t =\Big \lceil\dfrac{m\big|h_tQ(x_t,\pi_{a,t}^*) - h_tQ(x_t,\pi_{b,t}^*)\big|^2}{4M^2 \textcolor{black}{\sum_{t=1}^T h_t^2}}\Big\rceil,
\end{align*}
and 
\begin{align*}
    \{\tilde{x}_1, \ldots, \tilde{x}_n\} = \{x_1,\ldots,x_1,x_2,\ldots,x_2,\ldots,x_T,\ldots,x_T\},
\end{align*}
where $x_t$ appears for $n_t$ times. Let $\mathcal{S} = \{\pi_1,\ldots,\pi_{N_0}\}$ denote the set of
$N_0$ policies that $\eta^2$-cover $\Pi$ w.r.t.~the Hamming distance defined under $\tilde{x}_1,\ldots,\tilde{x}_n$. 
Consider an arbitrary $\pi \in \Pi$. By the definition of a covering set, there exists $\pi' \in \mathcal{S}$, such that
\begin{align*}
    \ham(\pi,\pi' ;\tilde{x}_{1:n})\le \eta^2.
\end{align*}
By the definition of $n$, we have
\begin{align*}
    n = \sum^T_{t=1} \Big\lceil \dfrac{m\big|h_tQ(x_t,\pi_{a,t}^*) - h_tQ(x_t,\pi_{b,t}^*) \big|^2}{4M^2\textcolor{black}{\sum_{t=1}^T h_t^2}} \Big\rceil
    \le \sum^T_{t=1} \Big(\dfrac{m\big|h_tQ(x_t,\pi_{a,t}^*) - h_tQ(x_t,\pi_{b,t}^*) \big|^2}{4M^2\textcolor{black}{\sum_{t=1}^T h_t^2}} + 1\Big)
    \le m+T.
\end{align*}
On the other hand,
\begin{align*}
    \ham(\pi,\pi'; \tilde{x}_{1:n}) = & \frac{1}{n}\sum^n_{i=1} \mathbf{1}\big\{\pi(\tilde{x}_i) \neq \pi'(\tilde{x}_i)\big\}\\
    \ge & \frac{1}{m+T} \sum^T_{t=1} \frac{m\big|h_tQ(x_t,\pi_{a,t}^*) - h_tQ(x_t,\pi_{b,t}^*) \big|^2}{4M^2\textcolor{black}{\sum_{t=1}^T h_t^2}}
    \cdot\mathbf{1}\big\{\pi(x_t) \neq \pi'(x_t)\big\} \\
    \ge & \frac{1}{m+T} \sum^T_{t=1} \frac{m\big|h_tQ(x_t,\pi) - h_tQ(x_t,\pi') \big|^2}{4M^2\textcolor{black}{\sum_{t=1}^T h_t^2}}
    \cdot\mathbf{1}\big\{\pi(x_t) \neq \pi'(x_t)\big\}\\
    \ge & \frac{m}{m+T} \sum^T_{t=1} \frac{\big|h_tQ(x_t,\pi) - h_tQ(x_t,\pi') \big|^2}{4M^2\textcolor{black}{\sum_{t=1}^T h_t^2}}\\
    = & \dfrac{m}{m+T} \tilde{\ell}^2_2(\pi,\pi';x_{1:T}).
\end{align*}
Equivalent, for any $\pi\in\Pi$, there exists $\pi'\in \mathcal{S}$ such that $\tilde{\ell}_2(\pi,\pi';x_{1:T})\le \sqrt{1+T/m}\cdot \eta$.
Consequently, 
\begin{align*}
    N_{\tilde{\ell}_2}\bigg(\sqrt{\frac{m+T}{m}}\cdot \eta, \Pi;x_{1:T}\bigg) \le N_0 = N_\ham(\eta^2,\Pi). 
\end{align*}
Letting $m\rightarrow\infty$ completes the proof.

\section{Additional Details}
\subsection{Solving for the Optimal Weights}
\label{appendix:solve_h}
\textcolor{black}{The optimization problem in \eqref{eq:optimization_optimal_weights} is a convex one, and we can use the method of 
Lagrange multipliers to find the global minimum. We construct the Lagrangian as follows:
\begin{align*}
    \mathcal{L}(\tilde{h}_1,\ldots,\tilde{h}_t\lambda) 
    = \sum_{t=1}^T \tilde{h}_t^2/g_t 
    + \lambda \cdot \bbracket{\sum_{t=1}^T \tilde{h}_t-1},
\end{align*}
with the constraints $\tilde{h}_t\geq 0$ for $t\in[T]$.
The corresponding KKT condition can be written as:
\begin{align*}
    &\tilde{h}_t\ge 0, \mbox{ for } t\in[T],\\
       &\nabla_{\tilde{h}_t} \mathcal{L} =2\tilde{h}_t/g_t + \lambda = 0,\\
    &\nabla_{\lambda} \mathcal{L} = 1 -\sum_{t=1}^T\tilde{h}_t = 0.
\end{align*}
Solving the above equations, we arrive at the solution $\tilde{h}_t^*=g_t/(\sum_{s=1}^T g_s)$ for all $t\in[T]$.}

\subsection{Comparing with  \texorpdfstring{\cite{bibaut2021risk}}{Bibaut et al. (2021)}}
\label{appendix:compare_with_erm}
\textcolor{black}{We hereby provide more detailed comparison with \cite{bibaut2021risk}.  Their approach is a  variant of our algorithm, where they use uniform weights with $h_t=1$ and  IPW estimator with $\widehat{\mu}_t=0$ to estimate policy values. In general, uniform weighting with $h_t=1$ does not achieve minimax regret guarantee.   When  weights $\{h_t\}$ are fixed, different choices of nuisance component $\widehat{\mu}_t$---whether setting it to zero (IPW estimator) as in \cite{bibaut2021risk} or not as in our proposal---would not affect  regret rate. Yet this nuisance component choice  can largely influence the variance of policy value estimation,  which in turn affects the value of learned policy. In particular, suppose that  $\hat{\mu}_t \xrightarrow{a.s.}\mu_{\infty}$ for some $\mu_{\infty}(\cdot)$, our algorithm achieves a  regret bound of $\tilde{O}\big(\kappa(\Pi)\sqrt{M^2+\sup_{x,w}\|\mu_\infty(x,w)-\mu(x,w)\|^2}\cdot \frac{\sqrt{K\sum_{t=1}^T h_t^2/g_t}}{\sum_{t=1}^T h_t} +\frac{\sum^T_{t=1}h_t^4/g_t^3}{(\sum^T_{t=1}h_t^2/g_t)^2}\big)$. Thus a good estimation of $\mu$ would reduce the regret bound up to a constant factor. One possible fix to using  IPW estimator would be de-meaning the outcomes beforehand, but the effect of this data-preprocessing approach may be weakened  when there exists a fair amount of variation in outcomes across contexts and actions.}

\textcolor{black}{To further see the benefits of using nuisance components in the AIPW estimator, we compare with \cite{bibaut2021risk} empirically. 
Consider the same  data generating process  and the same policy class we provide in Section \ref{section:synthetic}. Fixing weights $h_t=1$, we compare the policies learned with the IPW estimator \citep{bibaut2021risk}, the AIPW estimator with correctly specified $\widehat{\mu}_t$ (which has  the same quadratic functional form in context as the underlying outcome model), and the AIPW estimator with mis-specified $\widehat{\mu}_t$ (which wrongly assumes the underlying outcome model to be linear).   Figure \ref{fig:compare_with_erm} shows that, as compared to using IPW estimator, policies learned with both AIPW estimators---even the one with mis-specified nuisance component---effectively  reduce the estimation variance and improve the value of learned policies noticeably.}

\begin{figure}
    \centering
    \includegraphics[width=.7\textwidth]{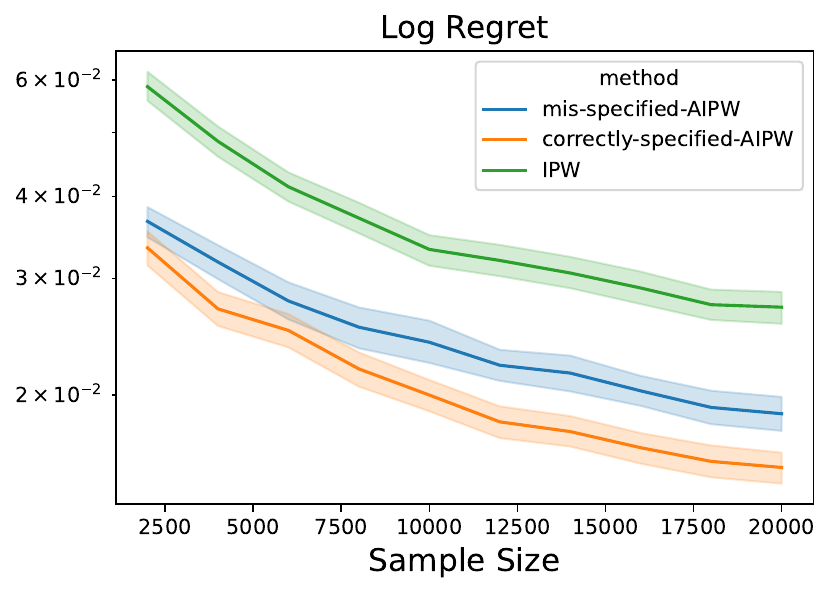}
    \caption{\textcolor{black}{Comparing with \cite{bibaut2021risk}. Policies learned with AIPW estimators  used in our algorithm  effectively reduce the variance and improve the value of learned policy.}}
    \label{fig:compare_with_erm}
\end{figure}

\subsection{Multi-class Classification Datasets}
\label{section:dataset}
The list of OpenML datasets used in Section \ref{section:classification} is
\begin{verbatim}
    'GAMETES_Epistasis_2-Way_20atts_0_4H_EDM-1_1',
       'BNG_sick_nominal_1000000', 'waveform-5000', 'BNG_credit-a_',
       'Long', 'banknote-authentication', 'page-blocks', 'BNG_mushroom_',
       'BNG(trains)', 'cmc', 'artificial-characters',
       'Click_prediction_small', 'BNG_lymph_10000_10_', 'BNG_kr-vs-kp_',
       'skin-segmentation', 'BNG_lymph_nominal_1000000_', 'allrep',
       'mfeat-morphological', 'satellite_image', 'BNG_cmc_nominal_55296_',
       'BNG_lymph_', 'jungle_chess_2pcs_endgame_elephant_elephant',
       'BNG_kr-vs-kp_5000_10_', 'BNG_vehicle_nominal_1000000_', 'wilt',
       'BNG_credit-a_nominal_1000000_', 'BNG_labor_nominal_1000000_',
       'eye_movements', 'Satellite', 'ringnorm', 'mammography',
       'delta_ailerons', 'SEA_50_', 'BNG_credit-g_nominal_1000000_',
       'PhishingWebsites', 'BNG_breast-cancer_nominal_1000000_', 'splice',
       'pendigits', 'volcanoes-a1', 'texture',
       'BNG_hepatitis_nominal_1000000_', 'SEA_50000_', 'BNG(ionosphere)',
       'BNG_breast-w_', 'BNG_colic_ORIG_nominal_1000000_',
       'cardiotocography', 'volcanoes-d4', 'BNG_labor_', 'volcanoes-b3',
       'dis', 'BNG_heart-statlog_', 'spambase', 'BNG_credit-g_',
       'jungle_chess_2pcs_endgame_panther_lion', 'ozone_level',
       'optdigits', 'BNG_cylinder-bands_nominal_1000000_', 'electricity',
       'BNG_tic-tac-toe_', 'kr-vs-kp', 'bank-marketing', 'satimage',
       'BNG(heart-statlog_nominal,1000000)', 'MagicTelescope',
       'COMET_MC_SAMPLE', 'BNG_colic_nominal_1000000_', 'volcanoes-a2',
       'yeast',
       'GAMETES_Heterogeneity_20atts_1600_Het_0_4_0_2_75_EDM-2_001',
       'BNG_cmc_', 'segment', 'BNG_vote_',
       'BNG_waveform-5000_nominal_1000000_', 'BNG_lymph_1000_10_',
       'BNG_hypothyroid_nominal_1000000_', 'mfeat-zernike', 'coil2000',
       'houses', 'eeg-eye-state', 'BNG_kr-vs-kp_5000_1_', 'car', 'pol'
\end{verbatim}

\end{document}